\documentclass[10pt,twocolumn,letterpaper]{article}

\usepackage{cvpr}              
\usepackage{multirow}
\usepackage{graphicx}
\usepackage[table]{xcolor}

\newcommand{\model}[1]{\texttt{#1}}

\usepackage{xcolor}

\usepackage[normalem]{ulem}
\usepackage{wrapfig}
\usepackage{adjustbox}
\usepackage[table]{xcolor}

\newcommand{\tablestyle}[2]{\setlength{\tabcolsep}{#1}\renewcommand{\arraystretch}{#2}\centering\footnotesize}

\usepackage{makecell}
\usepackage{algorithm}
\usepackage{algpseudocode}
\definecolor{cvprblue}{rgb}{0.21,0.49,0.74}
\usepackage[pagebackref,breaklinks,colorlinks,allcolors=cvprblue]{hyperref}

\def\paperID{xxxxxx} 
\def\confName{CVPR}
\def\confYear{2026}

\title{DC-Merge: Improving Model Merging with Directional Consistency}
\author{Han-Chen Zhang\thanks{Equal contribution,  \ $^\dag$Corresponding author} \quad
Zi-Hao Zhou$^{*}$ \quad
Mao-Lin Luo \quad
Shimin Di \quad\\
Min-Ling Zhang \quad
Tong Wei$^{\dag}$ \and
{
\textsuperscript{1}School of Computer Science and Engineering, Southeast University, Nanjing 210096, China\hfill} \\
{\textsuperscript{2}Key Laboratory of Computer Network and Information Integration (Southeast University)}\\
{\texttt{\small \{hanchenzh, zhouzih, weit\}@seu.edu.cn}}\\
}

\begin{document}
\maketitle
\begin{abstract}
Model merging aims to integrate multiple task-adapted models into a unified model that preserves the knowledge of each task.
In this paper, we identify that the key to this knowledge retention lies in maintaining the directional consistency of singular spaces between merged multi-task vector and individual task vectors.
However, this consistency is frequently compromised by two issues: i) an imbalanced energy distribution within task vectors, where a small fraction of singular values dominate the total energy, leading to the neglect of semantically important but weaker components upon merging, and ii) the geometric inconsistency of task vectors in parameter space, which causes direct merging to distort their underlying directional geometry.
To address these challenges, we propose DC-Merge, a method for directional-consistent model merging. It first balances the energy distribution of each task vector by smoothing its singular values, ensuring all knowledge components are adequately represented.
These energy-balanced vectors are then projected onto a shared orthogonal subspace to align their directional geometries with minimal reconstruction error.
Finally, the aligned vectors are aggregated in the shared orthogonal subspace and projected back to the original parameter space. 
Extensive experiments on vision and vision-language benchmarks show that DC-Merge consistently achieves state-of-the-art performance in both full fine-tuning and LoRA settings. The implementation code is available at \href{https://github.com/Tobeginwith/DC-Merge}{https://github.com/Tobeginwith/DC-Merge}.
\end{abstract}

\section{Introduction}

Pre-trained models are the foundation of modern machine learning systems~\cite{carion2020endtoend, radford2021learning, DINO, zhai2023sigmoid}. In practice, they are typically fine-tuned for specialization in specific tasks~\cite{wortsman2022robust, ilharco2022patching, fan2023federated,fan2024locally}. A growing body of research has focused on \emph{model merging}~\cite{li2023deepmodelfusionsurvey}, which integrates multiple task-adapted models into a unified model while preserving each task’s capability. Many methods have been proposed to improve the effectiveness of model merging by reducing sign conflicts~\cite{yadav2023tiesmerging}, by aligning gradients~\cite{DaheimMPGK24}, or through magnitude-based selection~\cite{MarczakTTC24, davari2023model, jin2023dataless}. 
Despite its potential to enable efficient multi-task adaptation without retraining, existing approaches often suffer performance degradation after merging~\cite{li2023deepmodelfusionsurvey}, especially when tasks originate from heterogeneous domains~\cite{DaheimMPGK24}. Recent studies aim to reduce the interference among different tasks~\cite{tsv, cheng2025whoever,luo2026keeplora}, while the underlying mechanism of how task-specific capabilities are preserved after merging remains underexplored. A central question thus arises: \emph{what property must be preserved to retain each task’s ability after merging?}


Following Task Arithmetic (TA)~\cite{ilharco2023task}, we define a task vector as the parameter difference between a fine-tuned model and its pre-trained counterpart.    
Each task vector can be decomposed via singular value decomposition (SVD) into a set of orthogonal \emph{knowledge vectors}, each representing a distinct adaptation direction weighted by its singular value. We term each of these directions as a \emph{knowledge component} and the corresponding singular values reflect the energy distributed across these components.
We observe that task performance after model merging primarily depends on the \emph{directional consistency} between the merged and original task vectors.
Specifically, as long as the directions of the knowledge components are preserved, the merged model retains most task capabilities, even if their energy distribution changes.  
In contrast, slight directional deviations significantly degrade performance, indicating that maintaining directional consistency of knowledge components is crucial to maintaining task performance.

To quantify this consistency, we propose \emph{directional similarity} (\(\mathrm{DirSim}\)), which measures the consistency of directional geometry between two task vectors while discounting the influence of energy distribution.
Unlike cosine similarity, which emphasizes consistency of high-energy components, \(\mathrm{DirSim}\) also accounts for the directional consistency of weaker yet semantically informative components.  
Empirically, \(\mathrm{DirSim}\) shows a strongly positive correlation with post-merge task-wise performance, validating it as a reliable indicator of knowledge preservation.

Despite its importance, directional consistency is often violated by two fundamental issues.  
First, the energy distribution of task vectors is imbalanced, where a few singular values capture most of the energy (as shown in Figure \ref{fig_longtailed}), causing the model to overemphasize on high-energy directions and thereby hindering generalization and directional geometry preservation.
Second, directly merging task vectors in the original parameter space leads to basis misalignment: different tasks span heterogeneous low-rank subspaces whose orientations are not geometrically aligned.
Consequently, the merged task vector fails to preserve the directional geometry of each task vector that characterizes the task’s knowledge.
To address these challenges, we propose a new method called \emph{DC-Merge}, which explicitly enforces directional consistency between the merged multi-task vector and each original task vector.  
DC-Merge consists of two complementary modules:  
i) \emph{energy smoothing} redistributes the singular values of each task vector to balance the energy distribution of its knowledge components, thereby preventing the merging process from overlooking weaker but semantically rich directions within each task vector.
ii) \emph{cover space merging} then projects all smoothed task vectors into a shared orthogonal subspace before aggregation, ensuring that merging occurs under a consistent cover basis without cross-task directional interference.
Together, these modules preserve the task directional geometry during merging, enabling stable multi-task compatibility and strong generalization. Extensive experiments on both full fine-tuning (FFT) and LoRA~\cite{hu2022lora} setups show that DC-Merge achieves state-of-the-art results on both vision and vision-language benchmarks while maintaining high directional consistency with original task vectors.

In summary, our key contributions are as follows:
\begin{itemize}
    \item We correlate the model merging performance with a novel concept \emph{directional consistency} between the merged multi-task vector and individual task vectors.

    \item We introduce $\mathrm{DirSim}$, a new metric that isolates directional consistency from energy distribution effects. $\mathrm{DirSim}$ shows a strong positive correlation with the performance of merged model.

    \item We propose \emph{DC-Merge}, a method that enhances directional consistency by first balancing energy distribution of task vectors and then merging them within a shared orthogonal subspace.

    \item Extensive experiments on vision and vision-language benchmarks demonstrate that DC-Merge achieves state-of-the-art performance in both FFT and LoRA settings.
\end{itemize}

\newtheorem{proposition}{Proposition}
\newtheorem{remark}{Remark}

\section{Directional Consistency Matters}

This section reveals the intrinsic imbalance of energy distribution across knowledge components and presents empirical evidence confirming the importance of directional consistency in model merging. We also introduce a new metric to quantify this consistency. Unless otherwise specified, the experiments in this section are based on a \model{ViT-B-32} visual encoder~\cite{dosovitskiy2021imageworth16x16words} under LoRA configuration.
\subsection{Preliminary}
\noindent
\textbf{Model Merging.}  
Given a pre-trained parameter set $\boldsymbol{W}_0$ and a collection of fine-tuned models $\{\boldsymbol{W}_i\}_{i=1}^T$ obtained from distinct tasks, model merging seeks a merged parameter set $\widetilde{\boldsymbol{W}}$ that approximates the behavior of each $\boldsymbol{W}_i$ on its corresponding task.  

\noindent
\textbf{Task Vectors.}  
For a FFT model of the $i$-th task, the task vector is defined as
$\Delta \boldsymbol{W}_i = \boldsymbol{W}_i - \boldsymbol{W}_0$,
which captures the direction and magnitude of adaptation in the weight space~\cite{ilharco2023task}.  
In the LoRA paradigm, the task-specific update is parameterized explicitly as
$\Delta \boldsymbol{W}_i^{\mathrm{LoRA}} = \boldsymbol{B}_i \boldsymbol{A}_i$,
where $\boldsymbol{A}_i \in \mathbb{R}^{r\times d}$ and $\boldsymbol{B}_i \in \mathbb{R}^{d\times r}$.  
Thus, LoRA directly produces a compact and structured low-rank task vector.

\noindent
\textbf{Unified View and Low-Rank Merging.}  
Although FFT and LoRA differ in parameterization, they are inherently connected under a unified low-rank formulation. Empirically, FFT updates $\Delta \boldsymbol{W}_i$ tend to reside in a low-dimensional subspace~\cite{hu2022lora} and can be well approximated by a truncated SVD as
$
\Delta \boldsymbol{W}_i \approx \boldsymbol{U}_i \boldsymbol{\Sigma}_i \boldsymbol{V}_i^\top.
$
From this perspective, LoRA explicitly constrains the parameter updates to a low-rank subspace, whereas FFT implicitly exhibits a similar low-rank structure that can be revealed through singular value analysis and compression. This insight bridges the two approaches under a unified low-rank adaptation paradigm.

Building on this unified view, model merging can be viewed as an operation on these low-rank matrices.  
The merging process thus involves:  
i) extracting low-rank representations for each task vector,  
ii) merging these low-rank matrices to effectively integrate multi-task knowledge, and 
iii) constructing the final model by combining the pre-trained weights and merged multi-task vector through
$\widetilde{\boldsymbol{W}} = \boldsymbol{W}_0 + \Delta \widetilde{\boldsymbol{W}}$.

\subsection{Balanced Energy Enhances Generalization}
\label{sec:Motivation}

\begin{figure}[!ht]
    \centering
    \begin{subfigure}[t]{0.23\textwidth}
        \centering
        \includegraphics[width=\textwidth]{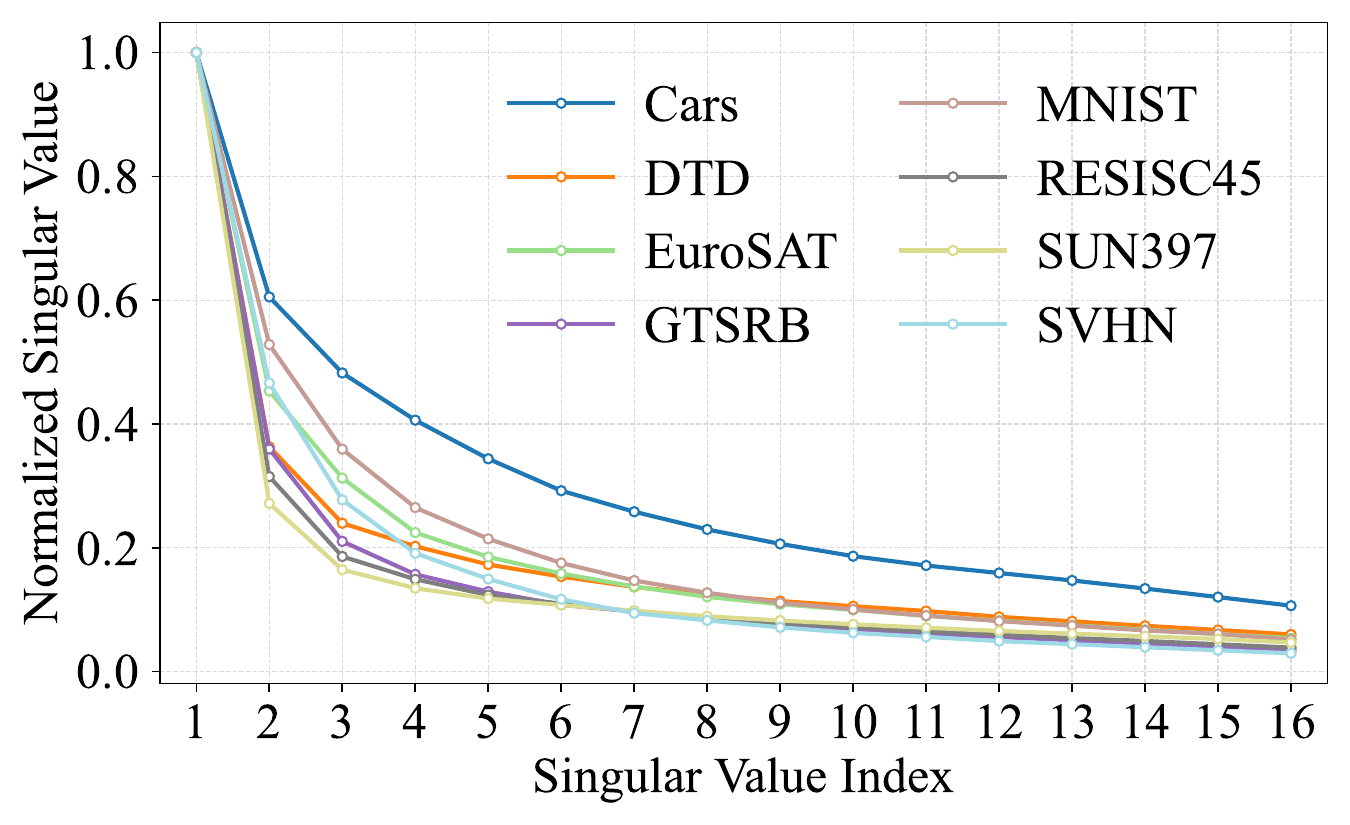}
        \caption{LoRA}
        \label{fig_longtailed_lora}
    \end{subfigure}
    \hfill
    \begin{subfigure}[t]{0.23\textwidth}
        \centering
        \includegraphics[width=\textwidth]{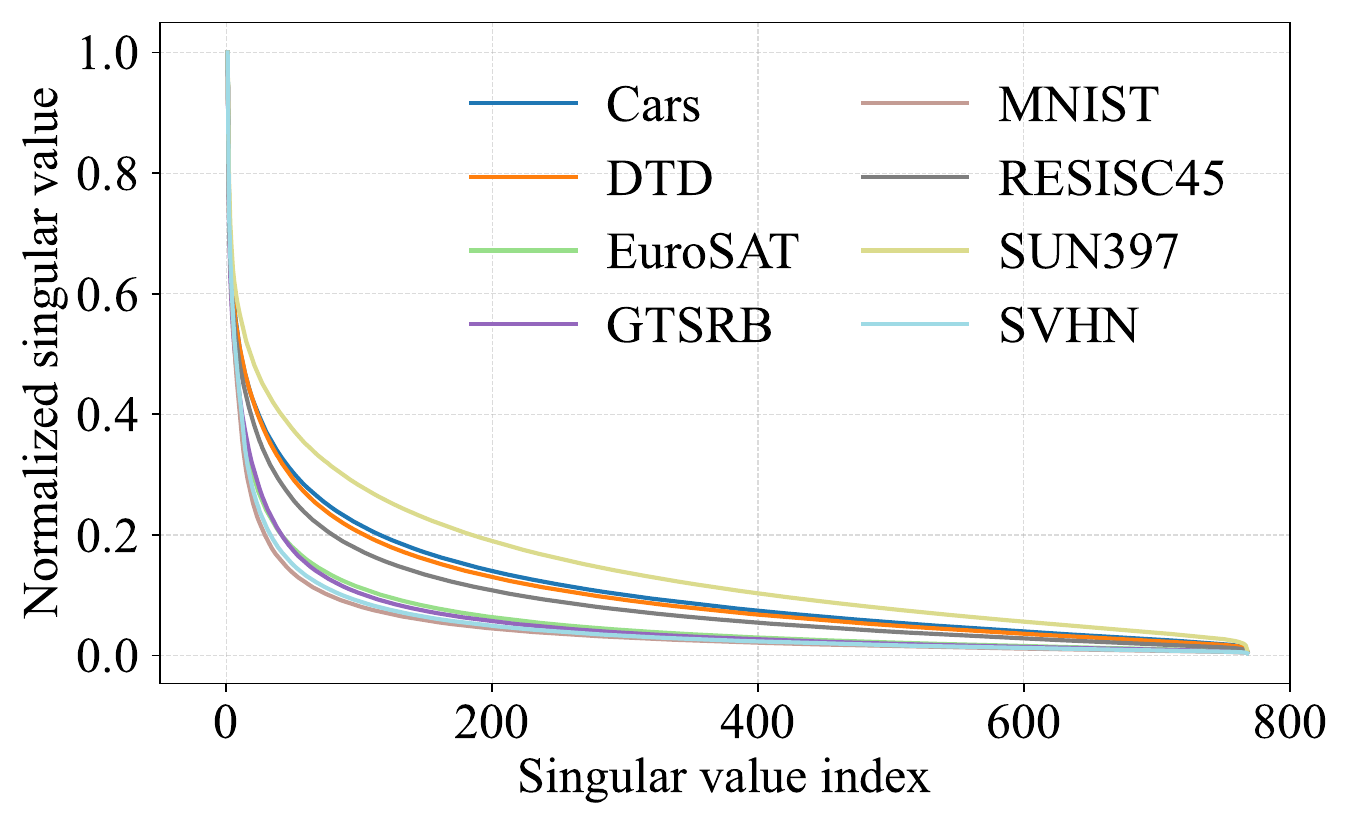}
        \caption{FFT}
        \label{fig_longtailed_fft}
    \end{subfigure}
    \caption{The singular value distribution of task vectors averaged across all layers. We normalize each singular value by the largest one within each dataset to eliminate the magnitude discrepancy among different datasets.}
    \label{fig_longtailed}
    \vspace{-0.5cm}
\end{figure}

\begin{figure*}[t]
    \centering
    \begin{subfigure}[!t]{0.52\textwidth}
        \centering
        \includegraphics[width=\textwidth]{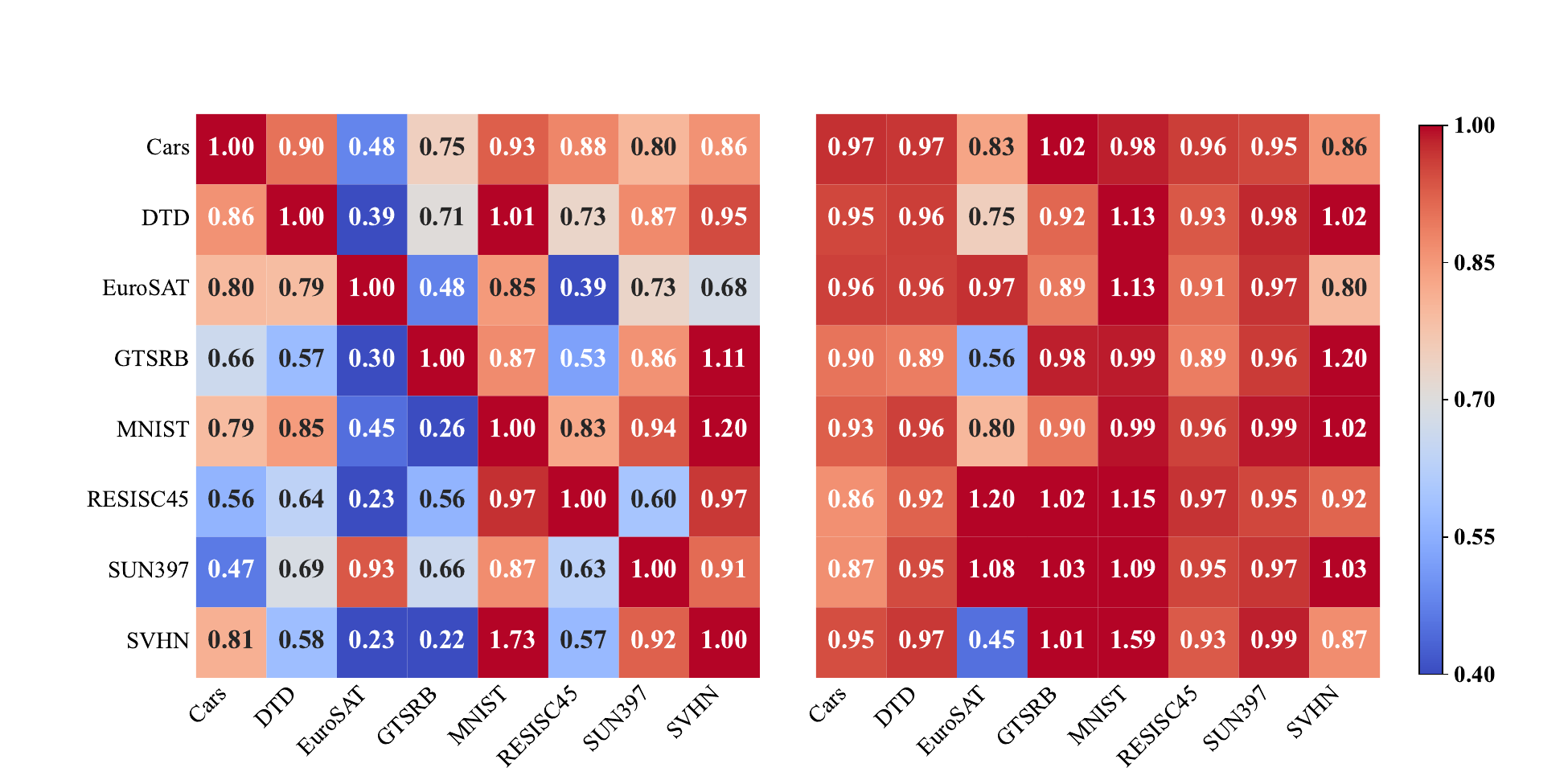}
        \caption{
        }
        \label{fig_transfer}
    \end{subfigure}
    \hfill
    \begin{subfigure}[!t]{0.44\textwidth}
        \centering
        \includegraphics[width=\textwidth]{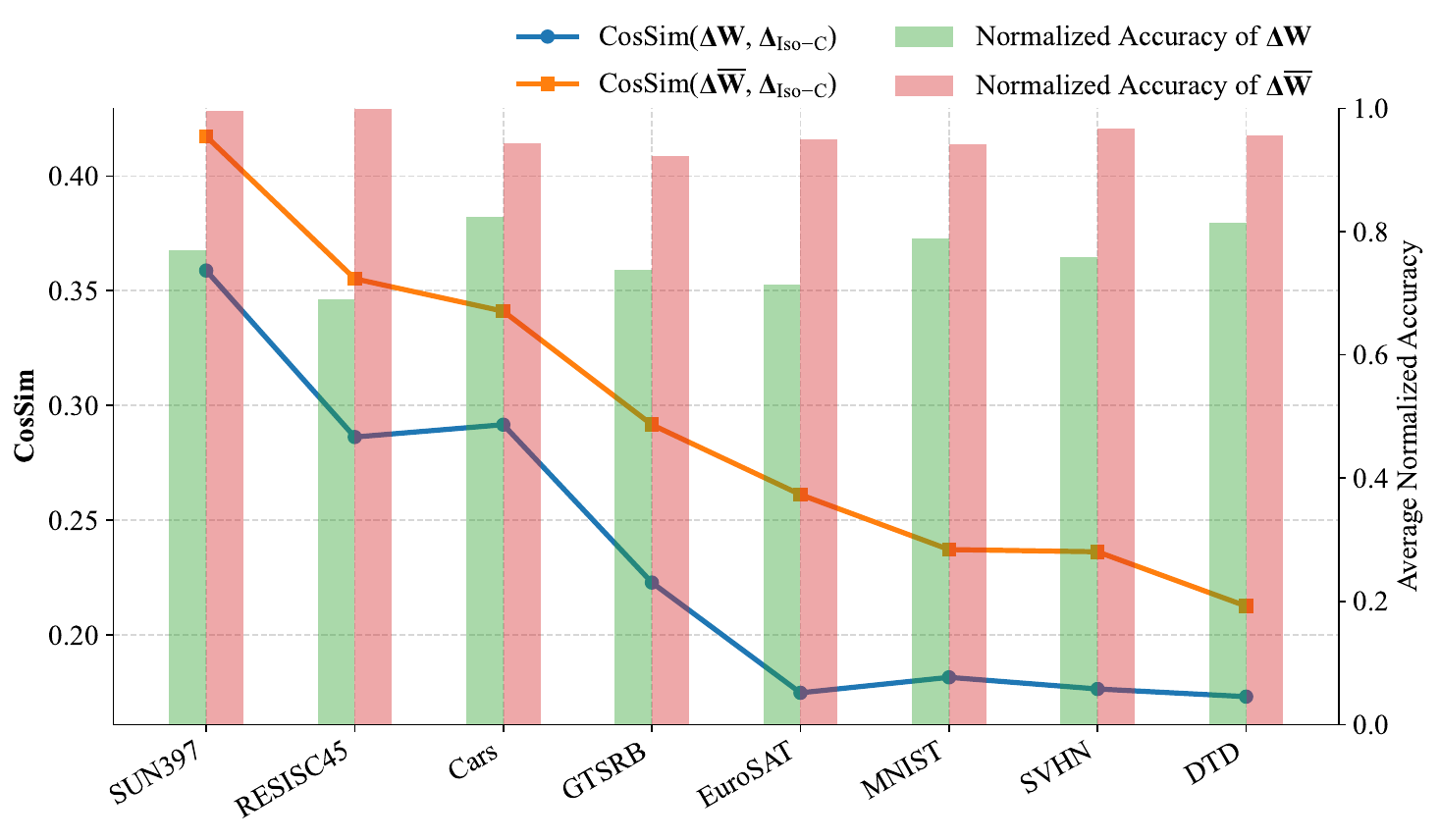}
        \caption{
        }
        \label{bars_fig}
    \end{subfigure}
    \vspace{-0.2cm}
    \caption{
    \textbf{(a)} Comparison of cross-task transferability before (left) and after (right) balancing the energy distribution. The diagonal elements represent the relative performance of each task with respect to its fine-tuned model, while the off-diagonal elements indicate the relative transfer performance on other tasks, normalized against their zero-shot baseline. 
    \textbf{(b)} Each task vector's cosine similarity with multi-task vector (lines) and average normalized transfer accuracy of each task vector (bars). We compare original task vectors and their energy-balanced counterparts. The energy-balanced task vectors achieve higher $\mathrm{CosSim}$ with the multi-task vector (we use $\boldsymbol{\Delta}_{\text{Iso-C}}$~\cite{marczak2025notaskleftbehind} for simplicity) and better cross-task generalization, indicating that balancing the energy distribution across knowledge components enhances multi-task expressiveness.
    }
    \label{fig_combined}
    \vspace{-0.3cm}
\end{figure*}

\noindent
\textbf{A New Perspective on Task Vectors.}  
We first provide a new interpretation of task vectors from the viewpoint of their intrinsic low-rank structure.  
Given a task vector $\Delta \boldsymbol{W}$ with rank $r\ll d$, we can decompose it using SVD:
\begin{equation}
\Delta \boldsymbol{W} = \sum_{i=1}^{r} \sigma^i \boldsymbol{u}^i \boldsymbol{v}^{i \top},
\end{equation}
where $\{\boldsymbol{u}^i\}_{i=1}^{r}$ and $\{\boldsymbol{v}^i\}_{i=1}^{r}$ are the left and right singular vectors, respectively, and $\{\sigma_i\}_{i=1}^{r}$ are the corresponding singular values. 
From this decomposition, each term $\sigma^i \boldsymbol{u}^i \boldsymbol{v}^{i\top}$ can be regarded as a specific \emph{knowledge vector} for task adaptation, and the adaptation direction $\boldsymbol{u}^i \boldsymbol{v}^{i\top}$ is defined as a \emph{knowledge component}. The singular value $\sigma^i$ quantifies the degree to which the corresponding knowledge component is utilized, thus the distribution of $\boldsymbol{\sigma}=(\sigma^1,\dots,\sigma^r)$ can be interpreted as the \emph{energy distribution} across knowledge components.
%

\noindent
\textbf{Observation: Singular Values Follow a Long-Tailed Distribution.}  
An important empirical observation arises when analyzing the low-rank task vectors obtained from practical training. As shown in Figure~\ref{fig_longtailed}, the decomposed knowledge vectors $\{\sigma_i \boldsymbol{u}_i \boldsymbol{v}_i^{\top}\}_{i=1}^{r}$ usually follow a long-tailed energy distribution, where a small fraction of knowledge components dominate the total energy, indicating that the knowledge captured by the task vector is inherently imbalanced. This intrinsic imbalance leads to a potential drawback in model behavior: as a small number of knowledge components dominate the task adaptation, the model tends to overfit to specific patterns while neglecting weaker but semantically important components. 

To further illustrate this phenomenon, Figure~\ref{fig_transfer} presents the transfer performance in eight tasks, contrasting the results before and after balancing the internal knowledge of each task vector.
The \emph{diagonal elements} of each heatmap represent the degree to which the original task’s capability is preserved, while the \emph{off-diagonal elements} measure the zero-shot transferability to other tasks, reflecting how well the knowledge generalizes beyond its training domain. As shown in Figure~\ref{fig_transfer} (left), directly using the original low-rank task vector obtained from fine-tuning causes severe performance degradation on unrelated tasks, with many off-diagonal entries significantly suppressed, implying that over-concentrated energy distribution harms cross-task generalization. In contrast, Figure~\ref{fig_transfer} (right) displays the results of \textit{energy-balanced task vectors} constructed by a simple averaging strategy:
\begin{equation}
\label{eq:avg}
\Delta \overline{\boldsymbol{W}} = \left(\frac{\sum_{i=1}^{r} \sigma^{i}}{r}\right) \left(\sum_{i=1}^{r} \boldsymbol{u}^{i} \boldsymbol{v}^{i\top}\right).
\end{equation}
The diagonal elements are quite close to $1.0$, indicating that the vast majority of task capability is preserved, and the off-diagonal elements increase notably, suggesting improved zero-shot transfer and multi-task compatibility.

\noindent
\textbf{Revisiting Task Vector Similarity from a Knowledge Decomposition Perspective.}
To further analyze the underlying reason behind better generalization capabilities after balancing the energy distribution, we revisit the cosine similarity between task vectors~\cite{ilharco2023task} from the perspective of knowledge decomposition.  
We argue that this metric can be interpreted as the \emph{expressive capacity} of one task vector to represent another, i.e., how well the knowledge of task $t$ can be linearly reconstructed by task $s$.
\vspace{-0.2cm} 
\begin{proposition}
\label{pro:cossim}
Given the knowledge vector decompositions of two task vectors $\Delta \boldsymbol{W}_{s}$ and $\Delta \boldsymbol{W}_{t}$, their cosine similarity can be equivalently expressed as
\begin{equation}
\begin{aligned}
\mathrm{CosSim}(\Delta \boldsymbol{W}_{s}, \Delta \boldsymbol{W}_{t}) 
&= \frac{\langle \Delta \boldsymbol{W}_{s}, \Delta \boldsymbol{W}_{t}\rangle}{\|\Delta \boldsymbol{W}_{s}\|_{F}\,\|\Delta \boldsymbol{W}_{t}\|_{F}} \\
&= \frac{ {\boldsymbol{\sigma}_{s}} \; {\boldsymbol{R}(s,t)} \; \big({\boldsymbol{\sigma}_{t}}\big)^{\top} }{\|{\boldsymbol{\sigma}_{s}}\|_{2}\,\|{\boldsymbol{\sigma}_{t}}\|_{2}},
\end{aligned}
\end{equation}
where ${\boldsymbol{R}(s,t)}\in\mathbb{R}^{n\times m}$ is defined entry-wise as:
\begin{equation}
{\boldsymbol{R}}_{i,j}(s,t)= \big(\boldsymbol{u}^i_{s}\big)^{\top}\boldsymbol{u}^j_{t}\;\big(\boldsymbol{v}^j_{t}\big)^{\top}\boldsymbol{v}^i_{s}.
\end{equation}
\end{proposition}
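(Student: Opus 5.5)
We expand both task vectors in their rank-one knowledge-vector decompositions and use bilinearity of the Frobenius inner product. Write
\[
\Delta \boldsymbol{W}_s=\sum_{i=1}^{n}\sigma_s^i\,\boldsymbol{u}_s^i\boldsymbol{v}_s^{i\top},\qquad
\Delta \boldsymbol{W}_t=\sum_{j=1}^{m}\sigma_t^j\,\boldsymbol{u}_t^j\boldsymbol{v}_t^{j\top},
\]
with $n$ and $m$ the respective ranks. Then
\[
\langle \Delta \boldsymbol{W}_s,\Delta \boldsymbol{W}_t\rangle=\sum_{i=1}^{n}\sum_{j=1}^{m}\sigma_s^i\sigma_t^j\,\langle \boldsymbol{u}_s^i\boldsymbol{v}_s^{i\top},\boldsymbol{u}_t^j\boldsymbol{v}_t^{j\top}\rangle .
\]
The whole numerator therefore reduces to evaluating the Frobenius inner product of two rank-one matrices.

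For that step we use $\langle \boldsymbol{A},\boldsymbol{B}\rangle=\operatorname{tr}(\boldsymbol{A}^\top\boldsymbol{B})$ and the cyclic property of the trace:
\[
\operatorname{tr}\big(\boldsymbol{v}_s^i\boldsymbol{u}_s^{i\top}\boldsymbol{u}_t^j\boldsymbol{v}_t^{j\top}\big)=(\boldsymbol{u}_s^{i\top}\boldsymbol{u}_t^j)\,(\boldsymbol{v}_t^{j\top}\boldsymbol{v}_s^i).
\]
This is exactly $\boldsymbol{R}_{i,j}(s,t)$. The double sum is then the bilinear form $\boldsymbol{\sigma}_s\,\boldsymbol{R}(s,t)\,\boldsymbol{\sigma}_t^\top$, where $\boldsymbol{\sigma}_s\in\mathbb{R}^{1\times n}$ and $\boldsymbol{\sigma}_t\in\mathbb{R}^{1\times m}$ are row vectors, matching $\boldsymbol{R}(s,t)\in\mathbb{R}^{n\times m}$.

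For the denominator we apply the same computation with $s=t$. By orthonormality of the singular vectors within one SVD, $\boldsymbol{u}_s^{i\top}\boldsymbol{u}_s^{j}=\delta_{ij}$ and likewise for the $\boldsymbol{v}$'s, so $\boldsymbol{R}(s,s)=\boldsymbol{I}$. Hence
\[
\|\Delta \boldsymbol{W}_s\|_F^2=\sum_i(\sigma_s^i)^2=\|\boldsymbol{\sigma}_s\|_2^2,
\]
which is the standard Frobenius–singular-value identity, and similarly $\|\Delta \boldsymbol{W}_t\|_F=\|\boldsymbol{\sigma}_t\|_2$. Dividing the numerator by these two norms gives the stated formula.

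The only real obstacle is bookkeeping rather than mathematics. We must keep the orientation of $\boldsymbol{R}$ consistent: rows are indexed by the components of $s$ and columns by those of $t$, with the $\boldsymbol{v}$ factor written as $\boldsymbol{v}_t^{j\top}\boldsymbol{v}_s^i$. We must also state clearly that orthonormality is used only within each task's own decomposition. Across tasks no orthogonality is assumed, and that cross-task structure is precisely what $\boldsymbol{R}(s,t)$ encodes. If the truncated SVD of an FFT update is used, the identity holds exactly for the truncated matrices, and that should be noted.
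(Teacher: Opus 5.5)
Your proposal is correct and follows the paper's proof essentially step for step. Both arguments expand the Frobenius inner product bilinearly, identify $\boldsymbol{R}_{i,j}(s,t)$ through the rank-one identity $\langle \boldsymbol{a}\boldsymbol{b}^{\top},\boldsymbol{c}\boldsymbol{d}^{\top}\rangle=(\boldsymbol{a}^{\top}\boldsymbol{c})(\boldsymbol{b}^{\top}\boldsymbol{d})$, and use $\|\Delta\boldsymbol{W}\|_F=\|\boldsymbol{\sigma}\|_2$ for the denominator; the only differences are cosmetic, since the paper quotes these two facts while you derive the first via the cyclic trace and the second as the special case $\boldsymbol{R}(s,s)=\boldsymbol{I}$.
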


\noindent
\textbf{Remark.} The matrix $\boldsymbol{R}(s,t)$ measures the directional consistency between the two knowledge bases. Each entry ${\boldsymbol{R}}_{i,j}(s,t)
= (\boldsymbol{u}^i_{s})^{\top}\boldsymbol{u}^j_{t} \, (\boldsymbol{v}^j_{t})^{\top}\boldsymbol{v}^i_{s}$ quantifies how the $j$-th knowledge component of task $t$ can be projected onto the $i$-th knowledge component of task $s$.  
Therefore, $\boldsymbol{R}(s,t)$ can be interpreted as a projection operator that expresses the knowledge geometry of task $t$ in the basis of task $s$. From this perspective, the overall cosine similarity $\mathrm{CosSim}(\Delta \boldsymbol{W}_{s}, \Delta \boldsymbol{W}_{t})
= \frac{\boldsymbol{\sigma}_{s} \boldsymbol{R}(s,t) (\boldsymbol{\sigma}_{t})^{\top}}
{\|\boldsymbol{\sigma}_{s}\|_{2}\|\boldsymbol{\sigma}_{t}\|_{2}}$
can be viewed as the weighted aggregation of these projections, reflecting how effectively task $s$ can represent or reconstruct the knowledge of task $t$.  
When the rank $m$ of $\Delta \boldsymbol{W}_{t}$ is higher than the rank $n$ of $\Delta \boldsymbol{W}_{s}$, this interpretation is particularly intuitive: $\Delta \boldsymbol{W}_{s}$ spans a lower-dimensional subspace that attempts to encode the richer knowledge geometry of task $t$.  
Thus, a higher $\mathrm{CosSim}$ indicates stronger expressiveness of task $s$ with respect to task $t$. We leave further analysis of this perspective in Appendix~\ref{sec:cos_projection_expressiveness}.


\noindent
\textbf{Energy-Balanced Knowledge Components Enhance Multi-Task Performance.}
Building upon the cosine similarity analysis above, we can now explain why energy-balanced task vectors achieve stronger multi-task capability.  
When singular values $\boldsymbol{\sigma}$ are highly skewed, the corresponding task vectors collapse onto a few dominant knowledge components, limiting the span of the subspace and thereby reducing its expressive coverage over other tasks.  
In contrast, balancing the energy distribution across knowledge components prevents representational collapse and enlarges the subspace, thereby enhancing its ability to encode multiple tasks.

To verify this, we compute the cosine similarity between the single-task vectors and multi-task vector, and compare it with that of the energy-balanced task vectors obtained by Eq.~\eqref{eq:avg}.
As shown in Figure~\ref{bars_fig}, the balanced vectors consistently exhibit higher cosine similarity with the multi-task model vector. 
Empirically, this aligns with higher average normalized transfer accuracy. 
These results suggest that energy smoothing improves the task vector’s ability to represent multi-task knowledge.


\begin{figure*}[!t]
    \centering
    \begin{subfigure}[!t]{0.40\textwidth}
        \centering
        \includegraphics[width=\textwidth, height=3.7cm, trim={0 0.6cm 0 0.6cm}, clip]{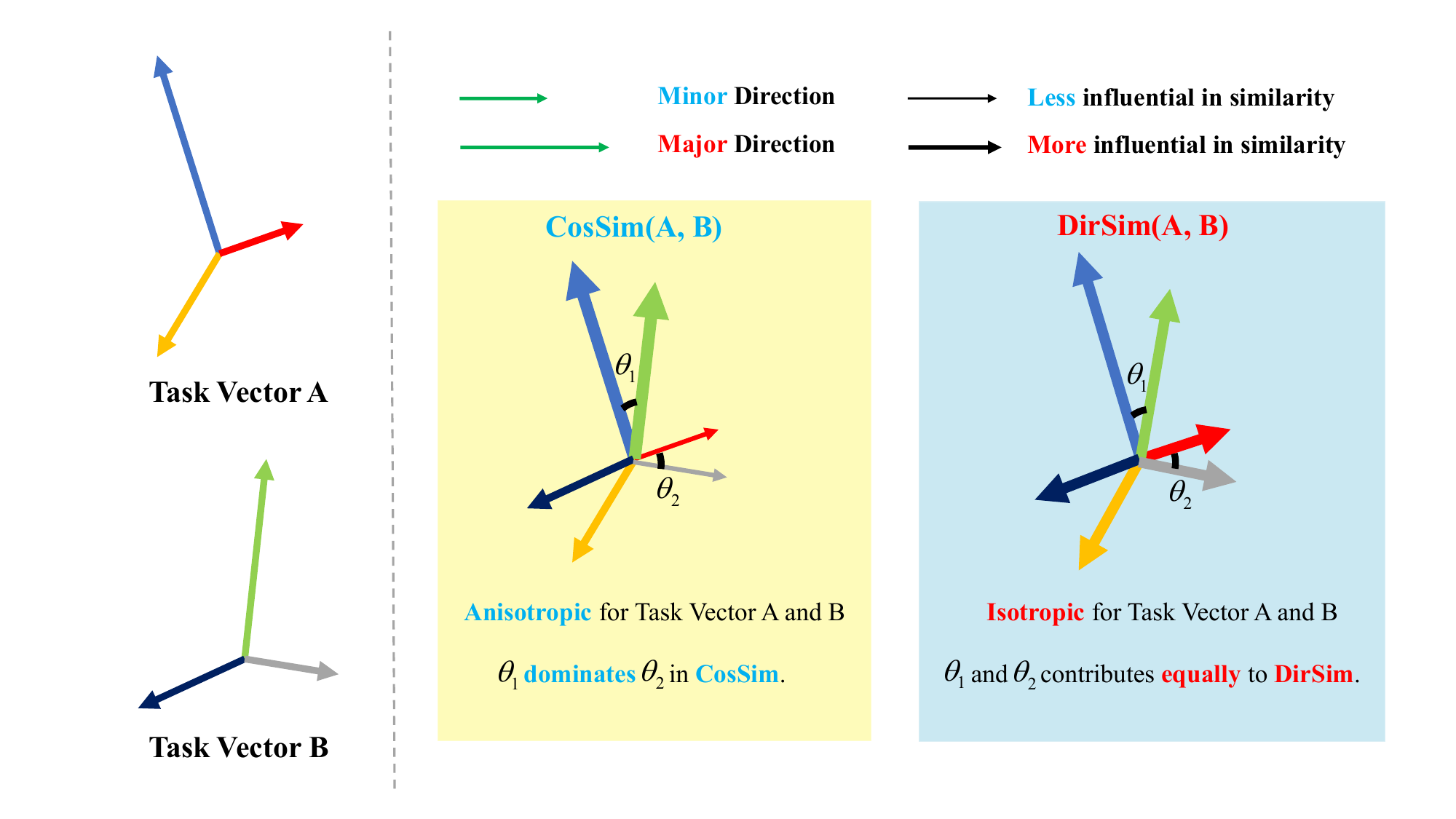}
        \caption{}
        \label{fig_cosSim_DirSim}
    \end{subfigure}
    \hfill
    \begin{subfigure}[!t]{0.29\textwidth}
        \centering
        \includegraphics[width=\textwidth, height=3.75cm]{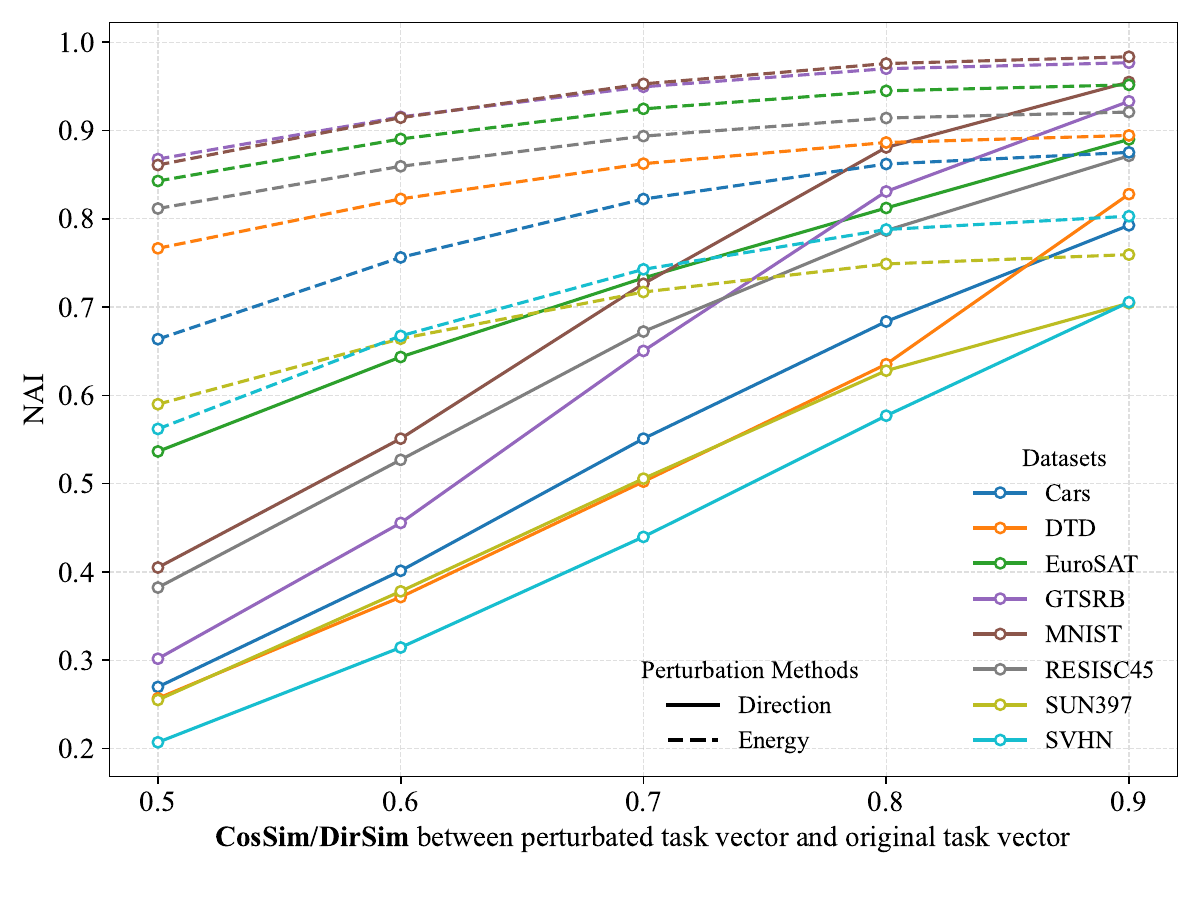}
        \caption{}
        \label{fig_mag_dir}
    \end{subfigure}
    \hfill
    \begin{subfigure}[!t]{0.30\textwidth}
        \centering
        \includegraphics[width=\textwidth, height=3.8cm]{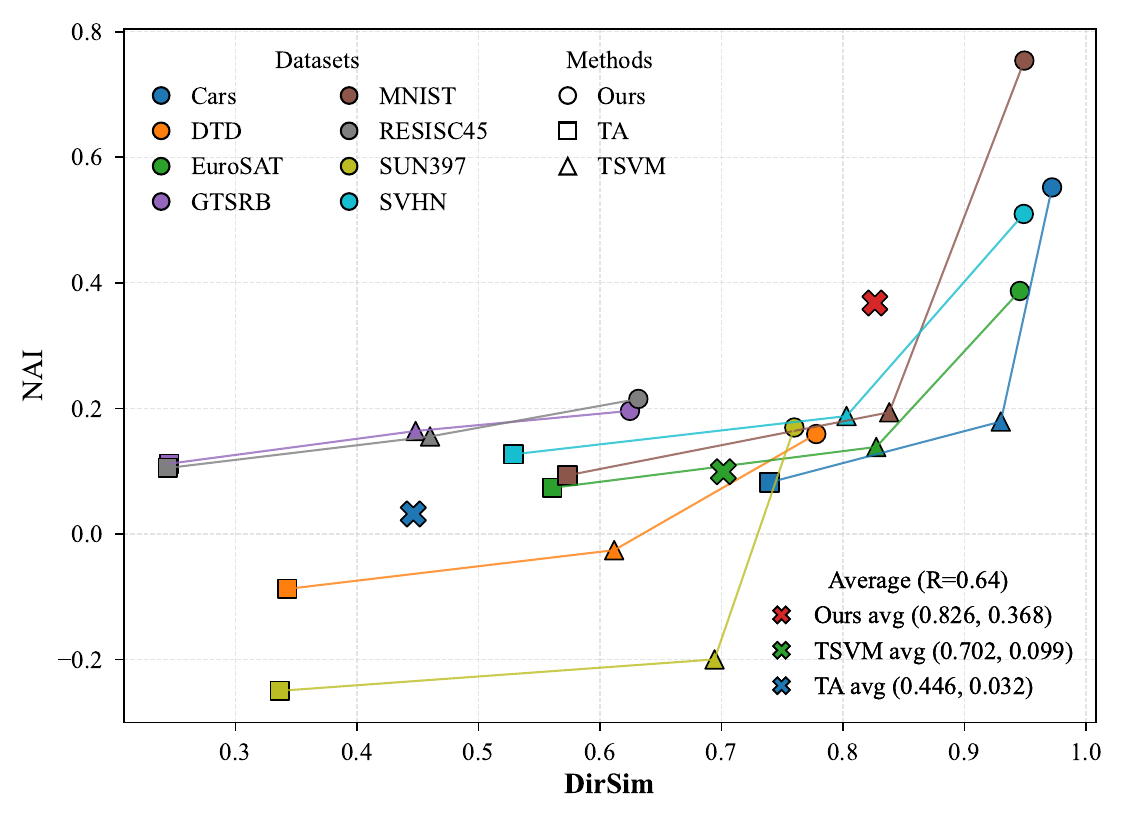}
        \caption{}
        \label{fig_metric_acc}
    \end{subfigure}
    \vspace{-0.25cm}
    \caption{
    \textbf{(a)} Comparison of $\mathrm{CosSim}$ and $\mathrm{DirSim}$. $\mathrm{DirSim}$ considers the similarity between every pair of directions equally, whereas $\mathrm{CosSim}$ mainly focuses on the similarity among the dominant directions while ignoring the minor ones.
    \textbf{(b)} {Empirical validation of the importance of preserving directional geometry.}
    Solid line: task performance vs. \(\mathrm{DirSim}\) under random directional perturbations;
    dashed line: task performance vs. \(\mathrm{CosSim}\) under energy distribution perturbations.
    \textbf{(c)} {Correlation of task-wise performance with projected \(\mathrm{DirSim}\).}
    The overall correlation is positive (Pearson \(R=0.64\)) and per-method averages follow the same trend. Similar patterns persist under larger task scales as illustrated in Figure~\ref{fig_metric_boxplot}.
    We utilize Normalized Accuracy Improvement (NAI)~\cite{marczak2025notaskleftbehind} to measure task-wise performance in \textbf{(b)} and \textbf{(c)}.
    }
    \label{fig_combined_mag_dir_metric}
    \vspace{-0.3cm}
\end{figure*}

\subsection{Measuring Directional Consistency}
\label{sc:dirsim_proposal}
\noindent
\textbf{Directional Knowledge Similarity.}  
We now delve into the underlying mechanism of task knowledge preservation. 
The cosine similarity can be factorized into two components: i) matrix ${\boldsymbol{R}(s,t)}$ quantifies the directional consistency between the two knowledge components, and ii) singular values ${\boldsymbol{\sigma}}_{s}, \boldsymbol{\sigma}_{t}$ encode the importance of these knowledge components. While both terms contribute to the overall similarity, we argue that the preservation of task ability primarily depends on the \emph{directional consistency} rather than on the energy distribution.
Empirically, this observation is supported by the fact that energy-balanced task vectors can largely maintain the performance of the original task vector, which implies that as long as the relative directions of knowledge vectors are preserved, the model can retain most of its learned behavior. 

To isolate this directional factor and quantify it explicitly, we propose a new similarity metric that removes the influence of energy distribution by uniformizing it with $\bar{\boldsymbol{\sigma}}_{s} = \frac{\boldsymbol{1}_{n}}{\sqrt{n}}, 
\bar{\boldsymbol{\sigma}}_{t} = \frac{\boldsymbol{1}_{m}}{\sqrt{m}}$.
Substituting back to the cosine similarity leads to a purely directional consistency measure:
\begin{equation}
\begin{aligned}
\label{eq:dir_sim}
\mathrm{DirSim}(\Delta \boldsymbol{W}_{s}, \Delta \boldsymbol{W}_{t}) 
&\triangleq \bar{\boldsymbol{\sigma}}_{s}\; {\boldsymbol{R}(s,t)}\; \big(\bar{\boldsymbol{\sigma}}_{t}\big)^{\top}
\\ & = \frac{1}{\sqrt{nm}}\,\boldsymbol{1}_{n}^{\top}\,{\boldsymbol{R}(s,t)}\,\boldsymbol{1}_{m}.
\end{aligned}
\end{equation}
$\mathrm{DirSim}$ equally considers the similarity between every pair of directions, whereas $\mathrm{CosSim}$ is subject to the similarity among the dominant directions (Figure~\ref{fig_cosSim_DirSim}).  
A higher value $\mathrm{DirSim}$ implies that the two task vectors share more directionally consistent knowledge components and that one can better represent the other within its knowledge basis.

To verify the above claim, we conduct controlled perturbation experiments and analyze how the retained task performance correlates with both $\mathrm{DirSim}$ and $\mathrm{CosSim}$.  
As illustrated in Figure~\ref{fig_mag_dir}, performance decreases as $\mathrm{DirSim}$ declines, indicating that directional inconsistency leads to substantial performance loss.  
In contrast, when only energy is redistributed while the directions remain aligned, performance remains largely stable despite notable changes in $\mathrm{CosSim}$.  
These results provide strong empirical evidence that preserving knowledge directions is the key to maintaining task ability, while variation of energy distribution has a relatively minor effect. We leave implementation details in Appendix~\ref{sec:fig_imple}.

\noindent
\textbf{Directional Similarity for Post-merge Task-wise Performance.} 
Building upon the previous analysis, we argue that preserving the directions of knowledge components is the key factor for retaining each task's performance during model merging. To quantify this, one might consider computing the directional similarity (\(\mathrm{DirSim}\)) between each task vector \(\Delta \boldsymbol{W}_i\) and the merged task vector \(\Delta \widetilde{\boldsymbol{W}}\). 
However, \(\Delta \widetilde{\boldsymbol{W}}\) aggregates multiple task vectors, introducing directional redundancy that artificially deflates  \(\mathrm{DirSim}\).

To address this, we consider the task-specific activation of the merged vector by projecting it onto the low-rank feature subspace.
Empirical evidence shows that features from a given task $i$ collapse into a low-rank subspace \(\boldsymbol{U}_i\)~\cite{delétang2024languagemodelingcompression, franceschelli2025trainingfoundationmodelsdata}. We thus project the merged multi-task vector onto the low-rank feature subspace to acquire the task-activated part:
\begin{equation}
\label{eq:dirsim_formulation}
\Delta \widetilde{\boldsymbol{W}}_{i} \;=\; \boldsymbol{U}_i \boldsymbol{U}_i^\top \, \Delta \widetilde{\boldsymbol{W}}.
\end{equation}
We then compute \(\mathrm{DirSim}(\Delta \boldsymbol{W}_i, \Delta \widetilde{\boldsymbol{W}}_{i})\) as our metric for task-wise capability retention.


In practice, the exact low-rank feature subspace \(\boldsymbol{U}_i\) is inaccessible and we approximate it using the subspace spanned by the left singular vectors of \(\Delta \boldsymbol{W}_i\). As shown in Figure~\ref{fig_metric_acc}, the projected \(\mathrm{DirSim}\) computed in this manner exhibits a clear monotonic relationship with the normalized accuracy improvement of merged models across different datasets and methods. 
Points with higher projected \(\mathrm{DirSim}\) correspond to better task knowledge retention. 
This confirms that the proposed projected \(\mathrm{DirSim}\) can measure how well task-specific knowledge is preserved during model merging.



\section{The Proposed DC-Merge Approach}

\noindent
The goal of our method is to preserve the complete directional geometry of task vectors during model merging. To this end, we propose two modules, i.e., \textit{energy smoothing} and \textit{cover space merging}. 

\noindent
\textbf{Energy Smoothing for Balanced Knowledge Representations.}  
As discussed in Section~\ref{sec:Motivation},  
the imbalanced energy distribution of knowledge components biases the merging process, potentially ignoring the direction of weaker but semantically rich knowledge components.
To mitigate these issues, we balance the energy distribution of each task vector via energy smoothing before merging.  
For each task vector $\boldsymbol{\Delta}_i$, we consider its knowledge decomposition $    \boldsymbol{\Delta}_i = \sum_{j=1}^{r} \sigma_i^j \boldsymbol{u}_i^j\boldsymbol{v}_i^{j \top}$,
where $\boldsymbol{\sigma}_i = (\sigma_i^1, \dots, \sigma_i^r)$ contains the singular values sorted in descending order.  
Instead of directly using the original energy distribution $\boldsymbol{\sigma}_i$, we replace it with a smoothed version $\overline{\boldsymbol{\sigma}}_i$ to redistribute the energy more evenly across the top-$r$ components, alleviating dominance on a few knowledge components.

For example, we consider the simple yet effective form of smoothing by replacing all top-$r$ singular values with their mean: $\overline{\boldsymbol{\sigma}}_{i} = \left(\frac{1}{r}\sum_{j=1}^{r} \sigma_{i}^{j}\right) \mathbf{1}_{r}$, 
which equalizes the contribution of all retained knowledge components. For completeness, additional smoothing strategies are discussed in Appendix~\ref{sec:smoothing}.
We then perform merging on these energy-balanced task vectors rather than the original ones.

\begin{algorithm}[t]
\caption{DC-Merge}
\label{alg:ioa}
\begin{algorithmic}[1]
\State \textbf{Input:} Task vectors $\{\boldsymbol{\Delta}_i\}_{i=1}^T$ with $\boldsymbol{\Delta}_i \in \mathbb{R}^{m \times n}$
\State \textbf{Output:} Merged multi-task vector $\widetilde{\boldsymbol{\Delta}}$

\State {\color{cvprblue}{$\triangleright$ \textbf{Step 1:} Construct cover space for all task vectors}}
\For{$i = 1 \to T$}
  \State Compute $r$-rank SVD: $\boldsymbol{\Delta}_i \approx \boldsymbol{U}_i^{(r)} \boldsymbol{\Sigma}_i^{(r)} \boldsymbol{V}_i^{(r)\top}$
  \State {Smoothing} $\boldsymbol{\Sigma}_i^{(r)}$ by 
  $\overline{\boldsymbol{\Sigma}}_i^{(r)} = \mathrm{diag}([\overline{\sigma}_i^1;\overline{\sigma}_i^2;\dots;\overline{\sigma}_i^r])$
  \State {Reconstruct} $\overline{\boldsymbol{\Delta}}_i = 
  \boldsymbol{U}_i^{(r)} \overline{\boldsymbol{\Sigma}}_i^{(r)} \boldsymbol{V}_i^{(r)\top}$
\EndFor
\State Obtain concatenated basis $\boldsymbol{U}, \boldsymbol{V}$ by Eq.~\eqref{eq:concat}
\State {Whitening} $\boldsymbol{U}$ and $\boldsymbol{V}$ respectively to obtain $\widetilde{\boldsymbol{U}}$ and $\widetilde{\boldsymbol{V}}$
\State {\color{cvprblue}{ $\triangleright$ \textbf{Step 2:} Project $\overline{\boldsymbol{\Delta}}_i$ onto cover space and merge}}
\State Project each $\overline{\boldsymbol{\Delta}}_i$ and obtain $\boldsymbol{M}_i$ by Eq.~\eqref{eq:proj_first}
\State $\widetilde{\boldsymbol{M}} \gets \text{Merging}(\{\boldsymbol{M}_i\}_{i=1}^{T})$ via TA or TIES

\State {\color{cvprblue}{$\triangleright$ \textbf{Step 3:} Project $\widetilde{\boldsymbol{M}}$ back to parameter space}}
\State {Construct} mask $\boldsymbol{\mathcal{M}} \gets \mathrm{block\mbox{-}diag}(\mathbf{1}_{r\times r}, \cdots, \mathbf{1}_{r\times r})$
\State Obtain merged task vector $\widetilde{\boldsymbol{\Delta}}$ by Eq.~\eqref{eq:proj_para}
\end{algorithmic}
\end{algorithm}

\noindent
\textbf{Projection and Merging in the Cover Space.}
Directly merging task vectors in the original parameter space may distort directional geometry due to misaligned subspaces $\{(\boldsymbol{U}_i, \boldsymbol{V}_i)\}$. 
To preserve directional consistency across all tasks, we seek a pair of shared orthonormal basis $(\widetilde{\boldsymbol{U}},\widetilde{\boldsymbol{V}})$ that define a \emph{cover space} capturing the directional geometry of all task vectors, and perform merging within the cover space. The objective can be formulated as:
\begin{equation}
\label{eq:dir_objective_final}
\begin{aligned}
\min_{\widetilde{\boldsymbol{U}},\widetilde{\boldsymbol{V}}} 
\sum_{i=1}^{T} \sum_{j=1}^{r} 
\min_{\boldsymbol{\sigma}_{i}^{j} \in \mathbb{R}^{k}}
\left\|
\boldsymbol{u}_{i}^{j}\boldsymbol{v}_{i}^{j\top}
- \widetilde{\boldsymbol{U}}\,\mathrm{diag}(\boldsymbol{\sigma}_{i}^{j})\,\widetilde{\boldsymbol{V}}^{\top}
\right\|_{F}^{2}, \\
\text{s.t.}\;\;
\widetilde{\boldsymbol{U}}^{\top}\widetilde{\boldsymbol{U}}
=\widetilde{\boldsymbol{V}}^{\top}\widetilde{\boldsymbol{V}}
=\boldsymbol{I}.
\end{aligned}
\end{equation}
where $k = rT$. The inner minimization over $\boldsymbol{\sigma}_{i}^{j}$ determines the optimal coefficients along the cover basis. Using Proposition~\ref{prop:equivalences}(a), we obtain the surrogate objective:
\begin{equation}
\label{eq:sur-obj}
\begin{aligned}
\max_{\widetilde{\boldsymbol{U}},\widetilde{\boldsymbol{V}}} 
\sum_{i=1}^{T}\sum_{j=1}^{r}
\big\|
\boldsymbol{\sigma}(\widetilde{\boldsymbol{U}},\widetilde{\boldsymbol{V}},\boldsymbol{u}_{i}^{j} \boldsymbol{v}_{i}^{j \top})
\big\|_{2}^{2}, \\
\text{s.t.}\;
\widetilde{\boldsymbol{U}}^{\top}\widetilde{\boldsymbol{U}}=
\widetilde{\boldsymbol{V}}^{\top}\widetilde{\boldsymbol{V}}=\boldsymbol{I},
\end{aligned}
\end{equation}
where
$
\boldsymbol{\sigma}(\boldsymbol{U},\boldsymbol{V},\boldsymbol{\Delta})
=\mathrm{diag}\!\left(\boldsymbol{U}^{\top}\boldsymbol{\Delta}\boldsymbol{V}\right)\in\mathbb{R}^k
$
denotes projection onto the shared dyadic directions.


\begin{table*}[!t]
  \resizebox{1\textwidth}{!}{
    \begin{tabular}{cccc|ccc|ccc}
      \toprule
      \multicolumn{1}{c}{\multirow{2}{*}[-0.5ex]{\textbf{Method}}}         & \multicolumn{3}{c}{\model{ViT-B-32}}                     & \multicolumn{3}{c}{\model{ViT-B-16}}                     & \multicolumn{3}{c}{\model{ViT-L-14}}                    \\ 
                                               \cmidrule{2-10}
                                               & 8 tasks           & 12 tasks          & 16 tasks           & 8 tasks           & 12 tasks          & 16 tasks          & 8 tasks           & 12 tasks          & 16 tasks          \\ 
                                               \midrule
      \multicolumn{1}{c}{Individual}            & $87.82$ & $88.90$ & $87.50$ & $89.71$ & $90.76$ & $89.11$ & $92.36$ & $93.57$ & $92.11$ \\
      \multicolumn{1}{c}{Task Arithmetic}     & $52.80_{(61.73)}$ & $60.76_{(69.12)}$ & $60.04_{(68.94)}$ & $57.70_{(65.30)}$ & $64.26_{(71.12)}$ & $62.40_{(70.16)}$ & $68.29_{(74.38)}$ & $73.69_{(78.84)}$ & $69.98_{(75.58)}$ \\
      \multicolumn{1}{c}{KnOTS-TIES}    & $55.93_{(65.03)}$ & $63.03_{(71.43)}$ & $61.78_{(70.78)}$ & $60.80_{(68.59)}$ & $66.35_{(73.38)}$ & $64.31_{(72.29)}$ & $73.61_{(79.92)}$ & $75.64_{(80.90)}$ & $72.19_{(77.98)}$ \\
      \multicolumn{1}{c}{WUDI-Merging}        & $55.25_{(64.38)}$ & $62.20_{(70.64)}$ & $61.24_{(70.27)}$ & $58.95_{(66.63)}$ & $65.29_{(72.29)}$ & $64.59_{(72.51)}$ & $69.78_{(75.91)}$ & $74.25_{(79.45)}$ & $71.79_{(77.59)}$ \\
      \multicolumn{1}{c}{TSV-M} & $58.91_{(68.25)}$ & $65.30_{(73.86)}$ & $63.51_{(72.72)}$ & $62.97_{(70.87)}$ & $68.92_{(76.06)}$ & $67.21_{(75.35)}$ & $76.52_{(83.00)}$ & $79.67_{(85.13)}$ & $74.37_{(80.31)}$ \\
      \multicolumn{1}{c}{Iso-CTS} & $\underline{63.01_{(72.71)}}$ & $\underline{66.28_{(75.01)}}$ & $\underline{64.61_{(74.02)}}$ & $\underline{69.06_{(77.38)}}$ & $\underline{71.52_{(78.87)}}$ & $\underline{69.88_{(78.22)}}$ & $\underline{81.64_{(88.31)}}$ & $\underline{81.35_{(86.87)}}$ & $\underline{77.50_{(83.65)}}$ \\
      \bottomrule
      \rowcolor{gray!15}
      \multicolumn{1}{c}{\textbf{ DC-Merge}} & $\mathbf{64.17_{(73.90)}}$ & $\mathbf{68.40_{(77.22)}}$ & $\mathbf{66.27_{(75.80)}}$ & $\mathbf{70.53_{(78.86)}}$ & $\mathbf{73.12_{(80.56)}}$ & $\mathbf{70.57_{(78.91)}}$ & $\mathbf{82.61_{(89.42)}}$ & $\mathbf{83.62_{(89.31)}}$ & $\mathbf{79.53_{(85.71)}}$ \\
      \bottomrule
    \end{tabular}
  }
  \caption{Average absolute accuracy results on vision model merging benchmarks in LoRA setting; subscript (in parentheses) is the average normalized accuracy. The best results are in \textbf{bold} and the second-best are \underline{underlined}.}
  \label{tab:LoRA_task_acc}
\end{table*}

\begin{table*}[t]
  \resizebox{1\textwidth}{!}{
    \begin{tabular}{cccc|ccc|ccc}
      \toprule
      \multicolumn{1}{c}{\multirow{2}{*}[-0.5ex]{\textbf{Method}}}         & \multicolumn{3}{c}{\model{ViT-B-32}}                     & \multicolumn{3}{c}{\model{ViT-B-16}}                     & \multicolumn{3}{c}{\model{ViT-L-14}}                    \\ 
                                               \cmidrule{2-10}
                                               & 8 tasks           & 14 tasks          & 20 tasks           & 8 tasks           & 14 tasks          & 20 tasks          & 8 tasks           & 14 tasks          & 20 tasks          \\ 
                                               \midrule
      \multicolumn{1}{c}{Individual}            & $92.83$ & $90.88$ & $91.37$ & $94.64$ & $92.76$ & $93.17$ & $95.81$ & $94.29$ & $94.73$ \\
      \multicolumn{1}{c}{Weight Averaging}    & $66.34_{(72.13)}$ & $64.34_{(71.12)}$ & $61.04_{(67.53)}$ & $72.22_{(76.60)}$ & $69.46_{(74.82)}$ & $65.31_{(70.36)}$ & $79.56_{(83.15)}$ & $76.73_{(81.10)}$ & $71.60_{(75.60)}$ \\
      \multicolumn{1}{c}{Task Arithmetic}     & $70.79_{(76.55)}$ & $65.32_{(72.09)}$ & $60.52_{(66.79)}$ & $75.41_{(79.58)}$ & $70.52_{(75.89)}$ & $65.78_{(70.76)}$ & $84.93_{(88.65)}$ & $79.41_{(83.95)}$ & $74.01_{(78.07)}$ \\
      \multicolumn{1}{c}{TIES-Merging} & $75.09_{(81.08)}$ & $68.02_{(74.83)}$ & $63.38_{(69.90)}$ & $79.74_{(84.34)}$ & $73.22_{(78.73)}$ & $68.18_{(73.26)}$ & $86.88_{(90.69)}$ & $79.46_{(84.05)}$ & $75.71_{(79.80)}$ \\
      \multicolumn{1}{c}{Consensus TA}        & $75.03_{(80.84)}$ & $70.39_{(77.36)}$ & $65.43_{(71.98)}$ & $79.39_{(83.86)}$ & $74.39_{(79.92)}$ & $69.76_{(74.93)}$ & $86.34_{(90.08)}$ & $82.22_{(86.94)}$ & $79.00_{(83.22)}$ \\
      \multicolumn{1}{c}{TSV-M} & $85.86_{(92.31)}$ & $80.06_{(87.88)}$ & $77.07_{(84.29)}$ & $89.01_{(93.94)}$ & $84.58_{(91.01)}$ & $80.57_{(86.45)}$ & $92.98_{(96.98)}$ & $89.17_{(94.43)}$ & $87.72_{(92.50)}$ \\
      \multicolumn{1}{c}{Iso-CTS}        & $\underline{86.20_{(91.78)}}$ & $\underline{81.71_{(89.70)}}$ & $\underline{78.05_{(85.48)}}$ & $\mathbf{90.91_{(95.95)}}$ & $\underline{86.40_{(92.81)}}$ & $\underline{82.38_{(88.36)}}$ & $\mathbf{94.69_{(98.81)}}$ & $\underline{90.98_{(96.28)}}$ & $\underline{90.05_{(94.88)}}$ \\
      \bottomrule
      \rowcolor{gray!15}
      \multicolumn{1}{c}{\textbf{ DC-Merge}} & $\mathbf{87.05_{(93.55)}}$ & $\mathbf{82.52_{(90.62)}}$ & $\mathbf{80.58_{(88.18)}}$ & $\underline{90.78_{(95.83)}}$ & $\mathbf{87.06_{(93.70)}}$ & $\mathbf{84.57_{(90.76)}}$ & $\underline{94.31_{(98.38)}}$ & $\mathbf{91.01_{(96.43)}}$ & $\mathbf{90.51_{(95.43)}}$ \\
      \bottomrule
    \end{tabular}
  }
  \caption{Average absolute accuracy results on vision model merging benchmarks in FFT setting; subscript (in parentheses) is the average normalized accuracy. The best results are in \textbf{bold} and the second-best are \underline{underlined}.}
  \label{tab:FFT_task_acc}
  \vspace{-0.3cm}
\end{table*}

As directly optimizing Eq.~\eqref{eq:sur-obj} incurs non-trivial computational overhead, we adopt whitening~\cite{schonemann1966} here as it serves as a near-optimal solution to Eq.~\eqref{eq:sur-obj} while being computationally efficient. In Appendix~\ref{app:D}, we provide an iterative approach for constructing cover basis and theoretically show its relation to the whitening transformation.
Specifically, we construct the cover basis $(\widetilde{\boldsymbol{U}}, \widetilde{\boldsymbol{V}})$ by whitening the column-wise concatenated per-task knowledge basis:
\begin{equation}
\label{eq:concat}
\boldsymbol{U} = [\boldsymbol{U}_1^{(r)}, \ldots, \boldsymbol{U}_T^{(r)}], \qquad
\boldsymbol{V} = [\boldsymbol{V}_1^{(r)}, \ldots, \boldsymbol{V}_T^{(r)}].
\end{equation}
Thus, $\widetilde{\boldsymbol{U}}^{\top}\widetilde{\boldsymbol{U}} = \widetilde{\boldsymbol{V}}^{\top}\widetilde{\boldsymbol{V}} = \boldsymbol{I}$ defines an orthogonal basis that contains the union of all tasks' directional geometry.  
Each smoothed task vector is then projected onto cover space by:
\begin{equation}
\label{eq:proj_first}
    \boldsymbol{M}_i = \widetilde{\boldsymbol{U}}^{\top}\,\overline{\boldsymbol{\Delta}}_i\,\widetilde{\boldsymbol{V}}.
\end{equation}
This projection ensures that all task vectors are expressed under shared cover basis, which facilitates directionally consistent task vectors aggregation via existing element-wise merging methods, such as TA~\cite{ilharco2023task} and TIES-Merging~\cite{yadav2023tiesmerging}, to obtain $\widetilde{\boldsymbol{M}}$.
Finally, the merged multi-task vector is reconstructed by projecting $\widetilde{\boldsymbol{M}}$ back to the original parameter space:
\begin{equation}
\label{eq:proj_para}
\widetilde{\boldsymbol{\Delta}} = \widetilde{\boldsymbol{U}}\,(\widetilde{\boldsymbol{M}}\odot\boldsymbol{\mathcal{M}})\,\widetilde{\boldsymbol{V}}^{\top},
\end{equation}
where $\boldsymbol{\mathcal{M}}$ serves as a structural mask. We leave further discussion of the structural mask in Appendix~\ref{app:D} and summarize the key steps of our approach in Algorithm \ref{alg:ioa}.

\section{Experiments}
In this section, we evaluate the performance of DC-Merge against existing baselines through extensive experiments using vision models and vision-language models (VLMs) in both FFT and LoRA settings, demonstrating the versatility of DC-Merge. 
We further perform comprehensive ablation studies to analyze the effectiveness of each key component in DC-Merge.
\subsection{Results for Vision Tasks}
In this subsection, we investigate the merging of vision models. For fully fine-tuned vision models, following prior works~\cite{tsv, marczak2025notaskleftbehind}, we use 8-task, 14-task, and 20-task benchmarks for evaluation, respectively, and employ three CLIP~\cite{radford2021learning} variants: \model{ViT-B-32}, \model{ViT-B-16}, and \model{ViT-L-14} as visual encoders~\cite{dosovitskiy2021imageworth16x16words}. For LoRA fine-tuned vision models, we extend previous evaluations by assessing both our method and existing baselines on a larger number of tasks, specifically 8, 12 and 16. Consistent with prior work~\cite{tsv, marczak2025notaskleftbehind, stoica2024knots, cheng2025whoever}, we report the average absolute and normalized accuracy of merged models.

In the full parameter fine-tuning setting, we compare our method against Weight Averaging~\cite{wortsman2022model}, Task Arithmetic~\cite{ilharco2023task}, TIES-Merging~\cite{yadav2023tiesmerging}, Consensus TA~\cite{wang2024localizing}, TSV-M~\cite{tsv} and Iso-CTS~\cite{marczak2025notaskleftbehind}. For LoRA fine-tuned models, Task Arithmetic, KnOTS-TIES~\cite{stoica2024knots}, WUDI-Merging~\cite{cheng2025whoever}, TSV-M, and Iso-CTS serve as baselines. We provide details on benchmarks and experimental setups in the Appendix~\ref{app:E}.

Table~\ref{tab:LoRA_task_acc} shows the results of merging vision models fine-tuned by LoRA. Across three different backbones, the performance of our method consistently surpasses the current state-of-the-art methods. Moreover, the performance gains remain substantial with the growth of tasks.
We also conduct experiments on the checkpoints provided by KnOTS~\cite{stoica2024knots}, where our method still achieves superior performance compared to existing state-of-the-art methods. The corresponding results are reported in Appendix~\ref{sec:other_ckpts}. Table~\ref{tab:FFT_task_acc} presents the results under the full fine-tuning setting. The results demonstrate that our approach not only exhibits strong capability when merging LoRA fine-tuned models but also achieves state-of-the-art performance under the FFT scenario. Notably, the superiority of our method becomes more significant as the number of tasks increases. 

\subsection{Results for Vision-Language Tasks}
In the multi-modal model merging setting, we compare our method with Task Arithmetic, TIES-Merging, DARE~\cite{yu2024language}, PCB-Merging~\cite{du2024parameter}, and RobustMerge~\cite{zeng2025parameter} on eight multi-modal datasets using \model{LLaVA-v1.5-7B}~\cite{liu2023visual} as backbone. Following the experimental setup of RobustMerge, we further evaluate the merged model on four additional datasets to assess its generalization ability to unseen tasks. We adopt the checkpoints released by RobustMerge and provide detailed experimental configurations in Appendix~\ref{app:E}.

As shown in Table \ref{tab:8_pertask_MM}, our method notably outperforms existing state-of-the-art methods on both seen and unseen tasks, demonstrating that its applicability is not limited to vision models but can also scale to large multi-modal models. We also evaluate the generalization capability to unseen tasks of our method on vision models and the detailed results are presented in Appendix~\ref{sec:vision_unseen_tasks}.

\begin{table}[h]
\centering
\resizebox{\columnwidth}{!}{%
\tablestyle{14pt}{1}
\begin{tabular}{c|cc}
\toprule
\textbf{Method} & \model{Seen Tasks} & \model{Unseen Tasks} \\
\midrule
Zeroshot & $43.37$ & $25.22$ \\
Individual & $69.23$ & $-$ \\
Multi Task  & $63.62$ & $36.06$ \\
\midrule
Task Arithmetic & $53.93$ & $33.31$ \\
DARE-Merging  & $53.84$ & $33.15$ \\
TIES-Merging         & $53.09$ & $33.14$ \\
PCB-Merging             & $53.70$ & $33.53$ \\
RobustMerge            & $\underline{57.33}$ & $\underline{37.99}$ \\
\midrule
\rowcolor{gray!15}
\textbf{DC-Merge}       & $\mathbf{59.63}$ & $\mathbf{39.84}$ \\
\bottomrule
\end{tabular}
}
\caption{Performance on MM-MergeBench~\cite{zeng2025parameter}, containing eight seen tasks (LoRA fine-tuned) and four unseen tasks. The best results are in \textbf{bold} and the second-best are \underline{underlined}. We report average absolute accuracy. See Appendix~\ref{sec:other_baselines_MMBench} for detailed results.}
\label{tab:8_pertask_MM}
\vspace{-0.4cm}
\end{table}

\subsection{Ablations and Analysis}
Unless otherwise specified, all experiments in this subsection are conducted in LoRA setting. 

\noindent
\textbf{The Effectiveness of Energy Smoothing.}
We investigate the impact of our energy smoothing strategy on the performance of our method, with the results summarized in Table~\ref{lora_b32_smoothing_strategy}. The results align well with our observations: applying energy smoothing to each task vector effectively ensures that all the knowledge components can be adequately expressed, leading to a significant improvement in overall performance. Notably, preserving a moderate degree of skewness in the energy distribution (i.e., linear smoothing) can yield better results than averaging. We provide additional comparisons of smoothing strategy on \model{ViT-B-16} and \model{ViT-L-14} in Appendix~\ref{sec:smoothing}.

\vspace{-0.2cm}
\begin{table}[H]
\centering
\resizebox{\columnwidth}{!}{%
\begin{tabular}{lccc}
\toprule
Method & 8 tasks & 12 tasks & 16 tasks \\ 
\midrule
No smoothing        & 69.27 & 74.60 & 74.47 \\
Averaging           & 73.09 (\textcolor{green!50!black}{+3.82}) & 76.42 (\textcolor{green!50!black}{+1.82}) & 75.51 (\textcolor{green!50!black}{+1.04}) \\
Linear smoothing  & 73.90 (\textcolor{green!50!black}{+4.63}) & 77.22 (\textcolor{green!50!black}{+2.62}) & 75.80 (\textcolor{green!50!black}{+1.33}) \\

\bottomrule
\end{tabular}%
}
\caption{Performance comparison of different smoothing strategies. We report average normalized accuracy using \model{ViT-B-32}.}
\label{lora_b32_smoothing_strategy}
\vspace{-0.4cm}
\end{table}

\noindent
\textbf{Impact of Performing a Post-hoc Pruning.}
In Algorithm \ref{alg:ioa}, we perform a post-hoc pruning by applying a mask $\boldsymbol{\mathcal{M}}$ to mitigate directional inconsistency of different tasks before projecting the merged parameter matrix $\widetilde{\boldsymbol{M}}$ back to the original parameter space. Table~\ref{b32_with_and_wo_mask} presents the effect of such structural pruning on overall performance in both LoRA and FFT settings. The performance degradation becomes more pronounced with the increase of tasks. Moreover, since the number of fine-tuned parameters in the FFT setting is substantially larger than that in LoRA, incorporating masks leads to significant performance gains of up to 10.55\% in average normalized accuracy, highlighting the crucial role that structural pruning plays in preventing cross-task directional inconsistency. We investigate the impact of mask size on the performance in Appendix~\ref{app:D}.

\begin{table}[htbp]
\centering
\resizebox{\columnwidth}{!}{%
\tablestyle{12pt}{1}
\begin{tabular}{lccc}
\toprule
Method & Tasks & w/o mask & w/ mask\\
\midrule
\multirow{3}{*}{FFT}
  & 8 tasks  & 87.98 & 93.55 (\textcolor{green!50!black}{+5.57})\\
  & 14 tasks & 82.39 & 90.50 (\textcolor{green!50!black}{+8.11})\\
  & 20 tasks & 77.63 & 88.18 (\textcolor{green!50!black}{+10.55})\\
\midrule
\multirow{3}{*}{LoRA}
  & 8 tasks  & 73.61 & 73.90 (\textcolor{green!50!black}{+0.29})\\
  & 12 tasks & 75.94 & 77.22 (\textcolor{green!50!black}{+1.28})\\
  & 16 tasks & 74.42 & 75.80 (\textcolor{green!50!black}{+1.38}) \\
\bottomrule
\end{tabular}%
}
\caption{Comparison of performance with and w/o applying masks. We report average normalized accuracy using \model{ViT-B-32}.}
\label{b32_with_and_wo_mask}
\vspace{-0.3cm}
\end{table}

\noindent
\textbf{Impact of Merging in the Shared Cover space.}
To maintain the directional geometry of each task vector, we project the smoothed task vectors onto a shared subspace prior to model merging. As shown in Table~\ref{lora_b32_ta_ties}, compared to merging in the original parameter space, CSM significantly boosts the performance of both TA~\cite{ilharco2023task} and TIES~\cite{yadav2023tiesmerging}. Moreover, after applying energy smoothing to the task vectors, the performance is further enhanced, indicating that the two main components of our proposed method are complementary. An illustrative example provided in Appendix~\ref{app:D} further demonstrates the importance of shared cover basis in preserving the directional structure of task vectors.

\begin{table}[htbp]
\centering
\resizebox{\columnwidth}{!}{%
\begin{tabular}{lccc}
\toprule
 Method & 8 tasks & 12 tasks & 16 tasks \\ 
\midrule
Vanilla TA        & 61.73 & 69.12 & 68.94 \\
TA + ES  & 69.12 (\textcolor{green!50!black}{+7.39}) & 74.35 (\textcolor{green!50!black}{+5.23}) & 73.01 (\textcolor{green!50!black}{+4.07}) \\
TA + CSM  & 68.13 (\textcolor{green!50!black}{+6.40}) & 73.92 (\textcolor{green!50!black}{+4.80}) & 72.64 (\textcolor{green!50!black}{+3.70}) \\
TA + CSM + ES           & 73.82 (\textcolor{green!50!black}{+12.09}) & 77.16 (\textcolor{green!50!black}{+8.04}) & 75.73 (\textcolor{green!50!black}{+6.79}) \\
\midrule
Vanilla TIES        & 62.09 & 69.30 & 70.06 \\
TIES + ES  & 69.94 (\textcolor{green!50!black}{+7.85}) & 74.97 (\textcolor{green!50!black}{+5.67}) & 74.74 (\textcolor{green!50!black}{+4.68}) \\
TIES + CSM  & 69.27 (\textcolor{green!50!black}{+7.18}) & 74.60 (\textcolor{green!50!black}{+5.30}) & 74.47 (\textcolor{green!50!black}{+4.41}) \\
TIES + CSM + ES           & 73.90 (\textcolor{green!50!black}{+11.81}) & 77.22 (\textcolor{green!50!black}{+7.92}) & 75.80 (\textcolor{green!50!black}{+5.74}) \\
\bottomrule
\end{tabular}%
}
\caption{Performance of individually applying energy smoothing (ES) and cover space merging (CSM) as well as combining them to TA or TIES compared with vanilla settings. We report the average normalized accuracy using \model{ViT-B-32}.}
\label{lora_b32_ta_ties}
\vspace{-0.5cm}
\end{table}

\section{Related Work}

\noindent
\textbf{Model merging} has emerged as a promising approach to integrate expert models fine-tuned on different downstream tasks into a single multi-task model. Task Arithmetic (TA)~\cite{ilharco2023task} first introduces the concept of a \emph{task vector}, defined as the difference between an expert and its pre-trained model, and combines them through scaled averaging to construct a merged model. Subsequent studies propose meticulously crafted parameter-wise strategies to mitigate interference during merging. TIES~\cite{yadav2023tiesmerging} reduces sign conflicts by adopting the majority sign across all models. Consensus Merging~\cite{wang2024localizing} applies binary masks to exclude parameters important to fewer than two tasks. Recent studies WUDI-Merging~\cite{cheng2025whoever} and FDA~\cite{shi2025modelmergingfunctionaldual} optimizes the merged task vector to keep the output of merged model align with each fine-tuned model given the same input of corresponding task.

These merging methods are data-free, producing merged task vectors that can be directly integrated into the pre-trained model. A number of recent approaches, however, focus on creating model with multi-task capabilities by modifying the inference stage. Twin-Merging~\cite{lu2024twin0merging0} composes task-specific components at test time, requiring two forward passes. EMR-Merging~\cite{huang2024emr0merging0} employs additional per-task masks and rescalers for inference. In this paper, we restrict our study to merging methods which are data-free and leave the inference stage unaffected.

\noindent
\textbf{SVD-based Model Merging.}
Recent data-free model merging methods have incorporated SVD to improve performance~\cite{stoica2024knots, wei2025modeling}. 
State-of-the-art approaches include TSV-M~\cite{tsv}, which enforces orthogonality between task-specific subspaces to reduce task interference, and Iso-CTS~\cite{marczak2025notaskleftbehind}, which standardizes singular values after combining a common subspace constructed by TA~\cite{ilharco2023task} and task-specific subspaces. More recently, ESM~\cite{li2026modelmergingessentialsubspace} projects parameter updates into an activation-aware essential subspace and applies polarized scaling to amplify critical weights.
In contrast to these methods, our approach prioritizes the directional consistency of each original task vector with the merged vector. We achieve this by balancing the energy distribution of knowledge components and performing the merge process within a shared orthogonal subspace induced by a pair of cover basis.


\section{Conclusion and Limitation}


\textbf{Conclusion.} In this work, we are the first to identify that preserving the directional consistency of task vectors after merging is crucial for retaining the capabilities of individual tasks. Building upon this insight, we propose DC-Merge, which maintains the directional consistency
between the merged multi-task vector and each original task vector by \textit{energy smoothing} and \textit{cover space merging}. Our method achieves state-of-the-art performance in both FFT and LoRA settings. 

\noindent
\textbf{Limitation.}
There still exists a noticeable performance gap between merging LoRA fine-tuned models and full parameter fine-tuned models. This phenomenon may arise from the number of knowledge components in each task vector. A larger number of knowledge components provides redundancy that is robust to direction shift, whereas a smaller set makes the task vector more fragile to directional inconsistency. The knowledge components of each LoRA task vector are scarce, even fewer than the LoRA rank due to its long-tailed energy distribution. A potential remedy lies in promoting a balanced energy distribution of the knowledge components during fine-tuning. 





{
    \small
    \bibliographystyle{ieeenat_fullname}
    \bibliography{main}

@String(IJCV = {Int. J. Comput. Vis.})

@String(CVPR= {IEEE Conf. Comput. Vis. Pattern Recog.})

@String(ICCV= {Int. Conf. Comput. Vis.})

@String(ECCV= {Eur. Conf. Comput. Vis.})

@String(NIPS= {Adv. Neural Inform. Process. Syst.})

@String(ICLR = {Int. Conf. Learn. Represent.})

@String(IJCV  = {IJCV})

@String(CVPR  = {CVPR})

@String(ICCV  = {ICCV})

@String(ECCV  = {ECCV})

@String(NIPS  = {NeurIPS})

@String(ICLR  = {ICLR})

@article{shi2025modelmergingfunctionaldual,
      title={Model Merging with Functional Dual Anchors}, 
      author={Kexuan Shi and Yandong Wen and Weiyang Liu},
      year={2025},
      journal={arXiv preprint arXiv:2510.21223},
}

@inproceedings{wortsman2022robust,
  author       = {Mitchell Wortsman and
                  Gabriel Ilharco and
                  Jong Wook Kim and
                  Mike Li and
                  Simon Kornblith and
                  Rebecca Roelofs and
                  Raphael Gontijo Lopes and
                  Hannaneh Hajishirzi and
                  Ali Farhadi and
                  Hongseok Namkoong and
                  Ludwig Schmidt},
  title        = {Robust fine-tuning of zero-shot models},
  booktitle    = CVPR,
  year         = {2022},
}

@inproceedings{ilharco2023task,
  author       = {Gabriel Ilharco and
                  Marco T{\'{u}}lio Ribeiro and
                  Mitchell Wortsman and
                  Ludwig Schmidt and
                  Hannaneh Hajishirzi and
                  Ali Farhadi},
  title        = {Editing models with task arithmetic},
  booktitle    = ICLR,
  year         = {2023}
}

@inProceedings{pmlr-v97-lezcano-casado19a,
  title = 	 {Cheap Orthogonal Constraints in Neural Networks: A Simple Parametrization of the Orthogonal and Unitary Group},
  author =       {Lezcano-Casado, Mario and Mart\'{\i}nez-Rubio, David},
  booktitle = {ICML},
  year = 	 {2019},
}

@inproceedings{
      yadav2023tiesmerging,
      title={{TIES}-Merging: Resolving Interference When Merging Models},
      author={Prateek Yadav and Derek Tam and Leshem Choshen and Colin Raffel and Mohit Bansal},
      booktitle=NIPS,
      year={2023},
}

@inproceedings{ilharco2022patching,
  title   = {Patching open-vocabulary models by interpolating weights},
  author  = {Ilharco, Gabriel and Wortsman, Mitchell and Gadre, Samir Yitzhak and Song, Shuran and Hajishirzi, Hannaneh and Kornblith, Simon and Farhadi, Ali and Schmidt, Ludwig},
  booktitle=NIPS,
  year    = {2022}
}

@article{krizhevsky2009learning,
  title={Learning multiple layers of features from tiny images},
  author={Krizhevsky, Alex and Hinton, Geoffrey and others},
  year={2009},
  journal={University of Toronto},
}

@inproceedings{cheng2025whoever,
  title={Whoever started the interference should end it: Guiding data-free model merging via task vectors},
  author={Cheng, Runxi and Xiong, Feng and Wei, Yongxian and Zhu, Wanyun and Yuan, Chun},
  booktitle={ICML},
  year={2025}
}

@article{li2026modelmergingessentialsubspace,
      title={Model Merging in the Essential Subspace}, 
      author={Longhua Li and Lei Qi and Qi Tian and Xin Geng},
      year={2026},
      journal={arXiv preprint arXiv:2602.20208},
}

@inproceedings{radford2021learning,
  title={Learning transferable visual models from natural language supervision},
  author={Radford, Alec and Kim, Jong Wook and Hallacy, Chris and Ramesh, Aditya and Goh, Gabriel and Agarwal, Sandhini and Sastry, Girish and Askell, Amanda and Mishkin, Pamela and Clark, Jack and others},
  booktitle={ICML},
  year={2021},
}

@inproceedings{cars,
  title={{3D Object representations for fine-grained categorization}},
  author={Krause, Jonathan and Stark, Michael and Deng, Jia and Fei-Fei, Li},
  booktitle={ICCV Workshops},
  year={2013}
}

@inproceedings{svhn,
  title={Reading digits in natural images with unsupervised feature learning},
  author={Netzer, Yuval and Wang, Tao and Coates, Adam and Bissacco, Alessandro and Wu, Bo and Ng, Andrew Y},
  booktitle={NeurIPS Workshops},
  year={2011}
}

@inproceedings{gtsrb,
  title={The German traffic sign recognition benchmark: a multi-class classification competition},
  author={Stallkamp, Johannes and Schlipsing, Marc and Salmen, Jan and Igel, Christian},
  booktitle={IJCNN},
  year={2011},
}

@article{eurosat,
  title={Eurosat: A novel dataset and deep learning benchmark for land use and land cover classification},
  author={Helber, Patrick and Bischke, Benjamin and Dengel, Andreas and Borth, Damian},
  journal={Journal of Selected Topics in Applied Earth Observations and Remote Sensing},
  year={2019},
}

@article{cheng2017remote,
  title={Remote sensing image scene classification: Benchmark and state of the art},
  author={Cheng, Gong and Han, Junwei and Lu, Xiaoqiang},
  journal={Proceedings of the IEEE},
  year={2017},
}

@article{sun397,
  title={Sun database: Exploring a large collection of scene categories},
  author={Xiao, Jianxiong and Ehinger, Krista A and Hays, James and Torralba, Antonio and Oliva, Aude},
  journal={IJCV},
  year={2016},
}

@inproceedings{dtd,
  title={Describing textures in the wild},
  author={Cimpoi, Mircea and Maji, Subhransu and Kokkinos, Iasonas and Mohamed, Sammy and Vedaldi, Andrea},
  booktitle=CVPR,
  year={2014}
}

@inproceedings{wortsman2022model,
  title={Model soups: averaging weights of multiple fine-tuned models improves accuracy without increasing inference time},
  author={Wortsman, Mitchell and Ilharco, Gabriel and Gadre, Samir Yitzhak and Roelofs, Rebecca and Gontijo-Lopes, Raphael and Morcos, Ari S and Namkoong, Hongseok and Farhadi, Ali and Carmon, Yair and Kornblith, Simon and others},
  booktitle={ICML},
  year={2022}
}

@inproceedings{DaheimMPGK24,
  author       = {Nico Daheim and
                  Thomas M{\"{o}}llenhoff and
                  Edoardo M. Ponti and
                  Iryna Gurevych and
                  Mohammad Emtiyaz Khan},
  title        = {Model Merging by Uncertainty-Based Gradient Matching},
  booktitle    = {ICLR},
  year         = {2024},
}

@article{DINO,
   title={Emerging Properties in Self-Supervised Vision Transformers},
   journal=ICCV,
   author={Caron, Mathilde and Touvron, Hugo and Misra, Ishan and Jegou, Herve and Mairal, Julien and Bojanowski, Piotr and Joulin, Armand},
   year={2021},
}

@inproceedings{hu2022lora,
    author = {Hu, Edward J and Shen, Yelong and Wallis, Phillip and Allen-Zhu, Zeyuan and Li, Yuanzhi and Wang, Shean and Wang, Lu and Chen, Weizhu and others},
    title = {Lora: Low-rank adaptation of large language models.},
    booktitle = {ICLR},
    year = {2022}
}

@inproceedings{dosovitskiy2021imageworth16x16words,
    author = {Alexey Dosovitskiy and Lucas Beyer and Alexander Kolesnikov and Dirk Weissenborn and Xiaohua Zhai and Thomas Unterthiner and Mostafa Dehghani and Matthias Minderer and Georg Heigold and Sylvain Gelly and Jakob Uszkoreit and Neil Houlsby},
    title = {An Image is Worth 16x16 Words: Transformers for Image Recognition at Scale.},
    booktitle = {ICLR},
    year = {2021}
}

@inproceedings{lu2022learn,
  title={Learn to explain: Multimodal reasoning via thought chains for science question answering},
  author={Lu, Pan and Mishra, Swaroop and Xia, Tanglin and Qiu, Liang and Chang, Kai-Wei and Zhu, Song-Chun and Tafjord, Oyvind and Clark, Peter and Kalyan, Ashwin},
  booktitle={NeurIPS},
  year={2022}
}

@inproceedings{goyal2017making,
  title={Making the v in vqa matter: Elevating the role of image understanding in visual question answering},
  author={Goyal, Yash and Khot, Tejas and Summers-Stay, Douglas and Batra, Dhruv and Parikh, Devi},
  booktitle={CVPR},
  year={2017}
}

@inproceedings{kazemzadeh2014referitgame,
  title={Referitgame: Referring to objects in photographs of natural scenes},
  author={Kazemzadeh, Sahar and Ordonez, Vicente and Matten, Mark and Berg, Tamara},
  booktitle={EMNLP},
  year={2014}
}

@inproceedings{mao2016generation,
  title={Generation and comprehension of unambiguous object descriptions},
  author={Mao, Junhua and Huang, Jonathan and Toshev, Alexander and Camburu, Oana and Yuille, Alan L and Murphy, Kevin},
  booktitle={CVPR},
  year={2016}
}

@inproceedings{deng2009imagenet,
  title={Imagenet: A large-scale hierarchical image database},
  author={Deng, Jia and Dong, Wei and Socher, Richard and Li, Li-Jia and Li, Kai and Fei-Fei, Li},
  booktitle={CVPR},
  year={2009},
}

@inproceedings{mishra2019ocr,
  title={Ocr-vqa: Visual question answering by reading text in images},
  author={Mishra, Anand and Shekhar, Shashank and Singh, Ajeet Kumar and Chakraborty, Anirban},
  booktitle={ICDAR},
  year={2019},
}

@inproceedings{plummer2015flickr30k,
  title={Flickr30k entities: Collecting region-to-phrase correspondences for richer image-to-sentence models},
  author={Plummer, Bryan A and Wang, Liwei and Cervantes, Chris M and Caicedo, Juan C and Hockenmaier, Julia and Lazebnik, Svetlana},
  booktitle={ICCV},
  year={2015}
}

@inproceedings{gurari2018vizwiz,
  title={Vizwiz grand challenge: Answering visual questions from blind people},
  author={Gurari, Danna and Li, Qing and Stangl, Abigale J and Guo, Anhong and Lin, Chi and Grauman, Kristen and Luo, Jiebo and Bigham, Jeffrey P},
  booktitle={CVPR},
  year={2018}
}

@article{lu2021iconqa,
  title={Iconqa: A new benchmark for abstract diagram understanding and visual language reasoning},
  author={Lu, Pan and Qiu, Liang and Chen, Jiaqi and Xia, Tony and Zhao, Yizhou and Zhang, Wei and Yu, Zhou and Liang, Xiaodan and Zhu, Song-Chun},
  journal={arXiv preprint arXiv:2110.13214},
  year={2021}
}

@inproceedings{hendrycks2021many,
  title={The many faces of robustness: A critical analysis of out-of-distribution generalization},
  author={Hendrycks, Dan and Basart, Steven and Mu, Norman and Kadavath, Saurav and Wang, Frank and Dorundo, Evan and Desai, Rahul and Zhu, Tyler and Parajuli, Samyak and Guo, Mike and others},
  booktitle={ICCV},
  year={2021}
}

@inproceedings{schwenk2022okvqa,
  title={A-okvqa: A benchmark for visual question answering using world knowledge},
  author={Schwenk, Dustin and Khandelwal, Apoorv and Clark, Christopher and Marino, Kenneth and Mottaghi, Roozbeh},
  booktitle={ECCV},
  year={2022},
}

@inproceedings{wang2021screen2words,
  title={Screen2words: Automatic mobile ui summarization with multimodal learning},
  author={Wang, Bryan and Li, Gang and Zhou, Xin and Chen, Zhourong and Grossman, Tovi and Li, Yang},
  booktitle={UIST},
  year={2021}
}

@article{lu2022dynamic,
  title={Dynamic prompt learning via policy gradient for semi-structured mathematical reasoning},
  author={Lu, Pan and Qiu, Liang and Chang, Kai-Wei and Wu, Ying Nian and Zhu, Song-Chun and Rajpurohit, Tanmay and Clark, Peter and Kalyan, Ashwin},
  journal={arXiv preprint arXiv:2209.14610},
  year={2022}
}

@inproceedings{carion2020endtoend,
  title   = {End-to-End Object Detection with Transformers},
  author  = {Nicolas Carion and Francisco Massa and Gabriel Synnaeve and Nicolas Usunier and Alexander Kirillov and Sergey Zagoruyko},
  year    = {2020},
  booktitle = {ECCV}
}

@inproceedings{zhai2023sigmoid,
  title   = {Sigmoid Loss for Language Image Pre-Training},
  author  = {Xiaohua Zhai and Basil Mustafa and Alexander Kolesnikov and Lucas Beyer},
  year    = {2023},
  booktitle = {ICCV}
}

@inproceedings{tsv,
      title={Task Singular Vectors: Reducing Task Interference in Model Merging}, 
      author={Antonio Andrea Gargiulo and Donato Crisostomi and Maria Sofia Bucarelli and Simone Scardapane and Fabrizio Silvestri and Emanuele Rodolà},
      year={2025},
      booktitle=CVPR,
}

@inproceedings{stoica2024knots,
  title   = {{Model merging with SVD to tie the Knots}},
  author  = {George Stoica and Pratik Ramesh and Boglarka Ecsedi and Leshem Choshen and Judy Hoffman},
  year    = {2025},
  booktitle = ICLR
}

@inproceedings{davari2023model,
  title     = {Model Breadcrumbs: Scaling Multi-Task Model Merging with Sparse Masks},
  author    = {Mohammad-Javad Davari and Eugene Belilovsky},
  booktitle   = {ECCV},
  year      = 2024,
}

@inproceedings{MarczakTTC24,
  author       = {Daniel Marczak and
                  Bartlomiej Twardowski and
                  Tomasz Trzcinski and
                  Sebastian Cygert},
  title        = {{MagMax:} {Leveraging} {Model} {Merging} for {Seamless} {Continual} {Learning}},
  booktitle    = ECCV,
  year         = {2024},
}

@article{MNIST,
  author={Lecun, Y. and Bottou, L. and Bengio, Y. and Haffner, P.},
  journal={Proceedings of the IEEE}, 
  title={Gradient-based learning applied to document recognition}, 
  year={1998},
}

@inproceedings{cohen_emnist_2017,
	title = {{EMNIST}: {Extending} {MNIST} to handwritten letters},
	booktitle = {IJCNN},
	author = {Cohen, Gregory and Afshar, Saeed and Tapson, Jonathan and van Schaik, André},
	year = {2017},
}

@inproceedings{bossard_food-101_2014,
	title = {Food-101 – {Mining} {Discriminative} {Components} with {Random} {Forests}},
	booktitle = {ECCV},
	author = {Bossard, Lukas and Guillaumin, Matthieu and Van Gool, Luc},
	year = {2014},
}

@article{clanuwat_deep_2018,
	title = {Deep {Learning} for {Classical} {Japanese} {Literature}},
    journal = {arXiv preprint arXiv: 1607.06450},
	author = {Clanuwat, Tarin and Bober-Irizar, Mikel and Kitamoto, Asanobu and Lamb, Alex and Yamamoto, Kazuaki and Ha, David},
	year = {2018},
}

@inproceedings{coates_analysis_2011,
	title = {An {Analysis} of {Single}-{Layer} {Networks} in {Unsupervised} {Feature} {Learning}},
	booktitle = {Proceedings of the {Fourteenth} {International} {Conference} on {Artificial} {Intelligence} and {Statistics}},
	publisher = {JMLR Workshop and Conference Proceedings},
	author = {Coates, Adam and Ng, Andrew and Lee, Honglak},
	year = {2011},
}

@inproceedings{nilsback_automated_2008,
	title = {Automated {Flower} {Classification} over a {Large} {Number} of {Classes}},
	booktitle = {2008 {Sixth} {Indian} {Conference} on {Computer} {Vision}, {Graphics} \& {Image} {Processing}},
	author = {Nilsback, Maria-Elena and Zisserman, Andrew},
	year = {2008},
}

@inproceedings{parkhi_cats_2012,
	title = {Cats and dogs},
	booktitle = CVPR,
	author = {Parkhi, Omkar M and Vedaldi, Andrea and Zisserman, Andrew and Jawahar, C. V.},
	year = {2012},
}

@inproceedings{veeling_rotation_2018,
	title = {Rotation {Equivariant} {CNNs} for {Digital} {Pathology}},
	booktitle = {MICCAI},
	author = {Veeling, Bastiaan S. and Linmans, Jasper and Winkens, Jim and Cohen, Taco and Welling, Max},
	year = {2018},
}

@article{goodfellow_challenges_2013,
	title = {Challenges in {Representation} {Learning}: {A} {Report} on {Three} {Machine} {Learning} {Contests}},
        journal   = {Neural Networks},
	author = {Goodfellow, Ian J. and Erhan, Dumitru and Carrier, Pierre Luc and Courville, Aaron and Mirza, Mehdi and Hamner, Ben and Cukierski, Will and Tang, Yichuan and Thaler, David and Lee, Dong-Hyun and Zhou, Yingbo and Ramaiah, Chetan and Feng, Fangxiang and Li, Ruifan and Wang, Xiaojie and Athanasakis, Dimitris and Shawe-Taylor, John and Milakov, Maxim and Park, John and Ionescu, Radu and Popescu, Marius and Grozea, Cristian and Bergstra, James and Xie, Jingjing and Romaszko, Lukasz and Xu, Bing and Chuang, Zhang and Bengio, Yoshua},
	year = {2013},
}

@inproceedings{socher_recursive_nodate,
    title = "Recursive Deep Models for Semantic Compositionality Over a Sentiment Treebank",
    author = "Socher, Richard  and
      Perelygin, Alex  and
      Wu, Jean  and
      Chuang, Jason  and
      Manning, Christopher D.  and
      Ng, Andrew  and
      Potts, Christopher",
    booktitle = {EMNLP},
    year = "2013",
}

@article{xiao_fashion-mnist_2017,
  title   = {Fashion-MNIST: a Novel Image Dataset for Benchmarking Machine Learning Algorithms},
  author  = {Han Xiao and Kashif Rasul and Roland Vollgraf},
  year    = {2017},
  journal = {arXiv preprint arXiv: 1708.07747}
}

@inproceedings{wang2024localizing,
  author    = {Ke Wang and Nikolaos Dimitriadis and Guillermo Ortiz{-}Jim{\'{e}}nez and Fran{\c{c}}ois Fleuret and Pascal Frossard},
  title     = {Localizing Task Information for Improved Model Merging and Compression},
  booktitle = {ICML},
  year      = {2024},
}

@article{schonemann1966,
    title={A Generalized Solution of the Orthogonal Procrustes Problem},
    journal={Psychometrika},
    author={Schönemann, Peter H.}, 
    year={1966},
}

@article{ba2016layer,
  title   = {Layer Normalization},
  author  = {Jimmy Lei Ba and Jamie Ryan Kiros and Geoffrey E. Hinton},
  year    = {2016},
  journal = {arXiv preprint arXiv: 1607.06450}
}

@inproceedings{jin2023dataless,
  author    = {Xisen Jin and Xiang Ren and Daniel Preotiuc{-}Pietro and Pengxiang Cheng},
  title     = {Dataless Knowledge Fusion by Merging Weights of Language Models},
  booktitle = ICLR,
  year      = {2023},
}

@article{li2023deepmodelfusionsurvey,
      title={Deep Model Fusion: A Survey}, 
      author={Weishi Li and Yong Peng and Miao Zhang and Liang Ding and Han Hu and Li Shen},
      year={2023},
      journal={arXiv preprint arXiv: 2309.15698},
}

@article{delétang2024languagemodelingcompression,
      title={Language Modeling Is Compression}, 
      author={Grégoire Delétang and Anian Ruoss and Paul-Ambroise Duquenne and Elliot Catt and Tim Genewein and Christopher Mattern and Jordi Grau-Moya and Li Kevin Wenliang and Matthew Aitchison and Laurent Orseau and Marcus Hutter and Joel Veness},
      year={2024},
      journal={arXiv preprint arXiv:2309.10668},
}

@article{wei2025modeling,
  title={Modeling multi-task model merging as adaptive projective gradient descent},
  author={Wei, Yongxian and Tang, Anke and Shen, Li and Yuan, Chun and Cao, Xiaochun},
  journal={arXiv preprint arXiv:2501.01230},
  year={2025}
}

@article{franceschelli2025trainingfoundationmodelsdata,
      title={Training Foundation Models as Data Compression: On Information, Model Weights and Copyright Law}, 
      author={Giorgio Franceschelli and Claudia Cevenini and Mirco Musolesi},
      year={2025},
      journal={arXiv preprint arXiv:2407.13493},
}

@inproceedings{zeng2025parameter,
  title={RobustMerge: Parameter-Efficient Model Merging for MLLMs with Direction Robustness},
  author={Zeng, Fanhu and Guo, Haiyang and Zhu, Fei and Shen, Li and Tang, Hao},
  booktitle={NeurIPS},
  year={2025}
}

@inproceedings{liu2023visual,
  title={Visual instruction tuning},
  author={Liu, Haotian and Li, Chunyuan and Wu, Qingyang and Lee, Yong Jae},
  booktitle={NeurIPS},
  year={2023}
}

@inproceedings{du2024parameter,
  title={Parameter competition balancing for model merging},
  author={Du, Guodong and Lee, Junlin and Li, Jing and Jiang, Runhua and Guo, Yifei and Yu, Shuyang and Liu, Hanting and Goh, Sim K and Tang, Ho-Kin and He, Daojing and others},
  booktitle={NeurIPS},
  year={2024}
}

@inproceedings{yu2024language,
  title={Language models are super mario: Absorbing abilities from homologous models as a free lunch},
  author={Yu, Le and Yu, Bowen and Yu, Haiyang and Huang, Fei and Li, Yongbin},
  booktitle={ICML},
  year={2024}
}

@inproceedings{marczak2025notaskleftbehind,
  title     = {{N}o {T}ask {L}eft {B}ehind: {I}sotropic {M}odel {M}erging with {C}ommon and {T}ask-{S}pecific {S}ubspaces},
  author    = {Daniel Marczak and Simone Magistri and Sebastian Cygert and Bartłomiej Twardowski and Andrew D. Bagdanov and Joost van de Weijer},
  year      = {2025},
  booktitle = {ICML},
}

@article{vasudevan2017hierarchical,
  title   = {A Hierarchical Singular Value Decomposition Algorithm for Low Rank Matrices},
  author  = {Vinita Vasudevan and M. Ramakrishna},
  year    = {2017},
  journal = {arXiv preprint arXiv: 1710.02812}
}

@inproceedings{lu2024twin0merging0,
  title     = {Twin-Merging: Dynamic Integration of Modular Expertise in Model Merging},
  author    = {Zhenyi Lu and Chenghao Fan and Wei Wei and Xiaoye Qu and Dangyang Chen and Yu Cheng},
  booktitle   = {NeurIPS},
  year      = {2024},
}

@inproceedings{huang2024emr0merging0,
  title     = {EMR-Merging: Tuning-Free High-Performance Model Merging},
  author    = {Chenyu Huang and Peng Ye and Tao Chen and Tong He and Xiangyu Yue and Wanli Ouyang},
  booktitle   = {NeurIPS},
  year      = {2024},
}

@inproceedings{fan2023federated,
  title={Federated learning with bilateral curation for partially class-disjoint data},
  author={Fan, Ziqing and Zhang, Ruipeng and Yao, Jiangchao and Han, Bo and Zhang, Ya and Wang, Yanfeng},
  booktitle={Proceedings of the 37th International Conference on Neural Information Processing Systems},
  pages={32006--32019},
  year={2023}
}

@inproceedings{fan2024locally,
  title={Locally estimated global perturbations are better than local perturbations for federated sharpness-aware minimization},
  author={Fan, Ziqing and Hu, Shengchao and Yao, Jiangchao and Niu, Gang and Zhang, Ya and Sugiyama, Masashi and Wang, Yanfeng},
  booktitle={Proceedings of the 41st International Conference on Machine Learning},
  pages={12858--12881},
  year={2024}
}

@article{
luo2026keeplora,
title={KeepLo{RA}: Continual Learning with Residual Gradient Adaptation},
author={Mao-Lin Luo and Zi-Hao Zhou and Yi-Lin Zhang and Yuanyu Wan and Min-Ling Zhang and Tong Wei},
journal={arXiv preprint arXiv:2601.19659},
year={2026}
}
}

\appendix
\newtheorem{proof}{Proof}
\clearpage
\maketitlesupplementary

\section{Notations}
We present the definitions used in our paper and their corresponding mathematical symbols in Table~\ref{tab:notations} for ease of reference.
\begin{table*}[!h]
\centering
\caption{List of definitions and their corresponding mathematical symbols used in this paper.}
\label{tab:notations}
\begin{tabular}{>{\centering\arraybackslash}p{0.40\textwidth} >{\raggedright\arraybackslash}p{0.55\textwidth}}
\toprule
\textbf{Symbol} & \textbf{Definition} \\
\midrule
$\boldsymbol{W}_0, \{\boldsymbol{W}_i\}_{i=1}^{T}$ & The weights of pretrained model  \\
$\{\boldsymbol{W}_i\}_{i=1}^{T}$ & The weights of $T$ tasks fine-tuned model \\
$\Delta \boldsymbol{W}_i$ & The task vector of task $i$ \\
$\Delta \boldsymbol{W}_i = \sum_{j=1}^{r} \sigma_i^j \boldsymbol{u}_i^j \boldsymbol{v}_i^{j \top}$ & Knowledge decomposition of $\Delta \boldsymbol{W}_i$ \\
$\sigma_i^j \boldsymbol{u}_i^j \boldsymbol{v}_i^{j \top}$ & $j$-th knowledge vector of $\Delta\boldsymbol{W}_i$ \\
$\boldsymbol{u}_i^j \boldsymbol{v}_i^{j \top}$ & $j$-th knowledge component of $\Delta\boldsymbol{W}_i$ \\
$\boldsymbol{\sigma}_i=(\sigma_i^1, \dots, \sigma_i^r)$ & Energy distribution of task $i$ \\
$\{\boldsymbol{u}_i^j \boldsymbol{v}_i^{j \top}\}_{j=1}^r$ & Directional geometry of task $i$ \\
$\Delta \overline{\boldsymbol{W}}_i = \left(\frac{\sum_{j=1}^{r} \sigma_i^{j}}{r}\right) \left(\sum_{j=1}^{r} \boldsymbol{u}_i^{j} \boldsymbol{v}_i^{j\top}\right)$ & Energy-balanced task vector of task $i$ \\
$\widetilde{\boldsymbol{W}}, \Delta \widetilde{\boldsymbol{W}}$ & The weights of merged model and merged task vector, respectively \\
$\boldsymbol{U}_i^{(r)}, \boldsymbol{V}_i^{(r)}$ & The top-$r$ left and right singular vectors of $\Delta \boldsymbol{W}_i$ \\
$\boldsymbol{U}, \boldsymbol{V}$ & The concatenated per-task knowledge basis \\
$(\widetilde{\boldsymbol{U}}, \widetilde{\boldsymbol{V}})$ & The shared cover basis across all tasks \\
$\boldsymbol{M}_i = \widetilde{\boldsymbol{U}}^\top \Delta\boldsymbol{W}_i\widetilde{\boldsymbol{V}}$ & Task vector $i$ under shared cover space induced by $(\widetilde{\boldsymbol{U}}, \widetilde{\boldsymbol{V}})$ \\
$\widetilde{\boldsymbol{M}} = \text{Element-wise Merging}(\{\boldsymbol{M}_i\}_{i=1}^T)$ & The merged vector under shared cover space \\
$\boldsymbol{\mathcal{M}}=\mathrm{block\mbox{-}diag}(\boldsymbol{1}_{r\times r}, \cdots, \boldsymbol{1}_{r\times r})$ & Structural mask for reconstructing parameters \\

\bottomrule
\end{tabular}
\end{table*}
\section{Proof of Proposition \ref{pro:cossim}}

\begin{proof}
Let the (reduced) SVDs (knowledge-vector decompositions) of the two task updates be
\begin{equation}
\begin{aligned}
& \Delta \boldsymbol{W}_{s} = \sum_{i=1}^{n}\sigma_{s}^i\,\boldsymbol{u}_{s}^i\left(\boldsymbol{v}_{s}^i\right)^{\top}
= \boldsymbol{U}_s\,\mathrm{diag}(\boldsymbol{\sigma}^{s})\,\boldsymbol{V}_s^{\top},
\\ &
\Delta \boldsymbol{W}_{t} \;=\; \sum_{j=1}^{m}\sigma_{t}^j\,\boldsymbol{u}_{t}^j\left(\boldsymbol{v}_{t}^j\right)^{\top}
= \boldsymbol{U}_t\,\mathrm{diag}(\boldsymbol{\sigma}^{t})\,\boldsymbol{V}_t^{\top},
\end{aligned}
\end{equation}
where $\{\boldsymbol{u}_{s}^i\}_{i=1}^{n}$ and $\{\boldsymbol{v}_{s}^i\}_{i=1}^{n}$ (resp. $\{\boldsymbol{u}_{t}^j\}_{j=1}^{m}$ and $\{\boldsymbol{v}_{t}^j\}_{j=1}^{m}$) are orthonormal left/right singular vectors of $\Delta\boldsymbol{W}_{s}$ (resp. $\Delta\boldsymbol{W}_{t}$), and $\boldsymbol{\sigma}_{s}\in\mathbb{R}^{n}_{\ge 0}$, $\boldsymbol{\sigma}_{t}\in\mathbb{R}^{m}_{\ge 0}$ collect the singular values in descending order. We calculate the numerator and  denominator of $\mathrm{CosSim}$ as follows:

\noindent
\textbf{Numerator.}
Using the Frobenius inner product $\langle \boldsymbol{A},\boldsymbol{B}\rangle=\mathrm{tr}(\boldsymbol{A}^{\top}\boldsymbol{B})$ and bilinearity, we have:
\begin{equation}
\begin{aligned}
\big\langle \Delta \boldsymbol{W}_{s}, \Delta \boldsymbol{W}_{t}\big\rangle
&= \left\langle \sum_{i=1}^{n}\sigma_{s}^i\,\boldsymbol{u}_{s}^i\left(\boldsymbol{v}_{s}^i\right)^{\top},
\sum_{j=1}^{m}\sigma_{t}^j\,\boldsymbol{u}_{t}^j\left(\boldsymbol{v}_{t}^j\right)^{\top} \right\rangle
\\& = \sum_{i=1}^{n}\sum_{j=1}^{m}\sigma_{s}^i \sigma_{t}^j
\left\langle \boldsymbol{u}_{s}^i\left(\boldsymbol{v}_{s}^i\right)^{\top},
\boldsymbol{u}_{t}^j\left(\boldsymbol{v}_{t}^j\right)^{\top} \right\rangle .
\end{aligned}
\end{equation}
For rank-one matrices $\boldsymbol{a}\boldsymbol{b}^{\top}$ and $\boldsymbol{c}\boldsymbol{d}^{\top}$ we have
$\langle \boldsymbol{a}\boldsymbol{b}^{\top}, \boldsymbol{c}\boldsymbol{d}^{\top}\rangle
= (\boldsymbol{a}^{\top}\boldsymbol{c})\,(\boldsymbol{b}^{\top}\boldsymbol{d})$.
Thus, we have:
\begin{equation}
\begin{aligned}
\left\langle \boldsymbol{u}_{s}^i\left(\boldsymbol{v}_{s}^i\right)^{\top},
\boldsymbol{u}_{t}^j\left(\boldsymbol{v}_{t}^j\right)^{\top} \right\rangle
& =
\left(\boldsymbol{u}_{s}^i\right)^{\top}\boldsymbol{u}_{t}^j \;\cdot\; \left(\boldsymbol{v}_{s}^i\right)^{\top}\boldsymbol{v}_{t}^j
\\ & =
\left(\boldsymbol{u}_{s}^i\right)^{\top}\boldsymbol{u}_{t}^j \;\cdot\; \left(\boldsymbol{v}_{t}^j\right)^{\top}\boldsymbol{v}_{s}^i .
\end{aligned}
\end{equation}
Therefore, we have:
\begin{equation}
\begin{aligned}
\big\langle \Delta \boldsymbol{W}_{s}, \Delta \boldsymbol{W}_{t}\big\rangle
&= \sum_{i=1}^{n}\sum_{j=1}^{m}\sigma_{s}^i \sigma_{t}^j\;
\underbrace{\left(\boldsymbol{u}_{s}^i\right)^{\top}\boldsymbol{u}_{t}^j\;\left(\boldsymbol{v}_{t}^j\right)^{\top}\boldsymbol{v}_{s}^i}_{\displaystyle \boldsymbol{R}_{i,j}(s,t)} \\
&= \boldsymbol{\sigma}_{t}\,\boldsymbol{R}(s,t)\,\boldsymbol{\sigma}_{s}^{\top} \;=\; \boldsymbol{\sigma}_{s}\,\boldsymbol{R}(s,t)\,\boldsymbol{\sigma}_{t}^{\top},
\end{aligned}
\end{equation}
where we have defined $\boldsymbol{R}(s,t)\in\mathbb{R}^{n\times m}$ entrywise by
$\boldsymbol{R}_{i,j}(s,t)= (\boldsymbol{u}_{s}^i)^{\top}\boldsymbol{u}_{t}^j \, (\boldsymbol{v}_{t}^j)^{\top}\boldsymbol{v}_{s}^i$.

\noindent
\textbf{Denominator.}
For any matrix $\boldsymbol{A}$ with SVD $\boldsymbol{A}=\sum_{\ell}\sigma^\ell \boldsymbol{u}^\ell \boldsymbol{v}^{\ell\top}$, the Frobenius norm satisfies
$\|\boldsymbol{A}\|_F^2=\sum_{\ell}\left(\sigma^\ell\right)^2$.
Hence
\begin{equation}
\begin{aligned}
& \big\|\Delta \boldsymbol{W}_{s}\big\|_F
= \left(\sum_{i=1}^{n}(\sigma_{s}^i)^2\right)^{1/2}
= \big\|\boldsymbol{\sigma}_{s}\big\|_2,
\\ &
\big\|\Delta \boldsymbol{W}_{t}\big\|_F
= \left(\sum_{j=1}^{m}(\sigma_{t}^j)^2\right)^{1/2}
= \big\|\boldsymbol{\sigma}_{t}\big\|_2.
\end{aligned}
\end{equation}

\noindent
\textbf{Conclusion.}
Combining the numerator and denominator gives
\begin{equation}
\begin{aligned}
\mathrm{CosSim}\left(\Delta \boldsymbol{W}_{s}, \Delta \boldsymbol{W}_{t}\right)
& = \frac{\langle \Delta \boldsymbol{W}_{s}, \Delta \boldsymbol{W}_{t}\rangle}
{\|\Delta \boldsymbol{W}^{s}\|_{F}\,\|\Delta \boldsymbol{W}^{t}\|_{F}}
\\ & = \frac{ \boldsymbol{\sigma}_{s}\, \boldsymbol{R}(s,t)\, \left(\boldsymbol{\sigma}_{t}\right)^{\top} }
{\|\boldsymbol{\sigma}_{s}\|_{2}\,\|\boldsymbol{\sigma}_{t}\|_{2}},
\end{aligned}
\end{equation}
which proves the proposition.
\end{proof}



\section{CosSim as Expressiveness via Coefficient Projections}
\label{sec:cos_projection_expressiveness}

\noindent\textbf{Setting and notation.}
Let $\boldsymbol{\Delta}\in\mathbb{R}^{m\times n}$, and let
$\boldsymbol{U}\in\mathbb{R}^{m\times k}$, $\boldsymbol{V}\in\mathbb{R}^{n\times k}$ have
orthonormal columns, i.e., $\boldsymbol{U}^\top\boldsymbol{U}=\boldsymbol{V}^\top\boldsymbol{V}=\boldsymbol{I}_k$.
Define the \emph{bi-directional coefficient vector}
\[
\boldsymbol{\sigma}(\boldsymbol{U},\boldsymbol{V},\boldsymbol{\Delta})
\;\triangleq\;
\mathrm{diag}\!\left(\boldsymbol{U}^{\top}\boldsymbol{\Delta}\,\boldsymbol{V}\right)\in\mathbb{R}^k .
\]

\vspace{0.2cm}
\begin{proposition}
\label{prop:equivalences}
With the notation above, the following hold:

\noindent\textbf{(A) Least-squares fitting $\leftrightarrow$ maximizing $\ell_2$-coefficients.}
\begin{equation}
\label{eq:A_equiv}
\min_{\boldsymbol{U}, \boldsymbol{V}} \min_{\boldsymbol{\sigma} \in \mathbb{R}^{k}}
\left\|
\boldsymbol{\Delta} - \boldsymbol{U} \, \mathrm{diag}(\boldsymbol{\sigma}) \, \boldsymbol{V}^{\top}
\right\|_F \leftrightarrow \max_{\boldsymbol{U}, \boldsymbol{V}}
\left\|\boldsymbol{\sigma}(\boldsymbol{U},\boldsymbol{V}, \boldsymbol{\Delta})
\right\|_2
\end{equation}

\noindent\textbf{(B) Cosine similarity with the unweighted dyadic sum $\leftrightarrow$ maximizing $\ell_1$-coefficients.}
\begin{equation}
\label{eq:B_equiv}
\max_{\boldsymbol{U}, \boldsymbol{V}} \ \mathrm{CosSim} \left(\boldsymbol{\Delta},\boldsymbol{U} \boldsymbol{V}^{\top}\right)
\leftrightarrow \max_{\boldsymbol{U}, \boldsymbol{V}} 
\left\|\boldsymbol{\sigma}(\boldsymbol{U},\boldsymbol{V}, \boldsymbol{\Delta})
\right\|_1
\end{equation}
The equivalences are up to the usual sign-alignment of column pairs
$\{(\boldsymbol{u}_j,\boldsymbol{v}_j)\}_{j=1}^k$ (flipping both signs leaves
$\boldsymbol{U}\boldsymbol{V}^\top$ unchanged), which can be chosen to make all
entries of $\boldsymbol{\sigma}(\boldsymbol{U},\boldsymbol{V},\boldsymbol{\Delta})$ nonnegative.
\end{proposition}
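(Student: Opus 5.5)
The plan is to reduce both equivalences to one computation: the Frobenius geometry of the rank-one dyads $\boldsymbol{u}_j\boldsymbol{v}_j^\top$. Because $\boldsymbol{U}^\top\boldsymbol{U}=\boldsymbol{V}^\top\boldsymbol{V}=\boldsymbol{I}_k$, these $k$ dyads are orthonormal in the Frobenius inner product. The rank-one identity already used in the proof of Proposition~\ref{pro:cossim} gives
\[
\langle \boldsymbol{u}_i\boldsymbol{v}_i^\top,\boldsymbol{u}_j\boldsymbol{v}_j^\top\rangle=(\boldsymbol{u}_i^\top\boldsymbol{u}_j)(\boldsymbol{v}_i^\top\boldsymbol{v}_j)=\delta_{ij}.
\]
The same identity gives the coefficient of $\boldsymbol{\Delta}$ along each dyad:
\[
\langle \boldsymbol{\Delta},\boldsymbol{u}_j\boldsymbol{v}_j^\top\rangle=\boldsymbol{u}_j^\top\boldsymbol{\Delta}\boldsymbol{v}_j=\big[\boldsymbol{\sigma}(\boldsymbol{U},\boldsymbol{V},\boldsymbol{\Delta})\big]_j .
\]
Everything else follows from these two facts.

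For (A), fix $(\boldsymbol{U},\boldsymbol{V})$. The map $\boldsymbol{\sigma}\mapsto \boldsymbol{U}\,\mathrm{diag}(\boldsymbol{\sigma})\,\boldsymbol{V}^\top=\sum_j\sigma_j\boldsymbol{u}_j\boldsymbol{v}_j^\top$ is an isometry onto the span of the orthonormal dyads. The inner minimization is therefore an orthogonal projection. Its minimizer is $\boldsymbol{\sigma}^\star=\boldsymbol{\sigma}(\boldsymbol{U},\boldsymbol{V},\boldsymbol{\Delta})$, obtained from the normal equations. By Pythagoras the residual is
\[
\|\boldsymbol{\Delta}\|_F^2-\|\boldsymbol{\sigma}(\boldsymbol{U},\boldsymbol{V},\boldsymbol{\Delta})\|_2^2 .
\]
Since $\|\boldsymbol{\Delta}\|_F$ does not depend on $(\boldsymbol{U},\boldsymbol{V})$, minimizing the residual over the orthonormal pairs is the same problem as maximizing $\|\boldsymbol{\sigma}\|_2$. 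The two problems share maximizers and their optimal values are related by this identity. This is exactly the sense of ``$\leftrightarrow$''. Minimizing the norm or its square makes no difference.

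For (B), I compute both pieces of the cosine directly. The numerator is
\[
\langle\boldsymbol{\Delta},\boldsymbol{U}\boldsymbol{V}^\top\rangle=\sum_j \boldsymbol{u}_j^\top\boldsymbol{\Delta}\boldsymbol{v}_j=\boldsymbol{1}_k^\top\boldsymbol{\sigma}(\boldsymbol{U},\boldsymbol{V},\boldsymbol{\Delta}).
\]
For the denominator, $\|\boldsymbol{U}\boldsymbol{V}^\top\|_F^2=\mathrm{tr}(\boldsymbol{V}\boldsymbol{U}^\top\boldsymbol{U}\boldsymbol{V}^\top)=k$, again using orthonormality. Hence
\[
\mathrm{CosSim}(\boldsymbol{\Delta},\boldsymbol{U}\boldsymbol{V}^\top)=\frac{\boldsymbol{1}^\top\boldsymbol{\sigma}}{\sqrt{k}\,\|\boldsymbol{\Delta}\|_F},
\]
and the factor $\sqrt{k}\,\|\boldsymbol{\Delta}\|_F$ is a constant over the feasible set.

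The remaining step is the sign issue, which I expect to be the only delicate point. Flipping $(\boldsymbol{u}_j,\boldsymbol{v}_j)\mapsto(-\boldsymbol{u}_j,-\boldsymbol{v}_j)$ leaves $\boldsymbol{U}\boldsymbol{V}^\top$ unchanged and leaves $\sigma_j$ unchanged. So that move alone cannot make the entries nonnegative. Instead I will use the single flip $\boldsymbol{v}_j\mapsto-\boldsymbol{v}_j$. It preserves orthonormality, so the new pair is still feasible, and it negates $\sigma_j$.

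First, take any maximizer of $\boldsymbol{1}^\top\boldsymbol{\sigma}$. Every entry $\sigma_j$ must be nonnegative: if some $\sigma_j<0$, flipping $\boldsymbol{v}_j$ would strictly increase the sum. Hence $\boldsymbol{1}^\top\boldsymbol{\sigma}=\|\boldsymbol{\sigma}\|_1$ at any maximizer.

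Second, take any feasible pair. Flipping $\boldsymbol{v}_j$ for every negative $\sigma_j$ gives a feasible pair whose sum equals the original $\|\boldsymbol{\sigma}\|_1$. Therefore $\max\boldsymbol{1}^\top\boldsymbol{\sigma}=\max\|\boldsymbol{\sigma}\|_1$, and the maximizers coincide after this sign-alignment. This proves (B).

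I would phrase the caveat in the statement as this single-vector flip rather than a joint flip. That is what makes the claim correct under the stated convention that ``$\leftrightarrow$'' means equality of optimal sets up to sign-alignment.
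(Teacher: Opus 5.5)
Your proof is correct. For (A), and for the numerator/denominator computation in (B), it is the paper's argument. The paper expands $\|\boldsymbol{\Delta}-\boldsymbol{U}\mathrm{diag}(\boldsymbol{\sigma})\boldsymbol{V}^\top\|_F^2=\|\boldsymbol{\Delta}\|_F^2+\sum_j(\sigma_j^2-2d_j\sigma_j)$ and reads off the unique minimizer $\sigma_j^\star=d_j$. That is your orthogonal-projection/Pythagoras argument written in coordinates. It also obtains $\mathrm{CosSim}(\boldsymbol{\Delta},\boldsymbol{U}\boldsymbol{V}^\top)=\boldsymbol{1}^\top\boldsymbol{\sigma}/(\sqrt{k}\,\|\boldsymbol{\Delta}\|_F)$ the same way you do.

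The one real departure is the sign step, and there your version is the sound one. The paper writes $\langle\boldsymbol{\Delta},\boldsymbol{U}\boldsymbol{V}^\top\rangle=\|\boldsymbol{\sigma}(\boldsymbol{U},\boldsymbol{V},\boldsymbol{\Delta})\|_1$ ``after sign alignment (flip $(\boldsymbol{u}_j,\boldsymbol{v}_j)$ together if needed)''. It then concludes directly from the constancy of the denominator. But, as you observe, the joint flip leaves $\sigma_j=\boldsymbol{u}_j^\top\boldsymbol{\Delta}\boldsymbol{v}_j$ invariant. The single flip $\boldsymbol{v}_j\mapsto-\boldsymbol{v}_j$ does negate $\sigma_j$, but it changes $\boldsymbol{U}\boldsymbol{V}^\top$ and hence the cosine itself.

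So no pointwise identity between the cosine and $\|\boldsymbol{\sigma}\|_1$ holds for a fixed feasible pair. The equivalence holds only at the level of the two optimization problems. Your two-step argument establishes exactly that: every maximizer of $\boldsymbol{1}^\top\boldsymbol{\sigma}$ has $\boldsymbol{\sigma}\ge 0$, and every feasible pair can be sign-corrected to attain its $\ell_1$ norm as a plain sum. Your proposed rewording of the caveat, as a single-vector flip rather than a joint flip, is the right correction to the statement.

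To make the argument airtight, add one line noting that both maxima exist. The feasible set is a product of two Stiefel manifolds, hence compact, and both objectives are continuous.
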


\vspace{0.1cm}
\begin{proof}
\textbf{(A).} Fix $\boldsymbol{U},\boldsymbol{V}$ orthonormal and consider
\[
\min_{\boldsymbol{\sigma}\in\mathbb{R}^k}
\left\|\,
\boldsymbol{\Delta}-\boldsymbol{U}\,\mathrm{diag}(\boldsymbol{\sigma})\,\boldsymbol{V}^\top
\,\right\|_F^2.
\]
Let $\boldsymbol{C}=\boldsymbol{U}^\top\boldsymbol{\Delta}\,\boldsymbol{V}$ and
$\boldsymbol{d}=\mathrm{diag}(\boldsymbol{C})=\boldsymbol{\sigma}(\boldsymbol{U},\boldsymbol{V},\boldsymbol{\Delta})$.
Using orthonormality and the bilinearity of the Frobenius inner product,
\[
\big\langle \boldsymbol{\Delta},\,\boldsymbol{U}\mathrm{diag}(\boldsymbol{\sigma})\boldsymbol{V}^\top\big\rangle
=\big\langle \boldsymbol{C},\,\mathrm{diag}(\boldsymbol{\sigma})\big\rangle
=\sum_{j=1}^k d_j\,\sigma_j .
\]
Therefore
\begin{equation}
\label{eq:proj_2}
\begin{aligned}
& \left\|
\boldsymbol{\Delta}-\boldsymbol{U}\mathrm{diag}(\boldsymbol{\sigma})\boldsymbol{V}^\top
\right\|_F^2 
\\ & =\|\boldsymbol{\Delta}\|_F^2+\|\mathrm{diag}(\boldsymbol{\sigma})\|_F^2-2\sum_{j=1}^k d_j\,\sigma_j
\\ & =\|\boldsymbol{\Delta}\|_F^2+\sum_{j=1}^k(\sigma_j^2-2d_j\sigma_j).
\end{aligned}
\end{equation}
The unique minimizer is $\sigma_j^\star=d_j$ for all $j$, i.e.,
$\boldsymbol{\sigma}^\star=\boldsymbol{d}=\boldsymbol{\sigma}(\boldsymbol{U},\boldsymbol{V},\boldsymbol{\Delta})$,
and the minimal value equals
\[
\|\boldsymbol{\Delta}\|_F^2-\sum_{j=1}^k d_j^2
=\|\boldsymbol{\Delta}\|_F^2-\big\|\boldsymbol{\sigma}(\boldsymbol{U},\boldsymbol{V},\boldsymbol{\Delta})\big\|_2^2.
\]
Hence, after inner minimization over $\boldsymbol{\sigma}$, the outer optimization over
$\boldsymbol{U},\boldsymbol{V}$ minimizes the residual iff it \emph{maximizes}
$\|\boldsymbol{\sigma}(\boldsymbol{U},\boldsymbol{V},\boldsymbol{\Delta})\|_2$, proving
Eq.~\eqref{eq:A_equiv}.

\noindent\textbf{(B).} Using $\langle \boldsymbol{A},\boldsymbol{B}\rangle=\mathrm{tr}(\boldsymbol{A}^\top\boldsymbol{B})$, we have:
\begin{equation}
\begin{aligned}
\big\langle \boldsymbol{\Delta},\,\boldsymbol{U}\boldsymbol{V}^\top\big\rangle & 
=\mathrm{tr}(\boldsymbol{U}^\top\boldsymbol{\Delta}\,\boldsymbol{V})
=\sum_{j=1}^k (\boldsymbol{U}^\top\boldsymbol{\Delta}\,\boldsymbol{V})_{jj}
\\&
=\sum_{j=1}^k \sigma_j(\boldsymbol{U},\boldsymbol{V},\boldsymbol{\Delta})
=\big\|\boldsymbol{\sigma}(\boldsymbol{U},\boldsymbol{V},\boldsymbol{\Delta})\big\|_1
\end{aligned}
\end{equation}
after sign alignment (flip $(\boldsymbol{u}_j,\boldsymbol{v}_j)$ together if needed).
Moreover, $\|\boldsymbol{U}\boldsymbol{V}^\top\|_F^2
=\mathrm{tr}(\boldsymbol{V}\boldsymbol{U}^\top\boldsymbol{U}\boldsymbol{V}^\top)
=\mathrm{tr}(\boldsymbol{V}\boldsymbol{V}^\top)=k$.
Therefore
\begin{equation}
\begin{aligned}
\mathrm{CosSim}\!\left(\boldsymbol{\Delta},\,\boldsymbol{U}\boldsymbol{V}^\top\right)
& =\frac{\langle \boldsymbol{\Delta},\,\boldsymbol{U}\boldsymbol{V}^\top\rangle}{\|\boldsymbol{\Delta}\|_F\,\|\boldsymbol{U}\boldsymbol{V}^\top\|_F}
\\ & =\frac{\big\|\boldsymbol{\sigma}(\boldsymbol{U},\boldsymbol{V},\boldsymbol{\Delta})\big\|_1}{\|\boldsymbol{\Delta}\|_F\,\sqrt{k}} .
\end{aligned}
\end{equation}
Since the denominator is independent of $(\boldsymbol{U},\boldsymbol{V})$, maximizing the cosine
is equivalent to maximizing $\|\boldsymbol{\sigma}(\boldsymbol{U},\boldsymbol{V},\boldsymbol{\Delta})\|_1$,
establishing Eq.~\eqref{eq:B_equiv}.
\end{proof}

\noindent
\textbf{Interpretation.}
Each entry of $\boldsymbol{\sigma}(\boldsymbol{U},\boldsymbol{V},\boldsymbol{\Delta})
=\mathrm{diag}(\boldsymbol{U}^\top\boldsymbol{\Delta}\boldsymbol{V})$ is a \emph{bi-directional projection}
of $\boldsymbol{\Delta}$ onto the dyad $\boldsymbol{u}_j\boldsymbol{v}_j^\top$ (input- and output-side alignments jointly).
Part (A) shows that the best reconstruction by dyads in the basis $(\boldsymbol{U},\boldsymbol{V})$ is achieved when the
coefficients equal these projections, and the residual decreases as the $\ell_2$ energy of the projections increases.
Part (B) shows that the cosine with the \emph{unweighted} dyadic sum $\boldsymbol{U}\boldsymbol{V}^\top$ is (up to a constant)
the $\ell_1$ sum of these projections. Consequently, a higher cosine indicates stronger ability of the basis $(\boldsymbol{U},\boldsymbol{V})$
to \emph{express} (encode) the knowledge contained in $\boldsymbol{\Delta}$—exactly mirroring the main-text view where
$\boldsymbol{R}(s,t)$ aggregates bi-directional projections to quantify how effectively task $s$ represents task $t$.

\section{Theoretical Analysis of Merging in the Cover Space}
\label{app:D}

In this section, we first provide a theoretical justification for whitening-based construction of cover space. The empirical results provide an intuitive and solid validation of the proposed objective in Eq.~\eqref{eq:sur-obj}. We also present visualization of task representations in cover space and investigate the effect of structural mask. Finally, we provide a toy example to intuitively demonstrate the crucial role that shared cover space plays in directional preservation.

\vspace{0.5em}
\noindent
\textbf{Theoretical Analysis of Whitening-based Cover Space Construction.}
First, by expanding and applying the Hadamard norm inequality, we obtain:
\begin{equation} 
\label{eq:upperbound_obj} 
\begin{aligned} 
\sum_{i=1}^{T}\sum_{j=1}^{r} & \left\|\boldsymbol{\sigma}(\widetilde{\boldsymbol{U}},\widetilde{\boldsymbol{V}},\boldsymbol{u}_{i}^{j} \boldsymbol{v}_{i}^{j \top} )\right\|_{2}^{2} \\& = \sum_{i=1}^{T} \sum_{j=1}^{r} \left\|\mathrm{diag} \left( (\widetilde{\boldsymbol{U}}^{\top} \boldsymbol{u}_{i}^{j} ) (\boldsymbol{v}_{i}^{j \top} \widetilde{\boldsymbol{V}})\right)\right\|_{2}^{2} \\& = \sum_{i=1}^{T}\sum_{j=1}^{r} \left\| \left(\widetilde{\boldsymbol{U}}^{\top} \boldsymbol{u}_{i}^{j}\right) \odot \left(\widetilde{\boldsymbol{V}}^{\top} \boldsymbol{v}_{i}^{j}\right) \right\|_{2}^{2} \\& = \left\|\left(\widetilde{\boldsymbol{U}}^{\top} \boldsymbol{U} \right) \odot \left(\widetilde{\boldsymbol{V}}^{\top} \boldsymbol{V}\right)\right\|_{F}^{2} \\& \le \left\|\widetilde{\boldsymbol{U}}^{\top} \boldsymbol{U} \right\|_{F}^{2}\left\|\widetilde{\boldsymbol{V}}^{\top} \boldsymbol{V} \right\|_{F}^{2}, \end{aligned} 
\end{equation}
where $\boldsymbol{U} = ([\boldsymbol{U}_1^{(r)},\ldots,\boldsymbol{U}_T^{(r)}])$ and $\boldsymbol{V} = ([\boldsymbol{V}_1^{(r)},\ldots,\boldsymbol{V}_T^{(r)}])$.  
This shows that the surrogate objective is upper-bounded by two alignment terms $\left\|\widetilde{\boldsymbol{U}}^{\top} \boldsymbol{U} \right\|_{F}^{2}$ and $\left\|\widetilde{\boldsymbol{V}}^{\top} \boldsymbol{V} \right\|_{F}^{2}$, which characterize how well $\widetilde{\boldsymbol{U}}$ aligns with all $\boldsymbol{U}_i^{(r)}$ and how well $\widetilde{\boldsymbol{V}}$ aligns with all $\boldsymbol{V}_i^{(r)}$.

Next, using the diagonal-trace inequality and arithmetic mean inequality, we have:
\begin{equation}
\label{eq:lowerbound_obj}
\begin{aligned}
\left\|\widetilde{\boldsymbol{U}}^{\top} \boldsymbol{U}  \right\|_{F}^{2}& \ge \left\|\mathrm{diag} \left(\widetilde{\boldsymbol{U}}^{\top} \boldsymbol{U}  \right)\right\|_{2}^{2} \ge \frac{1}{k} \mathrm{tr} \left(\widetilde{\boldsymbol{U}}^{\top} \boldsymbol{U}\right)^{2}, \\
\left\|\widetilde{\boldsymbol{V}}^{\top} \boldsymbol{V}  \right\|_{F}^{2}& \ge \left\|\mathrm{diag} \left(\widetilde{\boldsymbol{V}}^{\top} \boldsymbol{V}  \right)\right\|_{2}^{2} \ge \frac{1}{k} \mathrm{tr} \left(\widetilde{\boldsymbol{V}}^{\top} \boldsymbol{V}\right)^{2},
\end{aligned}
\end{equation}
we see that maximizing the surrogate in Eq.~\eqref{eq:sur-obj} effectively corresponds to maximizing these trace terms.  
Hence, the optimal $(\widetilde{\boldsymbol{U}},\widetilde{\boldsymbol{V}})$ should maximize the average alignment with each task knowledge directions.

Finally, whitening~\cite{schonemann1966}  naturally achieves this objective:
\begin{equation}
\begin{aligned}
& \widetilde{\boldsymbol{U}} = \arg\max_{\widetilde{\boldsymbol{U}}^{\top} \widetilde{\boldsymbol{U}}=\boldsymbol{I}} \Big\langle \boldsymbol{U},\;\widetilde{\boldsymbol{U}}\Big\rangle, 
\\ &
\widetilde{\boldsymbol{V}} = \arg\max_{\widetilde{\boldsymbol{V}}^{\top} \widetilde{\boldsymbol{V}}=\boldsymbol{I}} \Big\langle \boldsymbol{V},\;\widetilde{\boldsymbol{V}}\Big\rangle.
\end{aligned}
\end{equation}
This whitening-based alignment maximizes $\mathrm{tr}\!\left(\widetilde{\boldsymbol{U}}^{\top} \boldsymbol{U}\right)$ and $\mathrm{tr}\!\left(\widetilde{\boldsymbol{V}}^{\top} \boldsymbol{V}\right)$,  
which correspond to the lower-bound traces in Eq.~\eqref{eq:lowerbound_obj}.  
Therefore, whitening can be interpreted as an efficient approximate solution to the subspace alignment problem in Eq.~\eqref{eq:dir_objective_final}, providing a computationally simple yet theoretically justified way to construct shared cover basis.

\noindent
\textbf{Validation of Objective for Cover Basis.}
We further verify the rationality of the proposed trace-based objective by directly optimizing it to obtain the cover basis that minimizes reconstruction error, and compare the result with the whitening-based approximation.  
Our goal is to confirm that whitening indeed provides a close approximation to the optimal cover subspace defined by Eq.~\eqref{eq:sur-obj}.  

To optimize a set of rank-$k$ orthonormal bases in a $d$-dimensional space, we employ a differentiable parameterization using skew-symmetric matrices~\cite{pmlr-v97-lezcano-casado19a}.  
Specifically, any orthonormal matrix $\widetilde{\boldsymbol{U}}, \widetilde{\boldsymbol{V}}\in\mathbb{R}^{d\times k}$ can be expressed as
\begin{equation}
\widetilde{\boldsymbol{U}} = \exp(\boldsymbol{A})\,\widetilde{\boldsymbol{U}}_{0}, \quad \widetilde{\boldsymbol{V}} = \exp(\boldsymbol{B})\,\widetilde{\boldsymbol{V}}_{0}
\end{equation}
where $\boldsymbol{A}, \boldsymbol{B}\in\mathbb{R}^{d\times d}$ is a skew-symmetric matrix ($\boldsymbol{A}^\top=-\boldsymbol{A}, \boldsymbol{B}^\top=-\boldsymbol{B}$) and $\widetilde{\boldsymbol{U}}_0, \widetilde{\boldsymbol{V}}_0$ is an initial orthonormal basis.  
This parameterization utilizes the isomorphic properties of Lie groups and rotation operations~\cite{pmlr-v97-lezcano-casado19a} and ensures that $\widetilde{\boldsymbol{U}}, \widetilde{\boldsymbol{V}}$ always remains on the \emph{Stiefel manifold} $\mathcal{S}(d, k)$, thus preserving orthogonality during gradient-based optimization.  

To further validate the rationality of our optimization target, we optimize the cover basis following Algorithm~\ref{alg:opt_cover_basis}  
Starting from a poor initialization $\left(\widetilde{\boldsymbol{U}}_0, \widetilde{\boldsymbol{V}}_0\right)$ obtained by decomposing the merged task vector from TA~\cite{ilharco2023task} through SVD, we perform iterative optimization toward the objective in Eq.~\eqref{eq:sur-obj}.  Figure~\ref{fig_optimize} illustrates the optimization trajectory of the alignment score (left) and its logarithmic scale counterpart (right), along with the corresponding accuracy evolution by using corresponding cover basis.  
Initially, the alignment score is low, indicating poor consistency between the cover basis and the directions of each task.  
As optimization proceeds, both the alignment score and downstream task accuracy increase steadily and exhibit similar trends, confirming that higher alignment directly translates to improved task retention and generalization.  
In particular, the convergence of the alignment score to a stable maximum implies that the optimized basis successfully captures the shared directional geometry among all tasks.  

Notably, the final alignment and accuracy achieved by the optimized cover basis closely match those obtained via our whitening-based construction, validating that whitening provides a strong approximation to the optimal solution of the cover subspace objective.  Therefore, this empirical results confirm the theoretical justification that whitening acts as an efficient surrogate optimizer for constructing a shared cover basis.

\begin{algorithm}[t]
\caption{Optimization of Cover Basis via Skew-Symmetric Parameterization}
\label{alg:cover_opt}
\begin{algorithmic}[1]
\State \textbf{Input:} Task directional geometry $\{\boldsymbol{u}_i^j\boldsymbol{v}_i^{j\top}\}_{j=1}^{r}$, Task vectors$\{\boldsymbol{\Delta_{i}}\}_{i=1}^{T}$, learning rate $\eta$, maximum iteration steps $L$
\State \textbf{Output:} Optimized cover basis $\widetilde{\boldsymbol{U}}, \widetilde{\boldsymbol{V}}$
\vspace{0.5em}
\State {\color{cvprblue}{$\triangleright$ \textbf{Step 1:} Initialize cover basis }}
\State Initialize $\boldsymbol{A}, \boldsymbol{B} \in \mathbb{R}^{d\times d}$ as zero matrices
\State $\widetilde{\boldsymbol{U}}_{0},  \widetilde{\boldsymbol{V}}_{0} = \mathrm{SVD}(\sum_{i=1}^{T} \boldsymbol{\Delta}_{i})$
\vspace{0.5em}
\State {\color{cvprblue}{$\triangleright$ \textbf{Step 2:} Iterative optimization of cover basis}}
\For{$\ell = 1 \to L$}
    \State $\widetilde{\boldsymbol{U}} \gets \exp(\boldsymbol{A}-\boldsymbol{A}^{\top})\,\widetilde{\boldsymbol{U}}_{0}$,\quad
    $\widetilde{\boldsymbol{V}} \gets \exp(\boldsymbol{B} - \boldsymbol{B}^{\top})\,\widetilde{\boldsymbol{V}}_{0}$
    \State Compute alignment score $\mathcal{L}$ from Eq.~\eqref{eq:sur-obj}
    \State Compute gradients $\nabla_{\boldsymbol{A}}\mathcal{L}$, $\nabla_{\boldsymbol{B}}\mathcal{L}$
    \State $\boldsymbol{A} \leftarrow \boldsymbol{A} + \eta \nabla_{\boldsymbol{A}}\mathcal{L}$,\quad
        $\boldsymbol{B} \leftarrow \boldsymbol{B} + \eta \nabla_{\boldsymbol{B}}\mathcal{L}$
\EndFor
\State \textbf{return} $\widetilde{\boldsymbol{U}}, \widetilde{\boldsymbol{V}}$
\end{algorithmic}
\label{alg:opt_cover_basis}
\end{algorithm}

\begin{figure*}[!t]
    \centering
    \begin{subfigure}[!t]{0.4\textwidth}
        \centering
        \includegraphics[width=\textwidth]{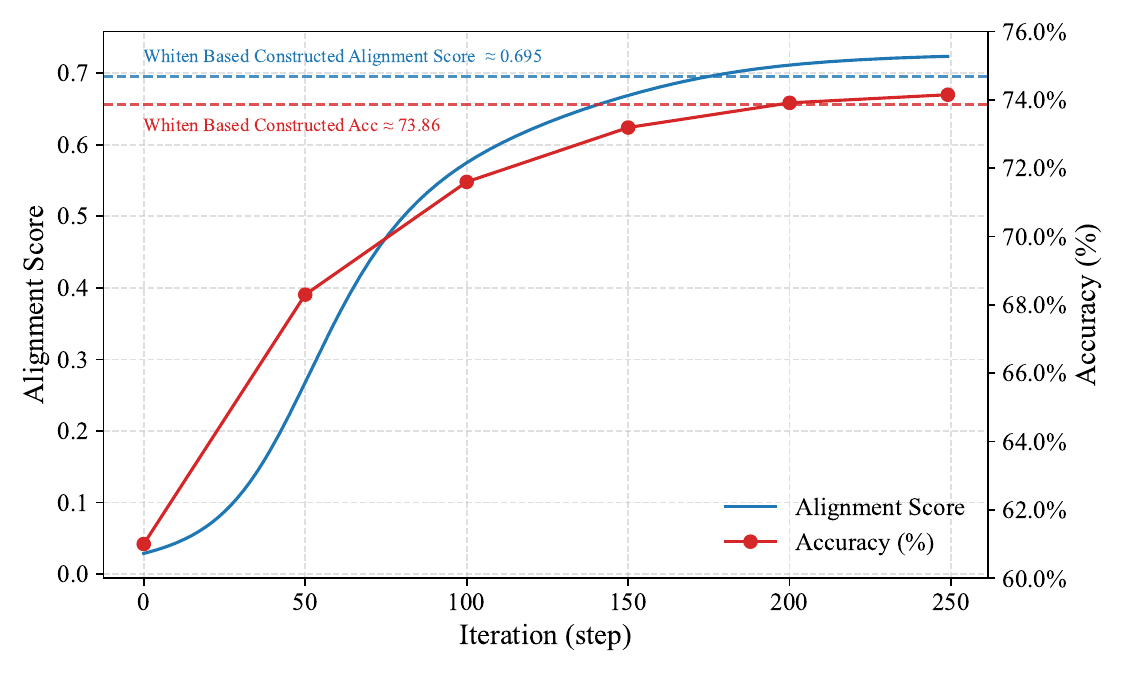}
        \caption{ Linear Scale
        }
    \end{subfigure}
    \hspace{0.03\textwidth}
    \begin{subfigure}[!t]{0.4\textwidth}
        \centering
        \includegraphics[width=\textwidth]{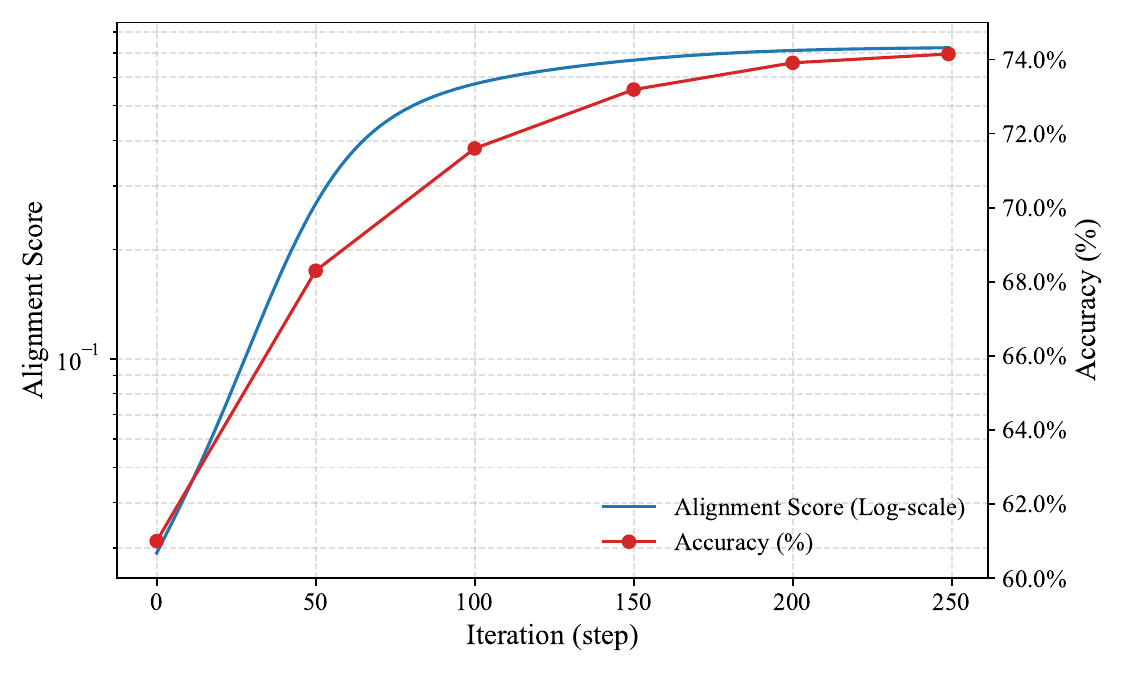}
        \caption{ Semi-log scale
        }
    \end{subfigure}
    \caption{
    \textbf{Optimization trajectory of the cover basis.}
    We start from a poor initialization $\left(\widetilde{\boldsymbol{U}}_0, \widetilde{\boldsymbol{V}}_0\right)$ obtained by SVD of task vectors from TA and iteratively optimize the surrogate objective in Eq.~\eqref{eq:sur-obj}.  
    (a) The optimization trace of alignment score and average normalized accuracy of DC-Merge under linear scale.  
    (b) The same process in semi-log scale for better visibility of early-stage dynamics.  Both alignment score and accuracy increase monotonically during training and converge to a stable optimum, demonstrating that higher alignment directly corresponds to stronger task retention.  
    The final accuracy and alignment closely match those of the whitening-based approximation, confirming that whitening yields a near-optimal cover basis.
    }
    \label{fig_optimize}
    \vspace{-0.3cm}
\end{figure*}

\noindent
\textbf{Visualization of Task Representations in Cover Space.}
To provide an intuitive illustration of how each task vector is represented in the cover space constructed by Algorithm~\ref{alg:cover_opt}, we visualize the representation in Figure~\ref{fig:mi_cover_space_part1} using a block-wise averaged aggregation. We perform the optimization for 250 steps on \model{ViT-B-32} 8-task benchmark in LoRA setting, and the size of each block is $r\times r$.

\noindent
\textbf{Effect of Structural Masking.}
After projection and merging within the cover space, a block-diagonal mask $\boldsymbol{\mathcal{M}}$ is applied:
\[
\boldsymbol{\Delta}_M = \widetilde{\boldsymbol{U}} \left(\hat{\boldsymbol{M}} \odot \boldsymbol{\mathcal{M}}\right) \widetilde{\boldsymbol{V}}^{\top}.
\]
We now investigate the underlying mechanism of the block-diagonal mask. Consider $T$ task vectors $\{\boldsymbol{\Delta}_i\}_{i=1}^T$ that share approximately directional geometry. Suppose that there exist shared orthonormal basis $\widetilde{\boldsymbol{U}}, \widetilde{\boldsymbol{V}}$ such that each task vector can be represented as:
\begin{equation}
\boldsymbol{\Delta}_i \approx \widetilde{\boldsymbol{U}} \, \mathrm{diag}(\boldsymbol{\sigma}_i) \, \widetilde{\boldsymbol{V}}^{\top}, 
\end{equation}
with $ \widetilde{\boldsymbol{U}}^{\top}\widetilde{\boldsymbol{U}} = \widetilde{\boldsymbol{V}}^{\top}\widetilde{\boldsymbol{V}} = \boldsymbol{I}$.
In this case, the merged model vector obtained by summation or averaging becomes:
\begin{equation}
\sum_{i=1}^{T} \boldsymbol{\Delta}_i 
\approx \widetilde{\boldsymbol{U}} \left(\sum_{i=1}^{T} \mathrm{diag}(\boldsymbol{\sigma}_i)\right) \widetilde{\boldsymbol{V}}^{\top},
\end{equation}
which clearly preserves the original knowledge directions $\widetilde{\boldsymbol{U}}$ and $\widetilde{\boldsymbol{V}}$ and $\boldsymbol{\sigma}_i$ is the corresponding optimal coefficients under $\widetilde{\boldsymbol{U}}$ and $\widetilde{\boldsymbol{V}}$ for $\boldsymbol{\Delta}_i$.  
That is, merging under a fixed cover basis ensures that the directional geometry of all task vectors remains unchanged while only the coefficients are combined. We construct a pair of cover basis under which task representations are largely distributed along diagonal blocks. When restricted to these diagonal components, merging does not change task directions. The mask explicitly suppresses off-diagonal elements to remove cross-task directional inconsistency that induces task interference.

The size of mask controls the trade-off between \textit{directional preservation} and \textit{individual task reconstruction fidelity}.  
In the extreme case where the mask size is $1$ (i.e., only diagonal elements retained), merging reduces to purely aggregating the aligned components without mixing any directions, which perfectly preserves subspace directions but sacrifices per-task reconstruction fidelity.  
As the mask size increases, more cross-task interactions are included, improving reconstruction of each individual task at the cost of slight directional deviation.  
Therefore, $\boldsymbol{\mathcal{M}}$ implicitly balances between directional consistency and task-specific fidelity.

\begin{figure}[H]
    \centering
    \includegraphics[width=\linewidth]{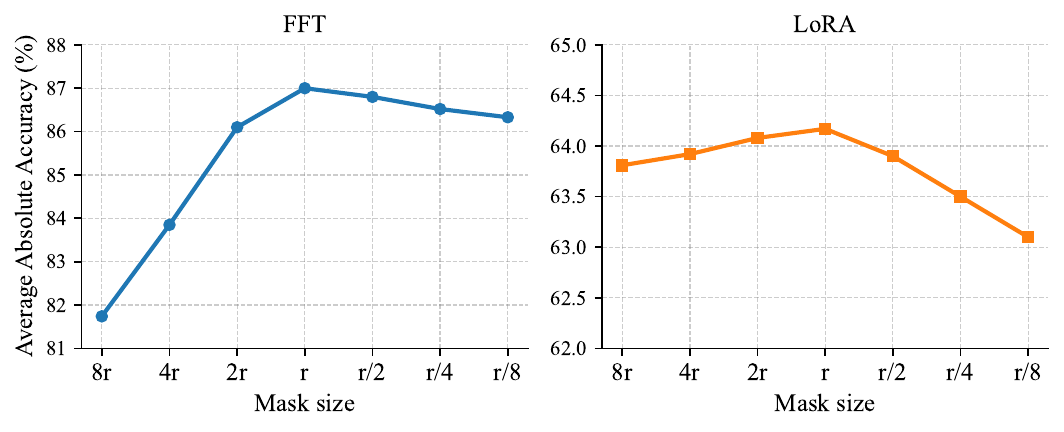}
    \caption{
    The average absolute accuracy of DC-Merge with mask size $\{8r,4r,2r,r,r/2,r/4,r/8\}$ in FFT (left) and LoRA (right) settings. Performance peaks near $r$ for FFT and shows a milder variation for LoRA (slightly improves till $r$ then decreases), indicating that moderate masks retain performance while overly small/large masks degrade it. The results are based on \model{ViT-B-32} 8-task benchmark.
    }
    \label{fig:mask_size_ablation}
    \vspace{-0.2cm}
\end{figure}


\noindent
\textbf{Illustrative Example.}
We provide a simple two-task example to intuitively demonstrate the advantage of merging within the shared cover space.  
Consider two tasks, each represented by a single knowledge component:
\begin{equation*}
\begin{aligned}
&u_1 = v_1 = [1,\,0]^{\top} \\&
u_2 = v_2 = [0.1104,\,0.9939]^{\top}.
\end{aligned}
\end{equation*}
When directly merging in the parameter space,
\begin{equation*}
\boldsymbol{\Delta}_M = u_1 v_1^{\top} + u_2 v_2^{\top} = \begin{bmatrix}
  1.0121& 0.1098\\
  0.1098 & 0.9878
\end{bmatrix}
\end{equation*}
the resulting SVD gives principal directions:
\begin{equation*}
\begin{aligned}
&u_{1}' = v'_1 = [0.7451,\,0.6669]^{\top}, 
\\& u'_2 = v'_2 = [0.6669,\,-0.7451]^{\top},
\end{aligned}
\end{equation*}
which significantly deviate from the original task directions, implying that direct merging distorts the directional geometric of task knowledge.

In contrast, when get shared cover basis by whitening the concatenation per-task knowledge basis  before merging, we have:
\begin{equation*}
\widetilde{\boldsymbol{U}} = \widetilde{\boldsymbol{V}} = \left[u_{1}', u_{2}'\right],
\end{equation*}
where $u_{1}' = \left[0.9985, -0.0533\right]^{\top}, u_{2}' = [0.0533, 0.9985]$. When only using a diagonal mask $\mathcal{M}$, the resulting directional equal to $\widetilde{\boldsymbol{U}},\widetilde{\boldsymbol{V}}$ remains nearly aligned with the original task directions.  This demonstrates that whitening preserves the cover geometry of knowledge directions and prevents directional shift during model merging. The visualization of this result can refer to Figure~\ref{fig:illustrative_example}.

\begin{figure}[ht]
    \centering
    \includegraphics[width=\linewidth, trim={100pt 15pt 150pt 15pt}, clip]{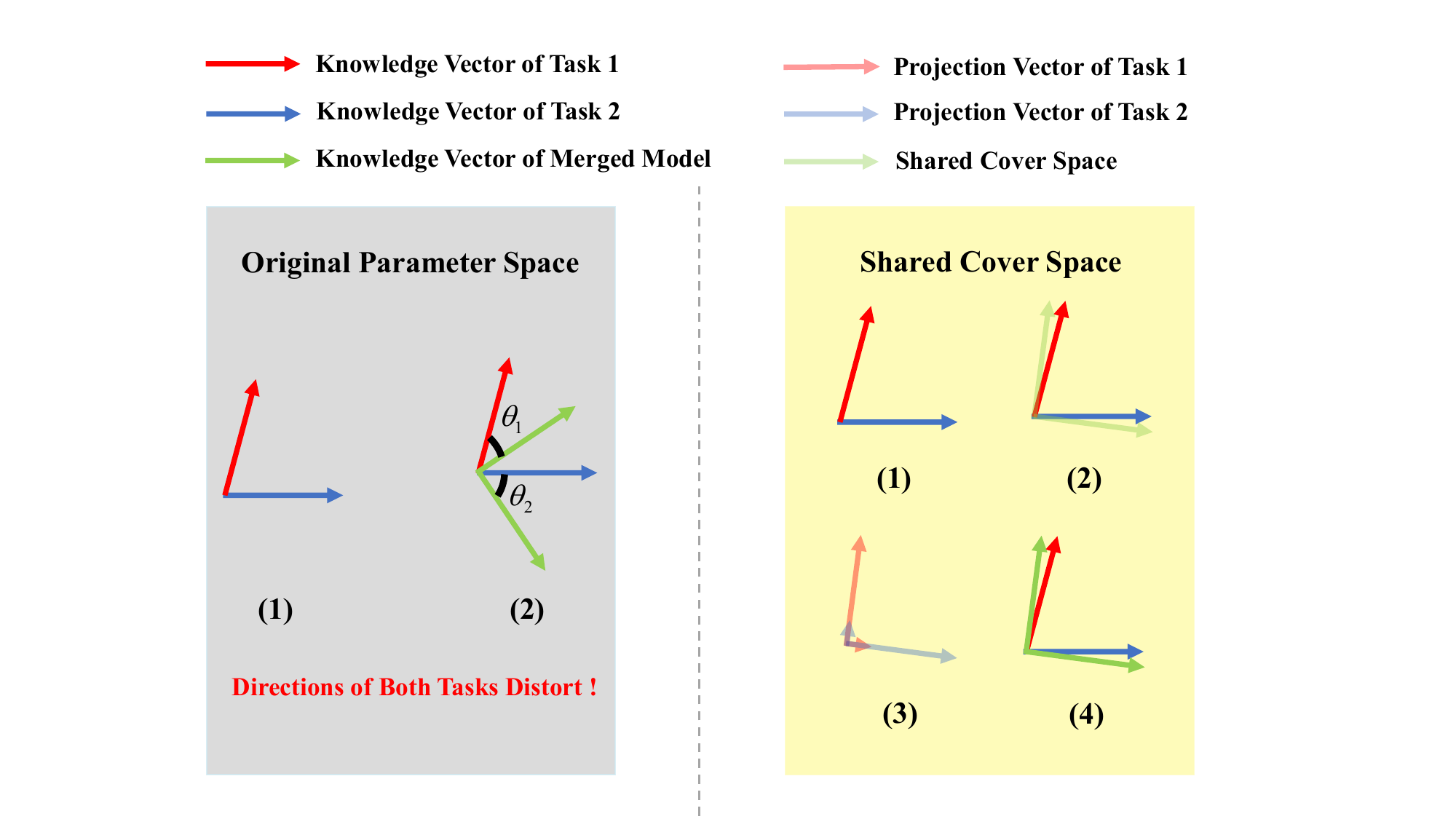}
    \caption{Illustration of mitigating directional shift when merging in the shared cover space. We take $u_1$ and $u_2$ as example. Left: (1) The direction of $u_1$ and $u_2$. (2) Directly merging in the original parameter space yields $u_1'$ and $u_2'$, causing severe directional deviation. Right: (1) The direction of $u_1$ and $u_2$. (2) Obtain $u_1'$ and $u_2'$ by whitening. (3) Project $u_1$ and $u_2$ onto the space spanned by $u_1'$ and $u_2'$. (4) Aggregate the projections along the directions of $u_1'$ and $u_2'$ respectively as merged knowledge vectors, which significantly alleviates directional shift.
    }
    \label{fig:illustrative_example}
    \vspace{-0.3cm}
\end{figure}

\begin{figure*}[!t]
    \centering
    \begin{subfigure}[!t]{0.32\textwidth}
        \centering
        \includegraphics[width=\textwidth, height=4.1cm]{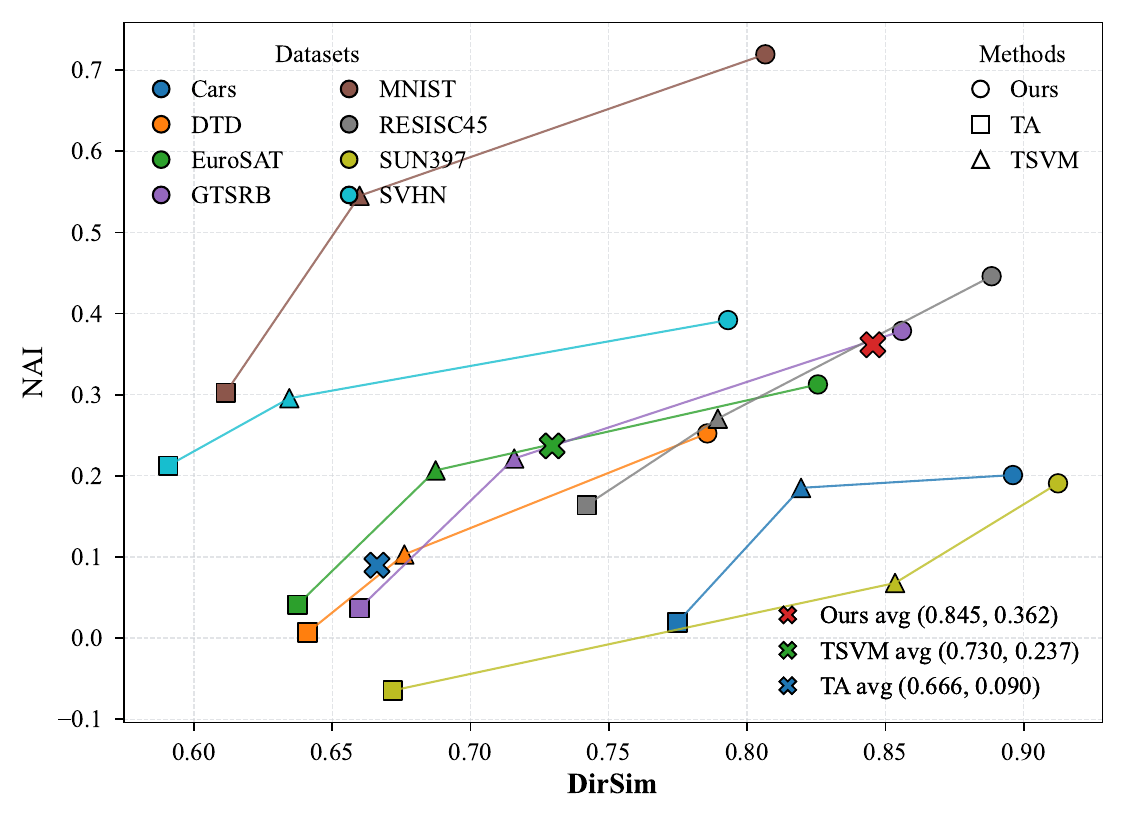}
        \caption{}
        \label{fig_mao}
    \end{subfigure}
    \hfill
    \begin{subfigure}[!t]{0.30\textwidth}
        \centering
        \includegraphics[width=\textwidth, height=4.3cm]{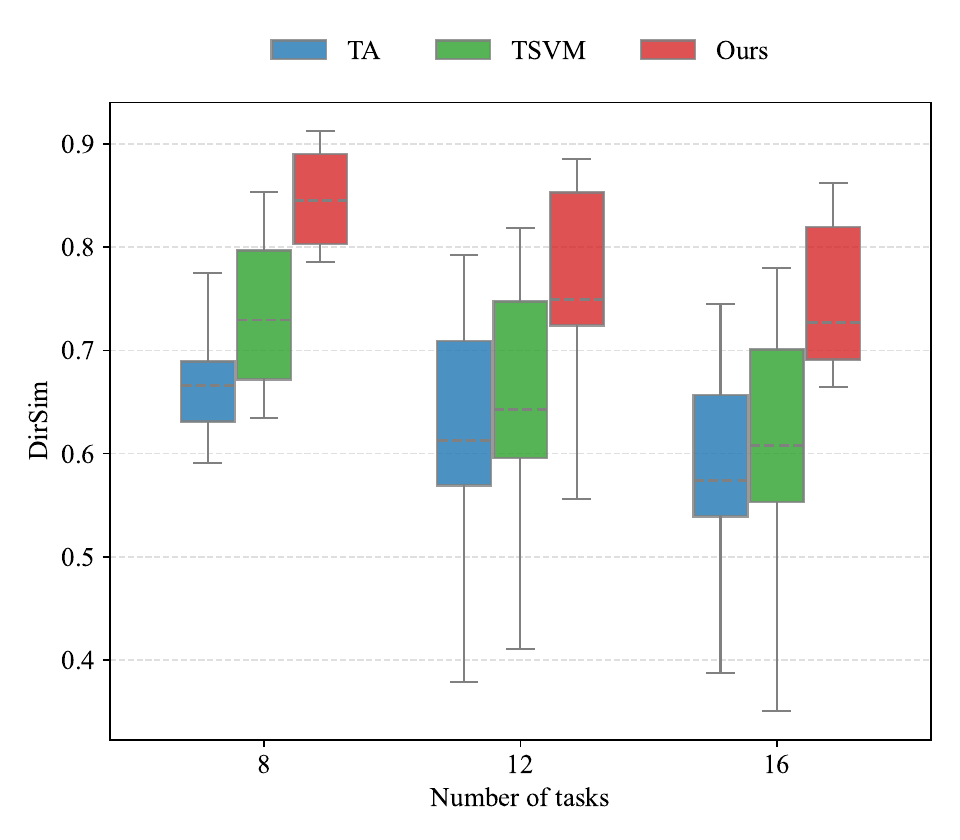}
        \caption{}
        \label{fig_box_dir}
    \end{subfigure}
    \hfill
    \begin{subfigure}[!t]{0.33\textwidth}
        \centering
        \includegraphics[width=\textwidth, height=4.3cm]{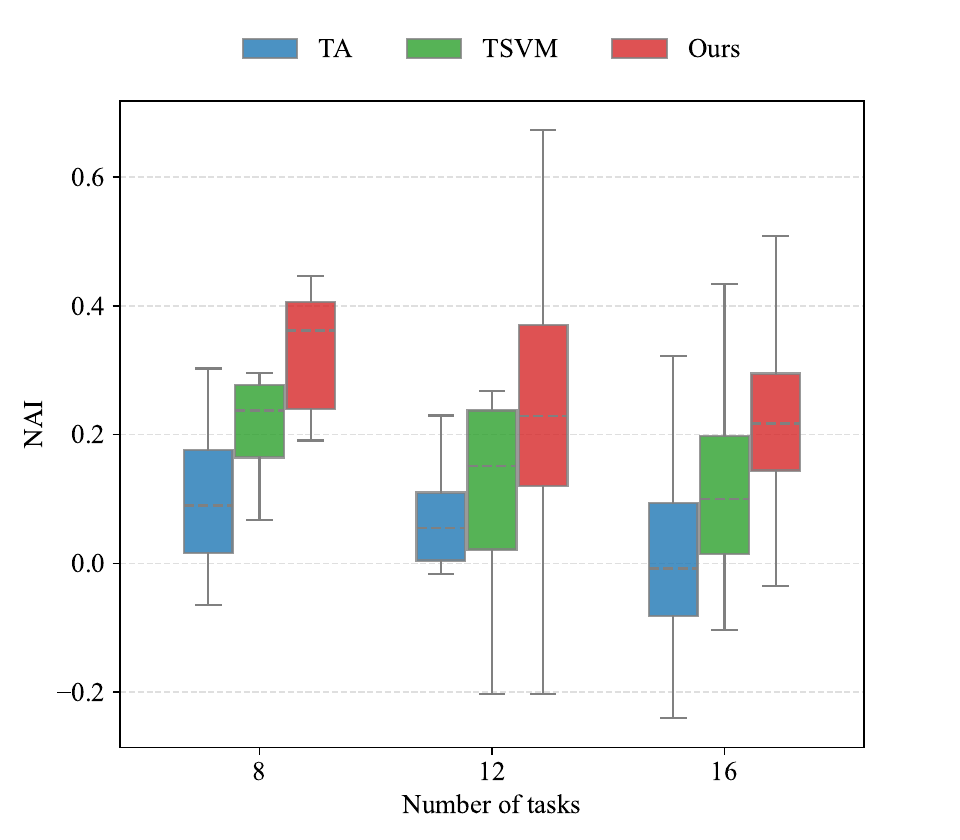}
        \caption{}
        \label{fig_box_nai}
    \end{subfigure}
    \vspace{-0.2cm}
    \caption{
    (a) Correlation of task-wise performance with \(\mathrm{DirSim}\). The result is based on \model{ViT-B-32} 8-task benchmark with our checkpoints, while we utilize checkpoints provided by KnOTS~\cite{stoica2024knots} for Figure~\ref{fig_metric_acc}.
    (b) Layer-averaged projected $\mathrm{DirSim}$ between merged multi-task vector and original task vectors for varying number of tasks.  
    (c) Task-wise NAI of the merged model for varying number of tasks. (b) and (c) are based on merging LoRA fine-tuned \model{ViT-B-32} models.
    }
    \vspace{-0.3cm}
    \label{fig_metric_boxplot}
\end{figure*}

\section{Experimental Details}
\label{app:E}
In this section, we first provide an overview of datasets and evaluation metrics employed in our experiments. We then present the implementation details of our experiments in depth and conclude with a discussion of different smoothing strategies. 
\subsection{Datasets and Metrics}
We present the datasets in the vision model benchmarks as follows. For full fine-tuning setting, the 8-task benchmark comprises the following datasets: Cars~\cite{cars}, DTD~\cite{dtd}, EuroSAT~\cite{eurosat}, GTSRB~\cite{gtsrb}, MNIST~\cite{MNIST}, RESISC45~\cite{cheng2017remote}, SUN397~\cite{sun397} and SVHN~\cite{svhn}.
The 14-task benchmark extends the previous one by incorporating six additional datasets: CIFAR100~\cite{krizhevsky2009learning}, STL10~\cite{coates_analysis_2011}, Flowers102~\cite{nilsback_automated_2008}, OxfordIIITPet~\cite{parkhi_cats_2012}, PCAM~\cite{veeling_rotation_2018} and FER2013~\cite{goodfellow_challenges_2013}.
The 20-task benchmark adds another six datasets to the 14-task configuration: EMNIST~\cite{cohen_emnist_2017}, CIFAR10~\cite{krizhevsky2009learning}, Food101~\cite{bossard_food-101_2014}, FashionMNIST~\cite{xiao_fashion-mnist_2017}, RenderedSST2~\cite{socher_recursive_nodate} and KMNIST~\cite{clanuwat_deep_2018}.

For LoRA setting, the 8-task benchmark includes the same datasets as in the full fine-tuning setting. The 12-task benchmark extends the previous one by introducing four new datasets: CIFAR100~\cite{krizhevsky2009learning}, Flowers102~\cite{nilsback_automated_2008}, OxfordIIITPet~\cite{parkhi_cats_2012} and STL10~\cite{coates_analysis_2011}. The 16-task benchmark further adds four datasets to the 12-task configuration: FER2013~\cite{goodfellow_challenges_2013}, CIFAR10~\cite{krizhevsky2009learning}, FashionMNIST~\cite{xiao_fashion-mnist_2017} and RenderedSST2~\cite{socher_recursive_nodate}.

The MM-MergeBench~\cite{zeng2025parameter} contain 8 multimodal datasets as seen tasks: ScienceQA~\cite{lu2022learn}, ImageNet~\cite{deng2009imagenet}, VQAv2~\cite{goyal2017making}, REC-COCO~\cite{kazemzadeh2014referitgame, mao2016generation}, OCRVQA~\cite{mishra2019ocr}, Flickr30k~\cite{plummer2015flickr30k}, VizWiz-caption~\cite{gurari2018vizwiz} and IconQA~\cite{lu2021iconqa}, along with 4 additional datasets as unseen tasks for generalizability evaluation: ImageNet-R~\cite{hendrycks2021many}, AOKVQA~\cite{schwenk2022okvqa}, Screen2Word~\cite{wang2021screen2words}, TabMWP~\cite{lu2022dynamic}. 

Since models fine-tuned on different datasets exhibit varying absolute accuracies, we measure the performance of the merged model by its \textit{average normalized accuracy}:
\begin{equation}
    \text{Average\,  Normalized\,  Accuracy}=\frac{1}{T}\sum_{i=1}^T \frac{acc(\theta_M,i)}{acc(\theta_i,i)}
\end{equation}
where $acc(\theta_i,i)$ denotes the absolute accuracy of the model fine-tuned on task $i$, $acc(\theta_M,i)$ denotes the absolute accuracy of the merged model on task $i$, and $T$ is the total number of tasks. 

\textit{Normalized Accuracy Improvement} (NAI) is another commonly-used metric which incorporates zero-shot performance of each task. NAI of task $i$ is defined as
\begin{equation}
    \text{NAI}(\theta_M, \theta_i;\theta_0)=\frac{acc(\theta_M,i)-acc(\theta_0,i)}{acc(\theta_i,i)-acc(\theta_0,i)}
\end{equation}
where $acc(\theta_0,i)$ denotes the zero-shot performance on task $i$. We use NAI to evaluate the task-wise performance of merged models in Figure~\ref{fig_mag_dir},~\ref{fig_metric_acc} and~\ref{fig_metric_boxplot}.

\subsection{Implementation Details}
\label{subsec:implementation}
\textbf{Compute resources.}
All experiments involving the fine-tuning and merging of vision models are performed on a single NVIDIA RTX 4090, while the merging of vision-language models is conducted on a single NVIDIA A6000.

\noindent
\textbf{Choice of rescaling coefficient.} For DC-Merge and other baselines which require a rescaling coefficient $\alpha$ before merging, we integrate the merged multi-task vector $\Delta \widetilde{\boldsymbol{W}}$ with the pretrained weights $\boldsymbol{W}_0$ to obtain the merged model by:
\begin{equation}
\widetilde{\boldsymbol{W}}=\boldsymbol{W}_0+\alpha\Delta \widetilde{\boldsymbol{W}},
\end{equation}
where $\alpha$ is chosen on a held-out validation set. Exceptionally, we adopt a fixed $\alpha=2.0$ for DC-Merge following RobustMerge~\cite{zeng2025parameter} in MM-MergeBench.

\noindent
\textbf{Choice of checkpoints.}
We directly utilize the checkpoints released by TSV-M for evaluation in Table~\ref{tab:FFT_task_acc}, which ensures a fair comparison with the state-of-the-art method Iso-CTS and other baselines. Similarly, we adopt the checkpoints released by RobustMerge in MM-Merge-Bench evaluation (Table~\ref{tab:8_pertask_MM}) for fair comparison with baseline methods. Unlike previous works that only consider evaluation for LoRA fine-tuned vision models on the 8-task benchmark, we extend our evaluation to 12-task and 16-task benchmarks. Thus, we perform LoRA fine-tuning of vision models by ourselves and results with respect to LoRA fine-tuned vision models are based on our checkpoints unless otherwise specified. We present the training details as follows.

\noindent
\textbf{Fine-tuning vision models with LoRA.}
We obtain the CLIP visual encoders \model{ViT-B-32}, \model{ViT-B-16} and \model{ViT-L-14} from Hugging Face (HF). These models are then fine-tuned by LoRA using \model{peft} library. Across all our experiments of vision models, we set the LoRA target modules to the query, key, value and output projection weights, which are the only learnable modules. We set the LoRA rank to be 16, LoRA alpha to be 16, LoRA dropout to be 0.1 and disable the use of bias parameters. All vision models are trained using the AdamW optimizer, with a cosine learning rate scheduler using Cross-Entropy loss. We use a standard learning rate of 3e-4, weight decay of 1e-1 and label smoothing set to 0 across all datasets and different types of visual encoders.
The text encoder in the CLIP model remains frozen across all our experiments of vision models.


\noindent
\textbf{Choice of hyperparameters.}
We adopt TIES-Merging (TIES)~\cite{yadav2023tiesmerging} for aggregation in the shared cover space across all our experiments, and the top-$k$ parameter (i.e., the percentage of elements retained) in TIES is fixed to $0.1$. For hyperparameter $r$, we plot the performance of our method with different $r$ values in Figure~\ref{fig:different_rs} for both LoRA and FFT settings. 
Consequently, in all LoRA fine-tuning scenarios including vision models and vision-language models, we set $r$ to the LoRA rank. For FFT setting, we set $r$ to $\lfloor \frac{min(m,n)}{T}\rfloor$ with each task vector $\boldsymbol{\Delta}_i \in \mathbb{R}^{m \times n}$.
\begin{figure}[h]
    \centering
    \includegraphics[width=\linewidth]{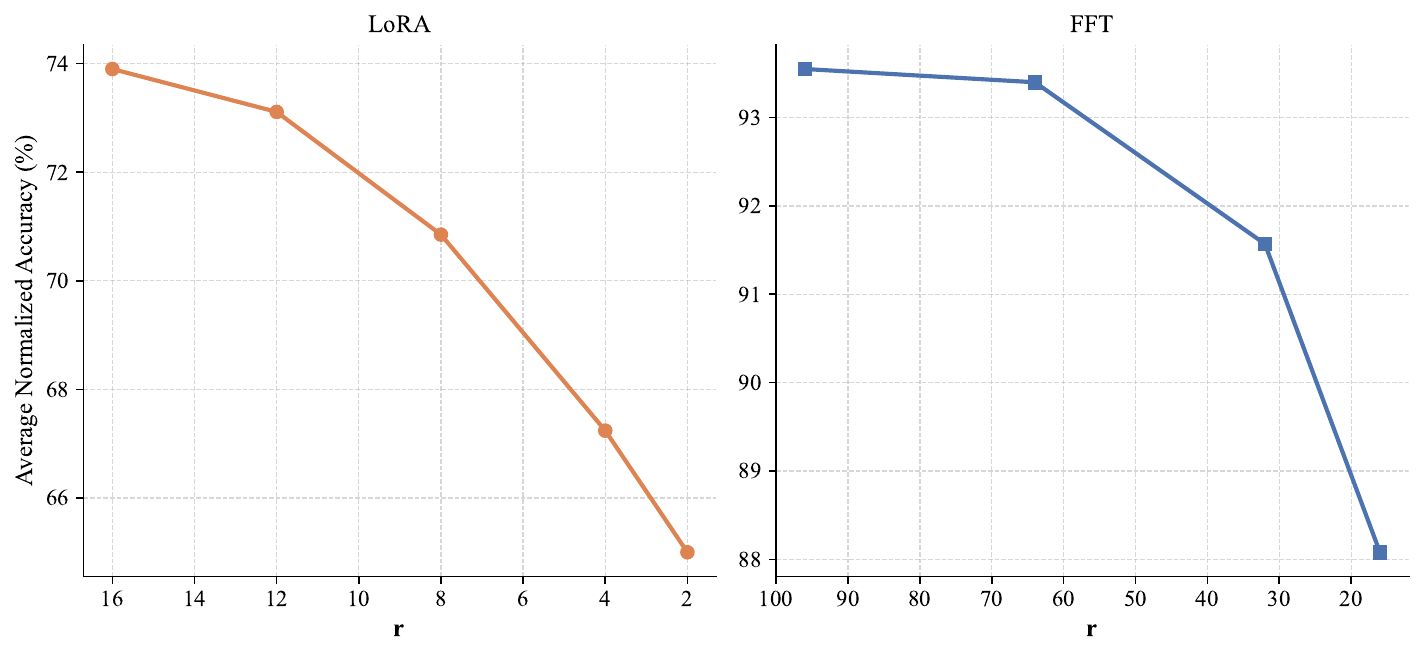}
    \caption{
    Impact of hyperparameter $r$ in LoRA and FFT settings. We report average normalized accuracy of DC-Merge on \model{ViT-B-32} 8-task benchmark. The empirical results align with our motivation to preserve the directional geometry of each task vector after merging as much as possible.
    }
    \label{fig:different_rs}
    \vspace{-0.3cm}
\end{figure}

\noindent
\textbf{Baseline adaptation in LoRA settings.} The state-of-the-art approaches TSV-M~\cite{tsv} and Iso-CTS~\cite{marczak2025notaskleftbehind} are also based on SVD. Unfortunately, the former ignores merging models fine-tuned by LoRA and the latter does not specify details of LoRA model merging neither in its paper nor in its codebase. We assume that ranks of the merged vectors $\Delta_{\text{TSV-M}}$ and $\Delta_{\text{Iso-CTS}}$ should be $T \times\text{LoRA\_rank}$ where $T$ is the total number of tasks. This setting aligns with our hyperparameter $r$ in LoRA setting.

\noindent
\textbf{Merging parameters with non-matrix structure.}
Some weights of the neural networks are represented by vectors, e.g. bias vectors and parameters of layer normalization~\cite{ba2016layer}. Following previous works~\cite{tsv, marczak2025notaskleftbehind}, we apply simple averaging to merge these parameters.

\subsection{Details of Perturbation for Figure~\ref{fig_mag_dir}}
\label{sec:fig_imple}
We separately construct controllable energy distribution and directional perturbation to verify that maintaining directional geometry is the key to maintaining performance.

\noindent
\textbf{Controllable Energy Distribution Perturbation.}
When keeping the directions $\boldsymbol{U}_{i}, \boldsymbol{V}_{i}$ fixed and perturbing only the energy distribution $\boldsymbol{\sigma}_{i}$ to $\widehat{\boldsymbol{\sigma}}_{i}$, we have
\begin{equation}
\mathrm{CosSim}(\boldsymbol{\Delta}_{i}, \widehat{\boldsymbol{\Delta}}_{i})
= \mathrm{CosSim}(\boldsymbol{\sigma}_{i}, \widehat{\boldsymbol{\sigma}}_{i}),
\end{equation}
where $\widehat{\boldsymbol{\Delta}}_{i}=
\boldsymbol{U}_{i}\,\mathrm{diag}(\widehat{\boldsymbol{\sigma}}_{i})\,\boldsymbol{V}_{i}^{\top}$. Hence, to obtain a controllable energy distribution perturbation, it suffices to construct $\widehat{\boldsymbol{\sigma}}_{i}$ such that
$\mathrm{CosSim}(\boldsymbol{\sigma}_{i}, \widehat{\boldsymbol{\sigma}}_{i})=p$, where $p\in[0,1]$ prescribes the perturbation degree.

Let $\boldsymbol{\sigma}\equiv \boldsymbol{\sigma}_{i}\in\mathbb{R}^{r}_{\ge 0}$ and let $\overline{\boldsymbol{\sigma}}\in\mathbb{R}^{r}_{\ge 0}$ be a \emph{balanced} spectrum (e.g., the averaging or linearly smoothed spectrum that preserves total energy).
We define the unit reference $\boldsymbol{s} \;=\; \frac{\boldsymbol{\sigma}}{\|\boldsymbol{\sigma}\|_{2}}$
and extract a component orthogonal to $\boldsymbol{s}$ from $\overline{\boldsymbol{\sigma}}$ by Gram-Schmidt:
\begin{equation}
\boldsymbol{s}^{\perp}  = \frac{\overline{\boldsymbol{\sigma}} - \left(\boldsymbol{s}^{\top} \overline{\boldsymbol{\sigma}}\right) \boldsymbol{s}}{\left\| \overline{\boldsymbol{\sigma}} - \left(\boldsymbol{s}^{\top} \overline{\boldsymbol{\sigma}}\right) \boldsymbol{s}\right\|_{2}}
\end{equation}

For any target $p\in[0,1]$, we set
\begin{equation}
\label{eq:s_hat_combo}
\widehat{\boldsymbol{s}}
\;=\;
p\,\boldsymbol{s} \;+\; \sqrt{1-p^{2}}\,\boldsymbol{s}^{\perp},
\qquad
\widehat{\boldsymbol{\sigma}}
\;=\;
\|\boldsymbol{\sigma}\|_{2}\;\widehat{\boldsymbol{s}}.
\end{equation}

By definition we have $\|\widehat{\boldsymbol{s}}\|_{2}=1$, $\|\boldsymbol{s}\|_{2}=1$, and $\boldsymbol{s}^{\top}\boldsymbol{s}^{\perp}=0$. Therefore,
\begin{equation}
\begin{aligned}
\mathrm{CosSim}(\boldsymbol{\sigma}, \widehat{\boldsymbol{\sigma}})
& =
\frac{\boldsymbol{\sigma}^{\top}\widehat{\boldsymbol{\sigma}}}{\|\boldsymbol{\sigma}\|_{2}\,\|\widehat{\boldsymbol{\sigma}}\|_{2}}
=
\boldsymbol{s}^{\top}\widehat{\boldsymbol{s}}
\\& =
p\,\boldsymbol{s}^{\top}\boldsymbol{s}+\sqrt{1-p^{2}}\,\boldsymbol{s}^{\top}\boldsymbol{s}^{\perp}
=
p.
\end{aligned}
\end{equation}
Hence Eq.~\eqref{eq:s_hat_combo} yields an energy distribution perturbation with prescribed cosine $p$ while keeping directions $(\boldsymbol{U}_{i},\boldsymbol{V}_{i})$ unchanged.

\noindent
\textbf{Controllable Directional Perturbation.}
To investigate the sensitivity of model merging to changes in knowledge directions, we introduce a procedure for applying controllable directional perturbations such that the resulting representation $\widehat{\boldsymbol{\Delta}}_{i}$ satisfies $\mathrm{DirSim}(\boldsymbol{\Delta}_{i}, \widehat{\boldsymbol{\Delta}}_{i}) = p$.  
Specifically, we perturb the orthogonal directions $\boldsymbol{U}_i$ and $\boldsymbol{V}_i$ by mixing them with randomly sampled bases from their orthogonal complements.

Let $\boldsymbol{U}_{i}^{\perp}$ and $\boldsymbol{V}_{i}^{\perp}$ denote the orthogonal complements of $\boldsymbol{U}_{i}$ and $\boldsymbol{V}_{i}$, respectively.  
We randomly select $r$ orthonormal columns $\widehat{\boldsymbol{U}}_{i}\subset\boldsymbol{U}_{i}^{\perp}$ and $\widehat{\boldsymbol{V}}_{i}\subset\boldsymbol{V}_{i}^{\perp}$, then construct the perturbed bases $\widehat{\boldsymbol{U}}_{i}(p)$ and $\widehat{\boldsymbol{V}}_{i}(p)$ as linear combinations:
\begin{equation}
\begin{aligned}
\widehat{\boldsymbol{U}}_{i}(p)
&= \sqrt{p}\,\boldsymbol{U}_{i} + \sqrt{1-p}\,\widehat{\boldsymbol{U}}_{i},\\
\widehat{\boldsymbol{V}}_{i}(p)
&= \sqrt{p}\,\boldsymbol{V}_{i} + \sqrt{1-p}\,\widehat{\boldsymbol{V}}_{i},
\end{aligned}
\end{equation}
which satisfy the orthonormality $\widehat{\boldsymbol{U}}_{i}(p)^{\top}\widehat{\boldsymbol{U}}_{i}(p)=
\widehat{\boldsymbol{V}}_{i}(p)^{\top}\widehat{\boldsymbol{V}}_{i}(p)=\boldsymbol{I}$.  
The perturbed task vector $\widehat{\boldsymbol{\Delta}}_{i}$ with directional geometry $(\widehat{\boldsymbol{U}}_{i}(p), \widehat{\boldsymbol{V}}_{i}(p))$ achieves $\mathrm{DirSim}(\boldsymbol{\Delta}_{i}, \widehat{\boldsymbol{\Delta}}_{i}) = p$, which can be easily verified by:
\begin{equation}
\begin{aligned}
\mathrm{DirSim}(\boldsymbol{\Delta}_{i}, \widehat{\boldsymbol{\Delta}}_{i}) & =  \frac{\mathrm{tr} \left(\boldsymbol{U}_{i}^{\top}  \widehat{\boldsymbol{U}}_{i}(p)\widehat{\boldsymbol{V}}_{i}(p)^{\top}\boldsymbol{V}_{i} \right)}{r}
\\& = \frac{\mathrm{tr} \left(p \cdot \boldsymbol{I}_{r\times r} \right)}{r} = p,
\end{aligned}
\end{equation}
where the second equality holds because $\boldsymbol{U}_{i}^{\top}  \widehat{\boldsymbol{U}}_{i}(p) = \widehat{\boldsymbol{V}}_{i}(p)^{\top}\boldsymbol{V}_{i} = \sqrt{p} \cdot \boldsymbol{I}_{r\times r}$.

\subsection{Smoothing Strategies}
\label{sec:smoothing}

\begin{table*}[t]
\centering
\resizebox{0.9\textwidth}{!}{%
\tablestyle{9pt}{1.1}
\begin{tabular}{lccccccc}
\toprule
\multicolumn{1}{c}{\multirow{2}{*}[-0.5ex]{Method}} & \multicolumn{3}{c}{\model{ViT-B-16}} & \multicolumn{3}{c}{\model{ViT-L-14}} & \multicolumn{1}{c}{\model{LLaVA-v1.5-7B}}\\ 
\cmidrule{2-8}
 & 8 tasks & 12 tasks & \multicolumn{1}{c|}{16 tasks} & 8 tasks & 12 tasks & \multicolumn{1}{c|}{16 tasks} & 8 tasks\\ 
\midrule
No smoothing        & 71.64 & 76.13 & \multicolumn{1}{c|}{75.48} & 83.96 & 85.63 & \multicolumn{1}{c|}{80.86} & 79.10\\
Linear smoothing  & 78.69 (\textcolor{green!50!black}{+7.05}) & 80.35 (\textcolor{green!50!black}{+4.22}) & \multicolumn{1}{c|}{78.82 (\textcolor{green!50!black}{+3.34})} & 89.22 (\textcolor{green!50!black}{+5.26}) & 89.11 (\textcolor{green!50!black}{+3.48}) & \multicolumn{1}{c|}{85.38 (\textcolor{green!50!black}{+4.52})} & 87.58 (\textcolor{green!50!black}{+8.48})\\
Averaging           & 78.86 (\textcolor{green!50!black}{+7.22}) & 80.56 (\textcolor{green!50!black}{+4.43}) & \multicolumn{1}{c|}{78.91 (\textcolor{green!50!black}{+3.43})} & 89.42 (\textcolor{green!50!black}{+5.46}) & 89.31 (\textcolor{green!50!black}{+3.68}) & \multicolumn{1}{c|}{85.71 (\textcolor{green!50!black}{+4.85})} & 87.91 (\textcolor{green!50!black}{+8.81})\\
\bottomrule
\end{tabular}%
}
\caption{Performance with different smoothing strategies. We report average normalized accuracy.}
\label{lora_b32_smoothing_strategy_app}
\vspace{-0.4cm}
\end{table*}

\begin{figure*}[t]
    \centering
    
    \begin{subfigure}[t]{0.40\textwidth}
        \centering
        \includegraphics[width=\textwidth]{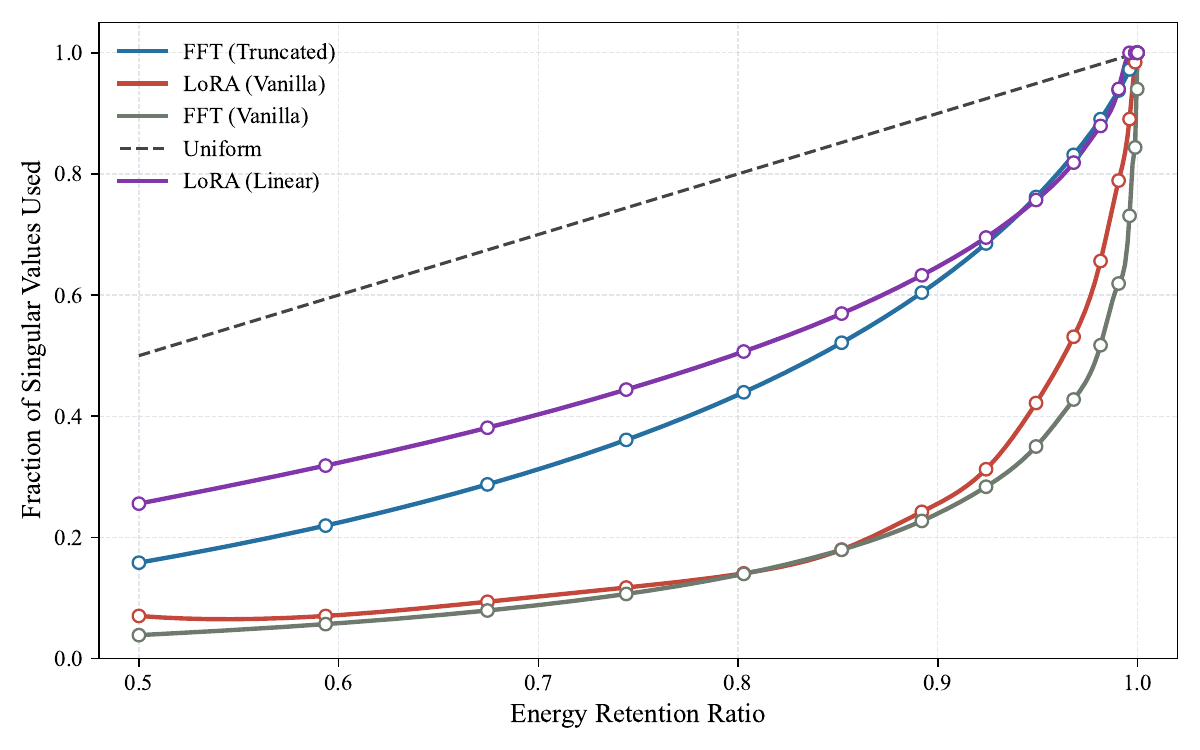}
        \caption{}
        \label{fig:fft_energy_frac}
    \end{subfigure}
    \hfill
    \begin{subfigure}[t]{0.57\textwidth}
        \centering
        \includegraphics[width=\textwidth]{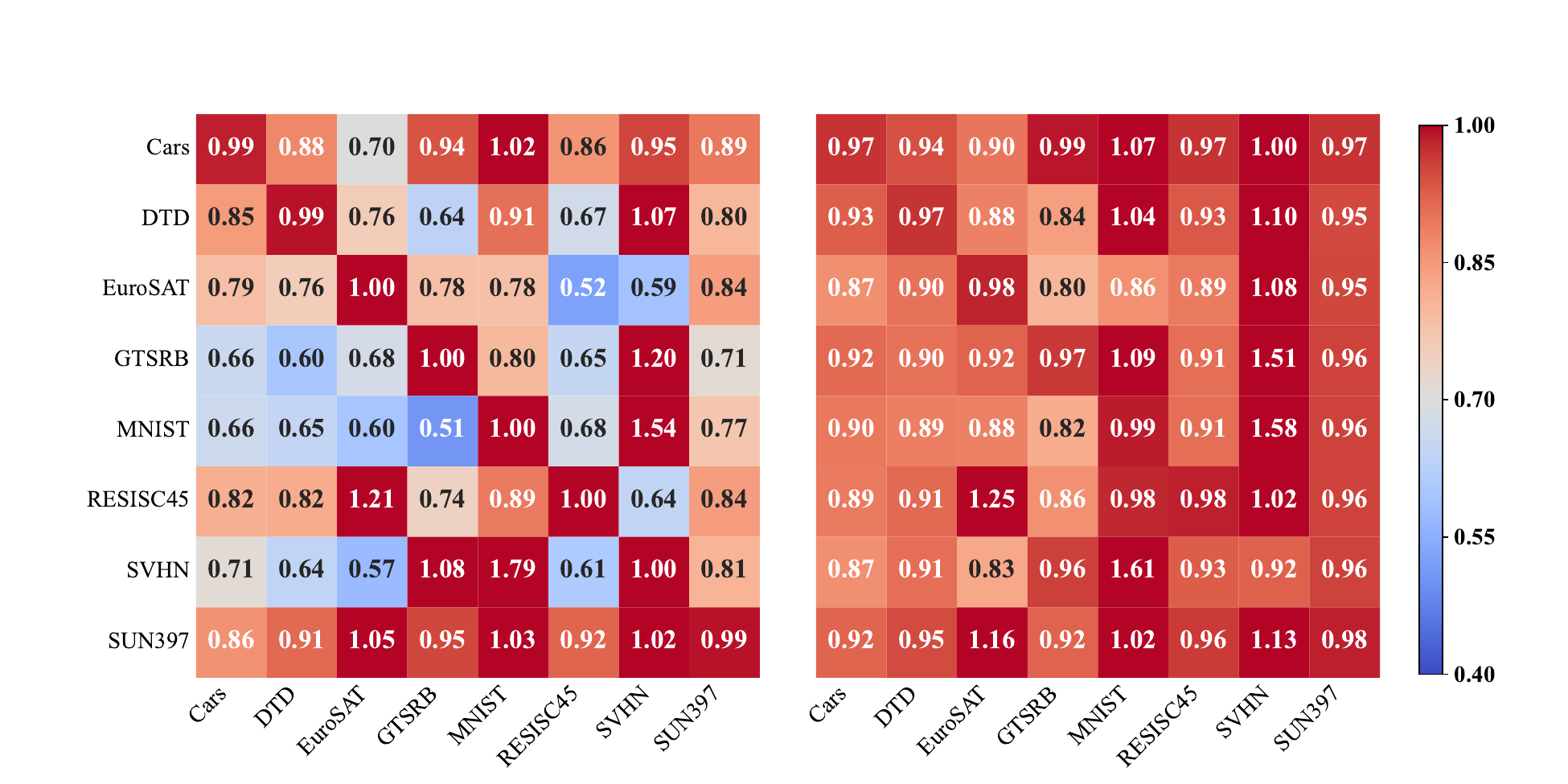}
        \caption{}
        \label{fig:transfer_fft}
    \end{subfigure}
    \vspace{-0.2cm}
    \caption{%
        (a) The relationship between energy retention ratios and fractions of singular values averaged across all datasets. The result is based on the \model{ViT-B-32} 8-task benchmark. FFT (Truncated) considers reconstructing each task vector using the top-$r$ singular values (and their associated singular vectors) in the full fine-tuning setting without additional smoothing.  
        (b) Comparison of cross-task transferability w/o (left) and w/ (right) further balancing the energy distribution after truncating long-tailed energy parts. Compared with LoRA setting, applying energy smoothing in FFT scenario yields relatively small improvements in generalization.
    }
    \label{fig:fft_combined}
    \vspace{-0.3cm}
\end{figure*}

In line 6 of Algorithm~\ref{alg:ioa}, we mention the smoothing of $\boldsymbol{\Sigma}_i^{(r)}$. Here, we propose two \emph{explicit} smoothing strategies as follows.

\noindent
\textbf{Averaging.} We consider the simplest and most effective form of smoothing by replacing all top-$r$ singular values with their average value: 
\begin{equation}
    \overline{\boldsymbol{\sigma}}_{i, \text{avg}} = \left(\frac{1}{r}\sum_{j=1}^{r} \sigma_i^j \right) \boldsymbol{1}_r.
\end{equation}

\noindent
\textbf{Linear Smoothing.} Beyond uniform averaging, we further consider a more flexible linear smoothing strategy that controls the energy flattening degree through a single hyperparameter $\rho$. Specifically, we first constrain the ratio between the largest and smallest smoothed singular values to be no greater than $\rho$, i.e., 
\begin{equation}
    \frac{\overline{\sigma}_{\max}}{\overline{\sigma}_{\min}} = \min\!\left(\frac{\sigma_{\max}}{\sigma_{\min}}, \rho\right).
\end{equation}
We then generate a linearly decreasing distribution $\boldsymbol{w} = [w_1, w_2, \dots, w_r]$ whose entries sum to one, such that $w_1 / w_r = \overline{\sigma}_{\max} / \overline{\sigma}_{\min}$.  
Finally, we scale this distribution by the sum of singular values, yielding the smoothed energy distribution
\begin{equation}
    \overline{\boldsymbol{\sigma}}_{i, \text{linear}} = \left(\sum_{j=1}^{r} \sigma_j\right) \boldsymbol{w}.
\end{equation}
This procedure maintains the relative ordering of the singular values while reducing their extreme disparity. We set $\rho$ to $5.0$ in all LoRA scenarios, including vision models and vision-language models. We provide a comprehensive comparison between averaging and linear smoothing in Table~\ref{lora_b32_smoothing_strategy} for \model{ViT-B-32} and Table~\ref{lora_b32_smoothing_strategy_app} for \model{ViT-B-16} and \model{ViT-L-14}. We report the better result of the two aforementioned smoothing strategies in Table~\ref{tab:LoRA_task_acc} and~\ref{tab:8_pertask_MM}. 

Besides, directly utilizing top-$r$ singular values along with their corresponding singular vectors of each task vector to obtain $\overline{\boldsymbol{\Delta}}_i$ without any additional smoothing (i.e., skipping line 6 in Algorithm~\ref{alg:ioa}) can also serve as an \emph{implicit} smoothing strategy, as it truncates the long-tailed energy parts. We only apply this truncation strategy to full fine-tuned vision models since we set $r$ to LoRA rank in LoRA setting. Surprisingly, we notice that applying truncation to long-tailed singular values without further smoothing in FFT scenario can generally exhibit similar energy distribution to explicitly linearly smoothing in LoRA setting.

\begin{figure}[h]
    \centering
    \includegraphics[width=0.95\linewidth, trim={0 0.3cm 0 0.3cm},
                 clip]{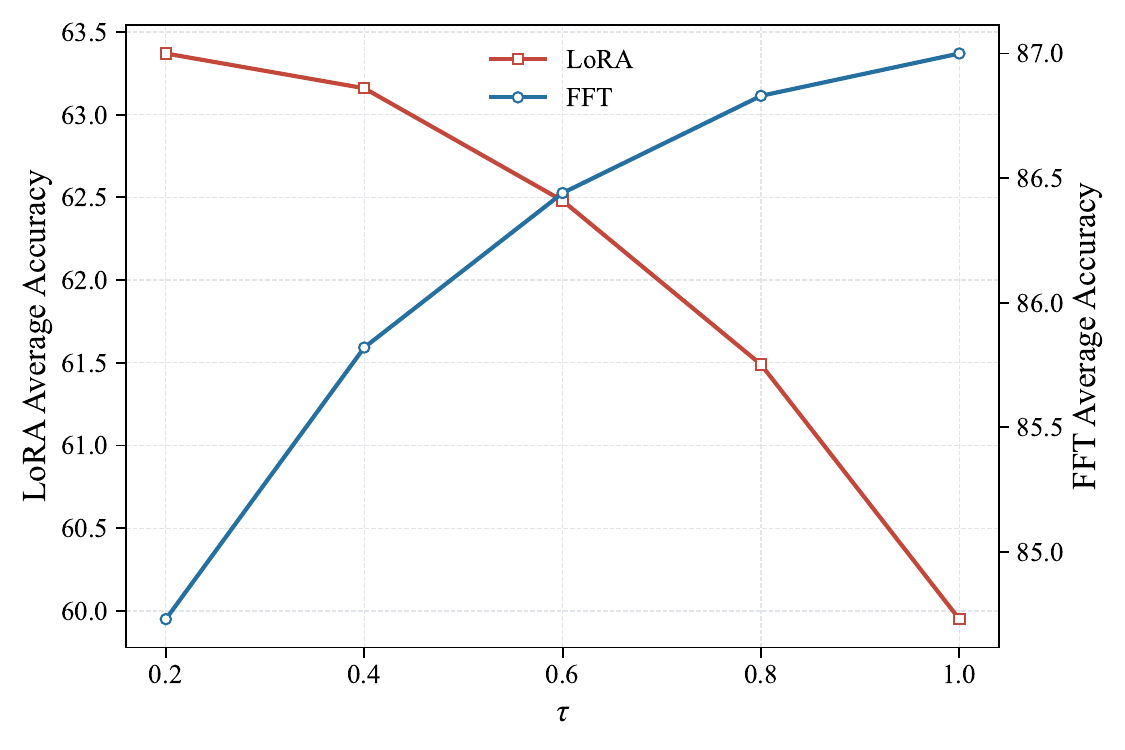}
    \caption{
    The relationship between average absolute accuracy of DC-Merge in LoRA/FFT setting and interpolation coefficient $\tau$.
    }
    \label{fig:smoothing_interpolation}
    \vspace{-0.4cm}
\end{figure}

\begin{table*}[h]
\resizebox{1\textwidth}{!}{
    \tablestyle{10pt}{1}
    \begin{tabular}{p{1.5cm}<{\centering} p{1cm}<{\centering}p{1cm}<{\centering}p{1cm}<{\centering}p{1cm}<{\centering}p{1cm}<{\centering}p{1cm}<{\centering}p{1cm}<{\centering}p{1cm}<{\centering}p{1cm}<{\centering}p{1cm}}
    \toprule
    \multicolumn{1}{c}{\multirow{2}{*}[-0.5ex]{\textbf{Method}}} & \multicolumn{9}{c}{\model{Datasets} } \\
    \cmidrule{2-10} 
    & Cars &DTD& EuroSAT & GTSRB &MNIST&RESISC45&SUN397&SVHN&\textbf{Average}
        \\
        \midrule
    \multicolumn{1}{c}{Task Arithmetic} & $82.0$&$73.6$&$48.8$&$42.1$&$53.1$&$71.5$&$97.5$&$41.2$&$63.7$ \\
        \multicolumn{1}{c}{KnOTS-TIES} & $82.7$&$73.7$&$49.3$&$48.9$&$68.9$&$70.9$&$95.5$&$\underline{53.8}$&$68.0$\\
        \multicolumn{1}{c}{WUDI-Merging} &$82.4$&$73.5$&$48.6$&$46.8$&$54.5$&$72.4$&$96.0$&$46.5$&$65.1$\\
        \multicolumn{1}{c} {TSV-M} & $\underline{83.9}$&$75.1$&$\underline{52.4}$&$45.5$&$58.3$&$73.1$&$97.6$&$45.3$&$66.4$\\
        \multicolumn{1}{c}{Iso-CTS} & $83.3$&$\mathbf{84.7}$&$49.0$&$\mathbf{79.6}$&$\underline{69.4}$&$\mathbf{82.3}$&$\mathbf{99.0}$&$53.2$&$\underline{75.1}$\\
        \bottomrule
        \rowcolor{gray!15}
        \multicolumn{1}{c}{\textbf{DC-Merge}} & $\mathbf{90.9}$ & $\underline{79.6}$ & $\mathbf{65.5}$ & $\underline{54.0}$ & $\mathbf{92.6}$& $\underline{75.0}$ & $\underline{98.2}$ & $\mathbf{66.8}$ & $\boldsymbol{77.8}$\\
        \bottomrule
    \end{tabular}
}
\caption{Performance on \model{ViT-B-32} 8-task benchmark using checkpoints provided by KnOTS. We report normalized accuracy.}
\label{tab:8task_lora_b32_knots}
\vspace{-0.2cm}
\end{table*}

It is worth noting that energy smoothing can enhance cross-task transferability of each task vector at a cost of task performance fidelity (see Figure~\ref{fig_transfer} and~\ref{fig:transfer_fft}), so there may exist a trade-off between its pros and cons. To verify this, we perform a linear interpolation between original singular values of each task vector and averaging smoothed ones by:
\begin{equation}
\overline{\boldsymbol{\Sigma}}_i^{(r)}=\tau\boldsymbol{\Sigma}_{i,\text{orig}}^{(r)}+(1-\tau)\boldsymbol{\Sigma}_{i,\text{avg}}^{(r)}
\end{equation}
The results are presented in Figure~\ref{fig:smoothing_interpolation}. With $\tau$ decreases, the energy distribution of each task vector tends to be smoother. We notice a consistent performance improvement as each $\overline{\boldsymbol{\Sigma}}_i^{(r)}$ is smoothed gradually in LoRA setting, while a reverse trend can be observed in FFT setting. This phenomenon suggests that the aforementioned truncation strategy is sufficient to ensure the top-$r$ knowledge components to be adequately expressed within each task vector, while further smoothing induces additional bias for individual tasks and thereby harms overall performance. We thus explicitly apply smoothing to LoRA fine-tuned task vectors across all experiments while obtaining $\overline{\boldsymbol{\Delta}}_i$ solely by retaining top-$r$ knowledge vectors as implicitly smoothing in FFT setting.

\section{Additional Experiments and Analyses}

In this section, we extend our evaluations of DC-Merge to additional checkpoints, larger task scales, and varying smoothing levels, demonstrating its strong robustness and scalability. Furthermore, we empirically verify that the two key modules in DC-Merge are both effective to preserve directional geometry of task vectors during merging and assess the generalization capability of DC-Merge on vision tasks. Finally, we provide a step-by-step analysis on computational complexity of our method.  
\subsection{Additional checkpoints}
\label{sec:other_ckpts}

Ideally, the performance of a well-designed model merging method is robust to various checkpoints. For LoRA setting, we present the performance of DC-Merge along with all baselines on \model{ViT-B-32} checkpoints provided by KnOTS~\cite{stoica2024knots} in Table \ref{tab:8task_lora_b32_knots}. DC-Merge still achieves the state-of-the-art performance, outperforming Iso-CTS by 2.7\%. For FFT setting, we provide the results on 8-task benchmarks using checkpoints from Task Arithmetic~\cite{ilharco2023task} in Table~\ref{tab:vit_results_ta}, where the capability of DC-Merge is consistently superior (or comparable) to state-of-the-art methods.

\vspace{-0.2cm}
\begin{table}[H]
\centering
\resizebox{\columnwidth}{!}{%
\tablestyle{10pt}{1}
\begin{tabular}{c|ccc}
\toprule
\textbf{Method} & \model{ViT-B-32} & \model{ViT-B-16} & \model{ViT-L-14} \\
\midrule
Zeroshot  & $48.3$ & $55.5$ & $64.7$ \\
Individual & $90.5$ & $92.6$ & $94.2$ \\
\midrule
Task Arithmetic & $70.5$ & $74.6$ & $84.6$ \\
Fisher Merging  & $68.3$ & $71.7$ & $83.7$ \\
RegMean         & $71.8$ & $76.6$ & $82.2$ \\
PCB-Merging             & $76.3$ & $81.5$ & $87.5$ \\
TSV-M            & $83.8$ & $87.2$ & $91.5$ \\
Iso-CTS            & $\underline{84.0}$ & $\underline{88.6}$ & $\mathbf{92.9}$ \\
\midrule
\rowcolor{gray!15}
\textbf{DC-Merge}       & $\mathbf{85.1}$ & $\mathbf{88.8}$ & $\underline{92.7}$ \\
\bottomrule
\end{tabular}
}
\caption{Performance on 8-task benchmark using checkpoints provided by Task Arithmetic. We report average absolute accuracy.}
\label{tab:vit_results_ta}
\vspace{-0.2cm}
\end{table}

\subsection{Complementary Modules in DC-Merge}
We conduct comparative experiments on vanilla Task Arithemetic (TA)~\cite{ilharco2023task} to demonstrate that both \emph{energy smoothing} (ES) and \emph{cover space merging} (CSM) are conducive to maintaining the directional consistency between merged vector and original task vectors. As illustrated in Figure~\ref{fig:ta_with_wo_smoothing}, individually applying ES and CSM significantly boosts the projected $\mathrm{DirSim}(\Delta \boldsymbol{W}_i, \Delta \widetilde{\boldsymbol{W}}_{i})$ of each task. Furthermore, by combining ES and CSM, we can achieve higher projected $\mathrm{DirSim}$, indicating that the two key modules in DC-Merge are complementary.

\begin{figure}[h]
    \centering
    \includegraphics[width=\linewidth, trim={0 0.4cm 0 0.4cm},
                 clip]{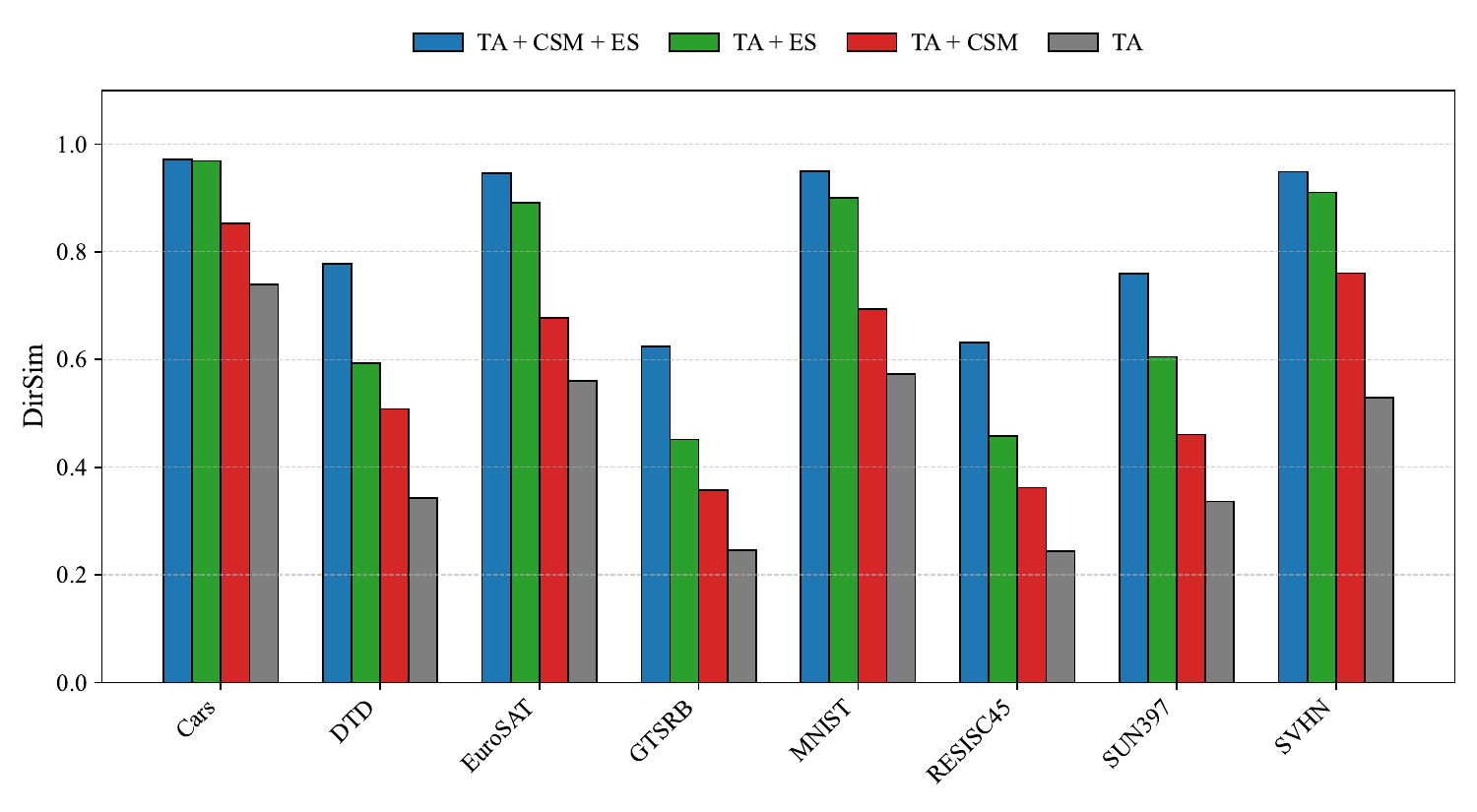}
    \caption{
    $\mathrm{DirSim}$ of TA w/o and w/ CSM and ES on \model{ViT-B-32} 8-task benchmark in LoRA setting. Both two modules in DC-Merge enhance $\mathrm{DirSim}(\Delta \boldsymbol{W}_i, \Delta \widetilde{\boldsymbol{W}}_{i})$ of each task. 
    }
    \label{fig:ta_with_wo_smoothing}
    \vspace{-0.3cm}
\end{figure}

We also present the task-wise projected $\mathrm{CosSim}$ of TA and DC-Merge on \model{ViT-B-32} 8-task benchmark in Figure~\ref{fig:cossim_ta_ours}. While TA exhibits higher projected $\mathrm{CosSim}$ on each individual task, its projected $\mathrm{DirSim}$ is much lower than DC-Merge across all tasks, indicating that TA fails to retain the directions of weaker but semantically important knowledge components within each task vector in the merging process and thereby suffers more performance degradation on each task than our method. 
\begin{figure}[h]
    \centering
    \includegraphics[width=\linewidth, trim={0 0.5cm 0 0.3cm},
                 clip]{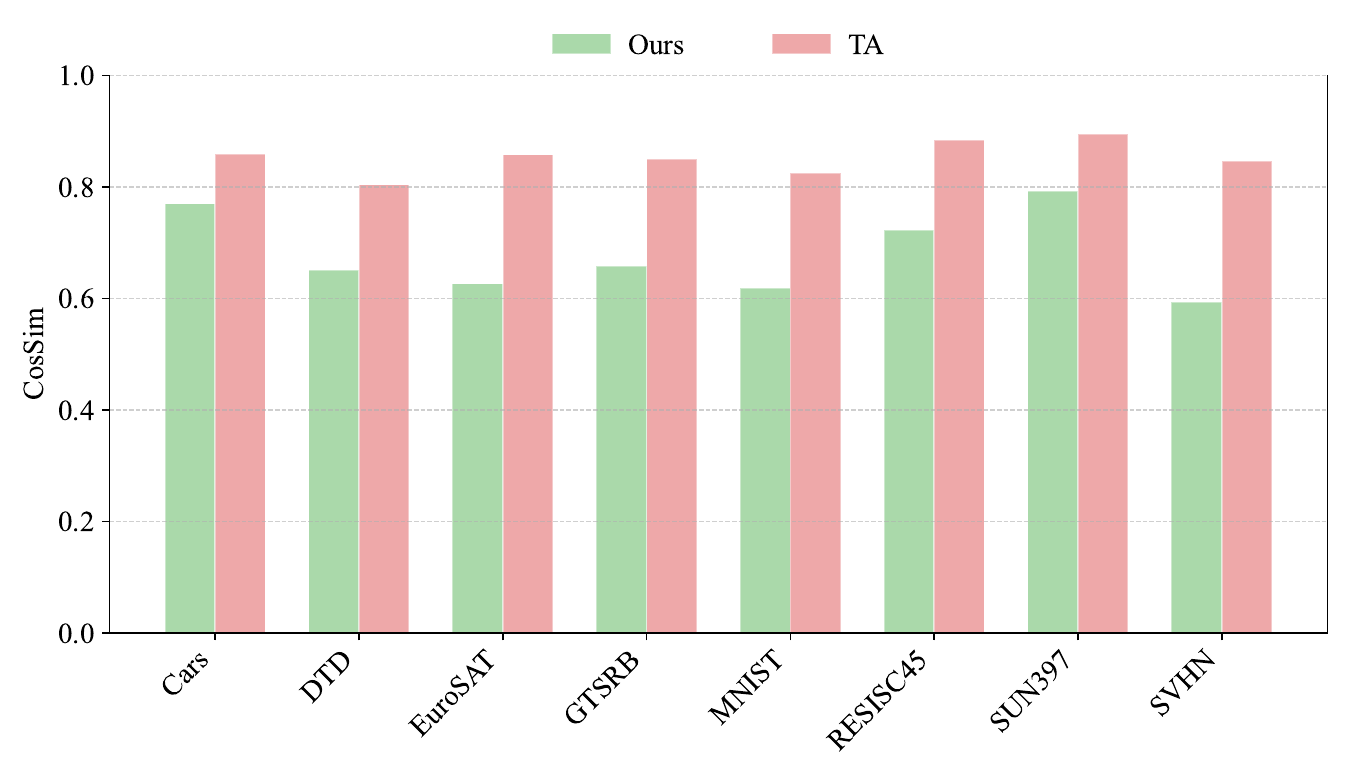}
    \caption{
    Projected $\mathrm{CosSim}$ of TA and DC-Merge (Ours) on \model{ViT-B-32} 8-task benchmark in LoRA setting.
    }
    \label{fig:cossim_ta_ours}
    \vspace{-0.5cm}
\end{figure}

\begin{table*}[t]
\resizebox{1\textwidth}{!}{
    \tablestyle{5pt}{1.1}
    \begin{tabular}{p{1.5cm}<{\centering} p{1cm}<{\centering}p{1cm}<{\centering}p{1cm}<{\centering}p{1cm}<{\centering}p{1cm}<{\centering}p{1cm}<{\centering}p{1cm}<{\centering}p{1cm}<{\centering}p{1cm}<{\centering}|p{1cm}<{\centering}p{1cm}<{\centering}p{1cm}<{\centering}p{1cm}<{\centering}p{1cm}<{\centering}}
    \toprule
    \multicolumn{1}{c}{\multirow{2}{*}[-0.5ex]{\textbf{Method}}} & \multicolumn{9}{c|}{\model{Seen Tasks} } & \multicolumn{5}{c}{\model{Unseen Tasks} }\\
    \cmidrule{2-15} 
    & Cars &DTD& EuroSAT & GTSRB &MNIST&RESISC45&SUN397&SVHN&\textbf{Average}&CIFAR100&Flowers&Pets&STL10&\textbf{Average}
        \\
        \cmidrule{1-15} 
        \multicolumn{1}{c|}{Individual} &  $76.61$&$67.34$&$98.19$&$98.29$&$99.15$&$93.97$&$72.49$&$96.50$&$87.82$&$-$&$-$&$-$&$-$&$-$\\
        \multicolumn{1}{c|}{Zeroshot} &  $59.49$&$44.15$&$44.30$&$32.27$&$47.94$&$60.25$&$63.21$&$31.42$&$47.88$&$64.61$&$66.18$&$87.74$&$96.77$&$78.83$\\
        \cmidrule{1-15}
    \multicolumn{1}{c|}{Task Arithmetic} & $59.82$&$44.31$&$46.52$&$34.71$&$63.43$&$65.78$&$62.61$&$45.25$&$52.80$&$43.83$&$\mathbf{62.30}$&$82.12$&$96.14$&$71.10$ \\
        \multicolumn{1}{c|}{KnOTS-TIES} & $61.39$&$44.79$&$48.26$&$43.42$&$71.24$&$65.49$&$63.48$&$49.36$&$55.93$&$47.21$&$\underline{62.11}$&$82.97$&$96.03$&$72.08$\\
        \multicolumn{1}{c|}{WUDI-Merging} &$60.89$&$45.64$&$51.82$&$39.90$&$66.96$&$67.40$&$63.32$&$46.12$&$55.25$&$46.81$&$62.05$&$82.46$&$\underline{96.19}$&$71.89$\\
        \multicolumn{1}{c|} {TSV-M} & $\underline{62.66}$&$46.54$&$55.44$&$46.88$&$75.86$&$69.37$&$\underline{63.84}$&$50.66$&$58.91$&$51.13$&$61.87$&$84.33$&$\mathbf{96.27}$&$\underline{73.40}$\\
        \multicolumn{1}{c|}{Iso-CTS} & $60.72$&$\mathbf{52.34}$&$\mathbf{64.26}$&$\underline{52.31}$&$\underline{82.80}$&$\underline{72.87}$&$63.68$&$\underline{55.10}$&$\underline{63.01}$&$\underline{52.64}$&$60.14$&$\underline{84.37}$&$95.67$&$73.21$\\
        \bottomrule
        \rowcolor{gray!15}
        \multicolumn{1}{c|}{$\textbf{DC-Merge}$} & $\mathbf{62.93}$ & $\underline{50.00}$ & $\underline{61.15}$ & $\mathbf{57.27}$ & $\mathbf{84.79}$& $\mathbf{75.29}$ & $\mathbf{64.98}$ & $\mathbf{56.93}$ & $\boldsymbol{64.17}$ & $\mathbf{56.90}$ & $61.44$ & $\mathbf{85.08}$ & $96.00$ & $\boldsymbol{74.86}$\\
        \bottomrule
    \end{tabular}
}
\caption{Performance of eight seen tasks and four unseen tasks on \model{ViT-B-32} in LoRA setting. We report absolute accuracy.}
\label{tab:8task_lora_4task_unseen_b32}
\vspace{-0.2cm}
\end{table*}

\begin{table*}[t]
\resizebox{1\textwidth}{!}{
    \tablestyle{5pt}{1}
    \begin{tabular}{p{1.5cm}<{\centering} p{1cm}<{\centering}p{1cm}<{\centering}p{1cm}<{\centering}p{1cm}<{\centering}p{1cm}<{\centering}p{1cm}<{\centering}p{1cm}<{\centering}p{1cm}<{\centering}p{1cm}<{\centering}|p{1cm}<{\centering}p{1cm}<{\centering}p{1cm}<{\centering}p{1cm}<{\centering}p{1cm}<{\centering}}
    \toprule
    \multicolumn{1}{c}{\multirow{2}{*}[-0.5ex]{\textbf{Method}}} & \multicolumn{9}{c|}{\model{Seen Tasks} } & \multicolumn{5}{c}{\model{Unseen Tasks} }\\
    \cmidrule{2-15} 
    & Cars &DTD& EuroSAT & GTSRB &MNIST&RESISC45&SUN397&SVHN&\textbf{Average}&CIFAR100&Flowers&Pets&STL10&\textbf{Average}
        \\
        \cmidrule{1-15} 
        \multicolumn{1}{c|}{Individual} &  $89.18$&$77.93$&$98.44$&$99.11$&$99.28$&$96.95$&$80.49$&$97.53$&$92.36$&$-$&$-$&$-$&$-$&$-$\\
        \multicolumn{1}{c|}{Zeroshot} &  $77.94$&$55.64$&$64.68$&$50.68$&$76.16$&$71.33$&$68.34$&$58.58$&$65.42$&$76.13$&$79.43$&$93.32$&$99.39$&$87.07$\\
        \cmidrule{1-15}
    \multicolumn{1}{c|}{Task Arithmetic} & $79.01$&$57.39$&$66.07$&$55.72$&$80.15$&$74.25$&$68.90$&$64.85$&$68.29$&$57.28$&$65.71$&$83.52$&$\underline{98.95}$&$76.36$ \\
        \multicolumn{1}{c|}{KnOTS-TIES} & $81.39$&$60.53$&$71.22$&$65.98$&$89.48$&$79.48$&$69.41$&$71.36$&$73.61$&$57.50$&$72.01$&$90.33$&$98.84$&$79.67$\\
        \multicolumn{1}{c|}{WUDI-Merging} &$79.62$&$58.19$&$66.56$&$57.26$&$85.03$&$76.51$&$68.59$&$66.49$&$69.78$&$60.81$&$69.65$&$87.26$&$\mathbf{99.03}$&$79.19$\\
        \multicolumn{1}{c|} {TSV-M} & $\underline{82.53}$&$62.61$&$75.74$&$75.12$&$88.74$&$82.52$&$\underline{70.32}$&$74.61$&$76.52$&$60.79$&$\mathbf{73.42}$&$91.55$&$98.86$&$\underline{81.15}$\\
        \multicolumn{1}{c|}{Iso-CTS} & $81.74$&$\underline{67.34}$&$\underline{84.56}$&$\underline{88.03}$&$\underline{96.96}$&$\underline{86.44}$&$69.53$&$\underline{78.50}$&$\underline{81.64}$&$\underline{61.90}$&$\underline{72.68}$&$\underline{91.62}$&$98.31$&$81.13$\\
        \bottomrule
        \rowcolor{gray!15}
        \multicolumn{1}{c|}{$\textbf{DC-Merge}$} & $\mathbf{83.17}$ & $\mathbf{68.35}$ & $\mathbf{84.78}$ & $\mathbf{88.48}$ & $\mathbf{97.06}$& $\mathbf{86.84}$ & $\mathbf{71.78}$ & $\mathbf{80.45}$ & $\boldsymbol{82.61}$ & $\mathbf{65.40}$ & $72.07$ & $\mathbf{91.79}$ & $98.45$ & $\boldsymbol{81.93}$\\
        \bottomrule
    \end{tabular}
}
\caption{Performance of eight seen tasks and four unseen tasks on \model{ViT-L-14} in LoRA setting. We report absolute accuracy.}
\label{tab:8task_lora_4task_unseen_l14}
\vspace{-0.4cm}
\end{table*}

\subsection{Generalization Capability on Vision Tasks}
\label{sec:vision_unseen_tasks}

We evaluate the generalization capability with four additional vision tasks on both \model{ViT-B-32} and \model{ViT-L-14} 8-task benchmarks in LoRA setting and the results are respectively presented in Figure~\ref{tab:8task_lora_4task_unseen_b32} and~\ref{tab:8task_lora_4task_unseen_l14}. Unlike vision-language tasks, all baselines along with DC-Merge exhibit limited generalization capabilities due to the complexity of unseen tasks. Nevertheless, DC-Merge consistently outperforms other methods across both benchmarks.


\begin{table*}[t]
\resizebox{1\textwidth}{!}{
    \tablestyle{5pt}{1.1}
    \begin{tabular}{p{1.5cm}<{\centering} p{1cm}<{\centering}p{1cm}<{\centering}p{1cm}<{\centering}p{1cm}<{\centering}p{1cm}<{\centering}p{1cm}<{\centering}p{1cm}<{\centering}p{1cm}<{\centering}p{1cm}<{\centering}|p{1cm}<{\centering}p{1cm}<{\centering}p{1cm}<{\centering}p{1cm}<{\centering}p{1cm}<{\centering}}
    \toprule
    \multicolumn{1}{c}{\multirow{2}{*}[-0.5ex]{\textbf{Method}}} & \multicolumn{9}{c|}{\model{Seen Tasks} } & \multicolumn{5}{c}{\model{Unseen Tasks} }\\
    \cmidrule{2-15} 
    & SciQA &Image& VQA& REC &OCR&VizWiz&Flickr&IconQA&\textbf{Average}&AVQA&Image-R&S2W&TabMWP&\textbf{Average}
        \\
        \cmidrule{1-15} 
        \multicolumn{1}{c}{Individual} &  $83.74$&$96.02$&$67.58$&$43.40$&$65.50$&$64.80$&$57.29$&$75.54$&$69.23$&$-$&$-$&$-$&$-$&$-$\\
        \multicolumn{1}{c}{Zeroshot}  &$ 61.73$	&$40.87$&	$62.88$&$36.10$& $41.16$&$41.03$&$49.07$&$14.09$ &$43.37$&$51.62$&$28.27$&$5.98$&$15.01$&$25.22$\\
        \multicolumn{1}{c}{Multi Task}  &$76.90$&$74.08$&$67.05$&$35.98$&$65.37$&$66.67$&$56.09$&$66.87$&$63.62$&$76.33$&$41.39$&$8.34$&$18.20$&$36.06$\\
        \cmidrule{1-15}
    \multicolumn{1}{c}{Task Arithmetic} & $71.94$&$57.49$&$67.06$&$38.90$&$62.87$&$44.80$&$49.20$&$39.21$&$53.93$&$74.78$&$37.37$&$7.52$&$13.57$&$33.31$ \\
        \multicolumn{1}{c}{DARE-Merging} & $71.59$&$57.25$&$66.26$&$39.38$&$62.56$&$44.93$&$49.13$&$39.59$&$53.84$&$73.75$&$37.67$&$7.56$&$13.62$&$33.15$\\
        \multicolumn{1}{c}{TIES-Merging} &$71.49$&$55.88$&$66.73$&$39.67$&$\mathbf{65.12}$&$44.35$&$47.06$&$34.46$&$53.09$&$73.43$&$38.44$&$7.47$&$13.23$&$33.14$\\
        \multicolumn{1}{c} {PCB-Merging} & $71.10$&$57.82$&$\mathbf{67.59}$&$38.22$&$\underline{64.35}$&$44.58$&$48.90$&$37.01$&$53.70$&$74.57$&$36.28$&$7.84$&$15.44$&$33.53$\\
        \multicolumn{1}{c} {TSV-M} & $75.66$&$\mathbf{78.61}$&$59.48$&$41.92$&$41.71$&$41.19$&$52.23$&$45.97$&$54.60$&$77.29$&$42.72$&$11.12$&$13.80$&$36.23$\\
        \multicolumn{1}{c} {WUDI-Merging} & $69.46$&$\underline{78.40}$&$56.84$&$38.95$&$38.58$&$40.23$&$51.13$&$38.82$&$51.55$&$68.56$&$41.57$&$\underline{11.16}$&$11.80$&$33.27$\\
        \multicolumn{1}{c} {Iso-CTS} & $\mathbf{77.54}$&$77.01$&$63.50$&$\underline{45.72}$&$40.75$&$42.34$&$\underline{54.74}$&$\mathbf{53.25}$&$56.86$&$78.48$&$44.81$&$\mathbf{11.21}$&$13.99$&$37.12$\\
        \multicolumn{1}{c}{RobustMerge} & $\underline{73.43}$&$\underline{65.54}$&$\underline{67.20}$&$44.80$&$62.97$&$\mathbf{46.61}$&$52.80$&$\underline{45.90}$&$\underline{57.33}$&$\underline{79.30}$&$\underline{45.79}$&$9.23$&$\underline{17.62}$&$\underline{37.99}$\\
        \bottomrule
        \rowcolor{gray!15}
        \multicolumn{1}{c}{$\textbf{DC-Merge}$} & $\underline{77.05}$ & $73.33$ & $65.87$ & $\mathbf{47.88}$ & $59.66$& $\underline{45.23}$ & $\mathbf{55.06}$ & $\underline{52.92}$ & $\mathbf{59.63}$ & $\mathbf{79.30}$ & $\mathbf{50.24}$ & $10.88$ & $\mathbf{18.94}$ & $\mathbf{39.84}$\\
        \bottomrule
    \end{tabular}
}
\caption{Task-wise absolute accuracy results on MM-MergeBench~\cite{zeng2025parameter}, containing eight seen tasks (LoRA fine-tuned) and four unseen tasks. The best results are in \textbf{bold} and the second-best are \underline{underlined}.}
\label{tab:8_pertask_MM_supp}
\vspace{-0.3cm}
\end{table*}

\subsection{Computational Complexity Analysis}
We provide a comprehensive computational complexity analysis on DC-Merge in this subsection. For simplicity, we follow previous work~\cite{marczak2025notaskleftbehind} to consider a deep neutral network with $L$ layers, and each layer consists of a single task vector $\boldsymbol{\Delta}_i\in\mathbb{R}^{n\times n}$. Besides, we consider the traditional full SVD implementation with a complexity of $\mathcal{O}(n^3)$~\cite{vasudevan2017hierarchical} and compute $r$-rank SVD by first computing full SVD and then retaining the top-$r$ singular values and their corresponding singular vectors. Supposing that the number of tasks is $T$, the step-by-step computational complexity of DC-Merge is:
\begin{itemize}
    \item Compute $r$-rank SVD for each task vector $\boldsymbol{\Delta}_i$ in each layer, with the total complexity of $\mathcal{O}(TLn^3)$;
    \item Perform energy smoothing for each task vector in each layer, with the total complexity of $\mathcal{O}(TLr)$;
    \item Reconstruct energy-balanced task vector $\boldsymbol{\Delta}_i$ layer-wise, with the total complexity of $\mathcal{O}(TLn^2r)$;
    \item Whiten concatenated basis $\boldsymbol{U}\in\mathbb{R}^{n\times k}$ and $\boldsymbol{V}\in\mathbb{R}^{n\times k}$ respectively, with the total complexity of $\mathcal{O}(TL(nk^2+k^3))$;
    \item Project each energy-balanced task vector $\boldsymbol{\Delta}_i$ onto cover space for each layer, with the total complexity of $\mathcal{O}(TL(nk^2+n^2k))$;
    \item Merge $\{\boldsymbol{M}\}_{i=1}^T$ via TA or TIES layer-wise, with the total complexity of $\mathcal{O}(TLk^2)$;
    \item Project the aggregated $\boldsymbol{\widetilde{M}}$ back to original parameter space for each layer, with the total complexity of $\mathcal{O}(TL(nk^2+n^2k))$.
\end{itemize}
As $r\le \lfloor n/T\rfloor$, the overall computational complexity of DC-Merge is $\mathcal{O}(TLn^3)$, which shares the same asymptotic computational complexity with the current state-of-art method TSV-M~\cite{tsv} and Iso-CTS~\cite{marczak2025notaskleftbehind}. In practical applications, we recommend using randomized SVD for low-rank approximation to enhance computational efficiency.


\subsection{Performance under Increasing Task Scales}
We compare the performance of DC-Merge with existing state-of-the-art methods under different task scales (i.e., the number of tasks). As illustrated in Table~\ref{tab:task_scales}, with the increase of task scales, DC-Merge consistently achieves state-of-the-art overall performance while WUDI-Merging~\cite{cheng2025whoever} suffers severe performance degradation. Furthermore, the superiority of DC-Merge becomes more significant as the task scale increases, which demonstrates the strong robustness of our method to task scales. The same trend can be observed in Table~\ref{tab:FFT_task_acc} on \model{ViT-B-16} and \model{ViT-L-14}.

\begin{table}[h]
\centering
\resizebox{\columnwidth}{!}{%
\tablestyle{8pt}{1}
\begin{tabular}{c|ccc}
\toprule
\textbf{Method} & 8 tasks & 14 tasks & 20 tasks \\
\midrule
Zeroshot  & $48.26$ & $57.21$ & $56.10$\\
Individual & $92.83$ & $90.88$ & $91.37$\\
\midrule
TSV-M             & $85.86_{(92.31)}$ & $80.06_{(87.88)}$ & $77.07_{(84.29)}$\\
WUDI-Merging            & $\underline{86.47_{(93.05)}}$ & $78.87_{(86.71)}$ & $69.90_{(76.71)}$\\
Iso-CTS            & $86.20_{(91.78)}$ & $\underline{81.71_{(89.70)}}$ & $\underline{78.05_{(85.48)}}$\\
\midrule
\rowcolor{gray!15}
\textbf{DC-Merge}       & $\mathbf{87.05_{(93.55)}}$ & $\mathbf{82.52_{(90.62)}}$ & $\mathbf{80.58_{(88.18)}}$\\
\bottomrule
\end{tabular}
}
\caption{Performance on \model{ViT-B-32} benchmarks in FFT setting. We report average absolute accuracy and subscript (in parentheses) is the average normalized accuracy.}
\label{tab:task_scales}
\vspace{-0.5cm}
\end{table}

\subsection{Performance on Varying Smoothing Levels}
In Appendix~\ref{sec:smoothing}, we propose a flexible linear smoothing strategy that controls the energy flattening degree through a single hyperparameter $\rho$.
Higher $\rho$ indicates more skewness in energy distribution.
We present the performance of DC-Merge on 8-task benchmark in LoRA setting with varying $\rho$ values in Figure~\ref{fig:rho_linear_smoothing}. Notably, DC-Merge exhibits strong robustness to the choice of $\rho$, achieving state-of-the-art performance across a wide range of smoothing levels.   
\begin{figure}[h]
    \centering
    \includegraphics[width=0.9\linewidth, trim={0 0.5cm 0 0.3cm},
                 clip]{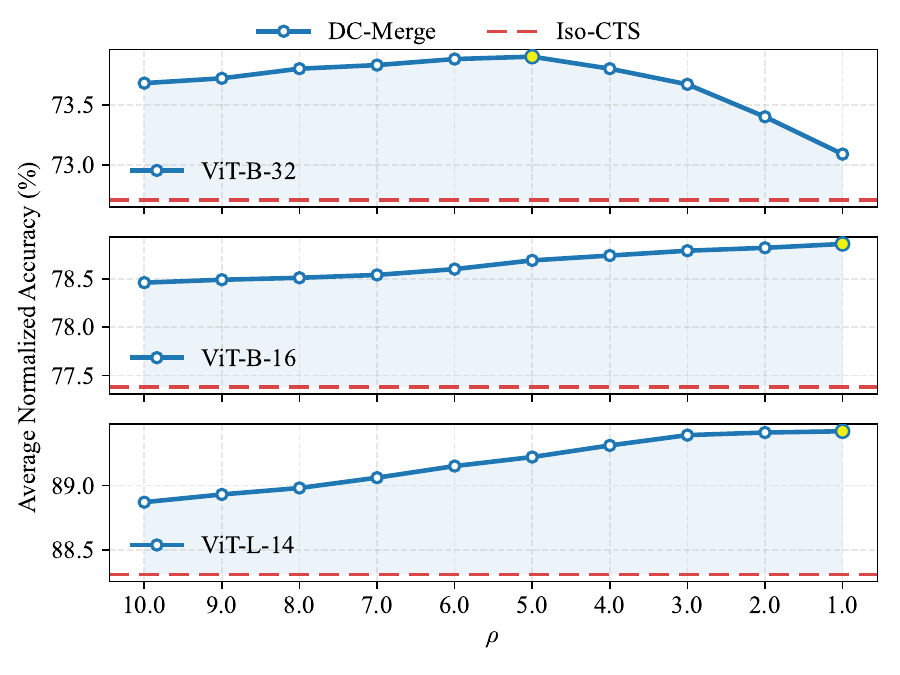}
    \caption{
    Performance of DC-Merge with respect to various smoothing levels in linear smoothing. We report average normalized accuracy. When $\rho=1.0$, linear smoothing reduces to simple averaging. 
    }
    \label{fig:rho_linear_smoothing}
    \vspace{-0.3cm}
\end{figure}

\subsection{Scaling to Vision-Language Models}
\label{sec:other_baselines_MMBench}

Table~\ref{tab:8_pertask_MM_supp} presents task-wise performance of existing baselines in Table~\ref{tab:8_pertask_MM}, along with additional state-of-the-art methods TSV-M~\cite{tsv}, WUDI-Merging~\cite{cheng2025whoever} and Iso-CTS~\cite{marczak2025notaskleftbehind}. While some state-of-the-art approaches struggle to maintain strong performance, DC-Merge consistently achieves superior results on both seen and unseen tasks, demonstrating that the effectiveness of DC-Merge is not limited to vision models, but naturally extends to large-scale multi-modal models.




\begin{figure*}[h]
    \centering
    \begin{subfigure}[t]{0.75\textwidth}
        \centering
        \includegraphics[width=\textwidth]{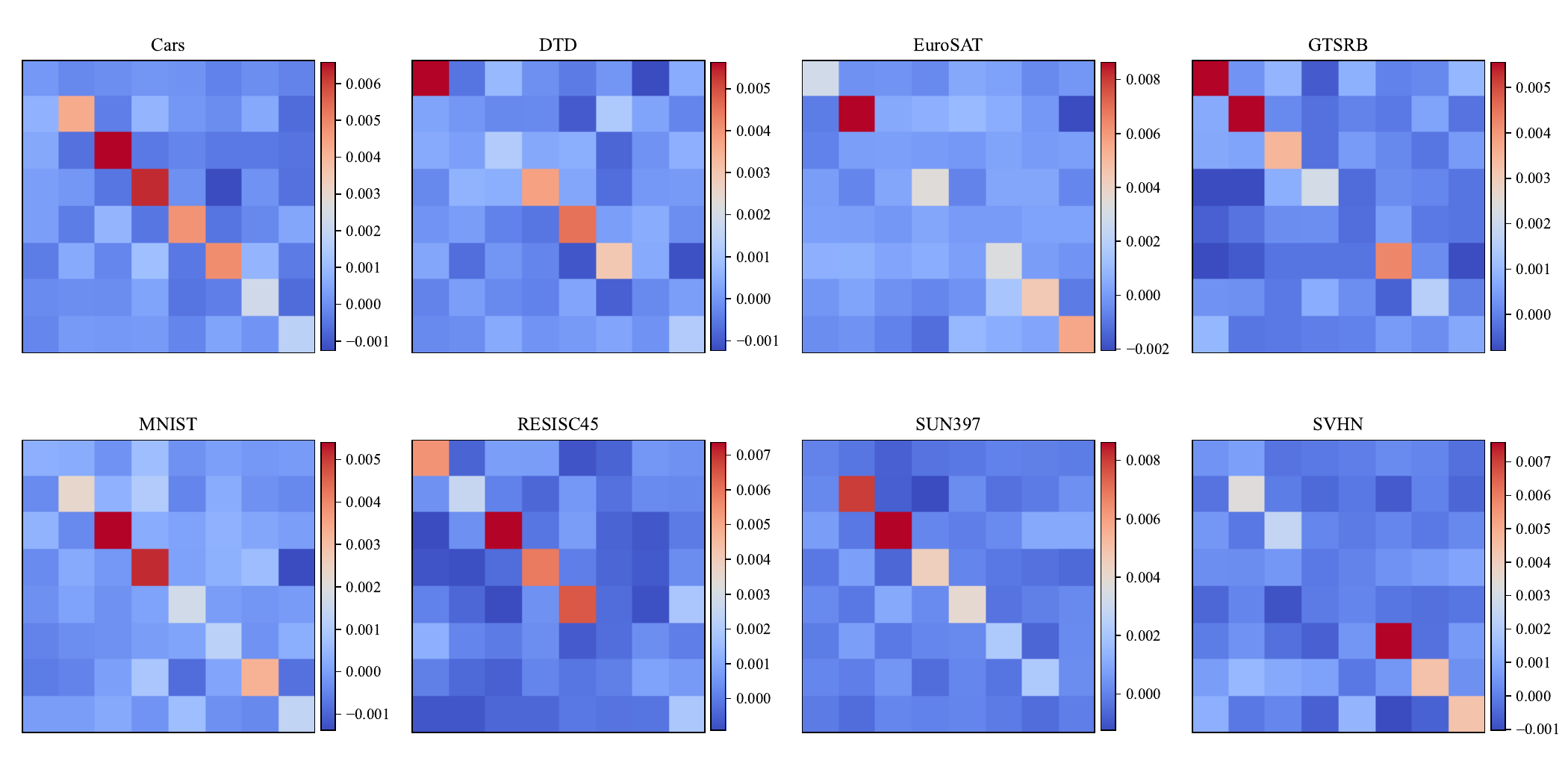}
        \caption{encoder.layers.2.self\_attn.q\_proj.weight}
        \label{fig:a}
    \end{subfigure}
    
    
    \begin{subfigure}[t]{0.75\textwidth}
        \centering
        \includegraphics[width=\textwidth]{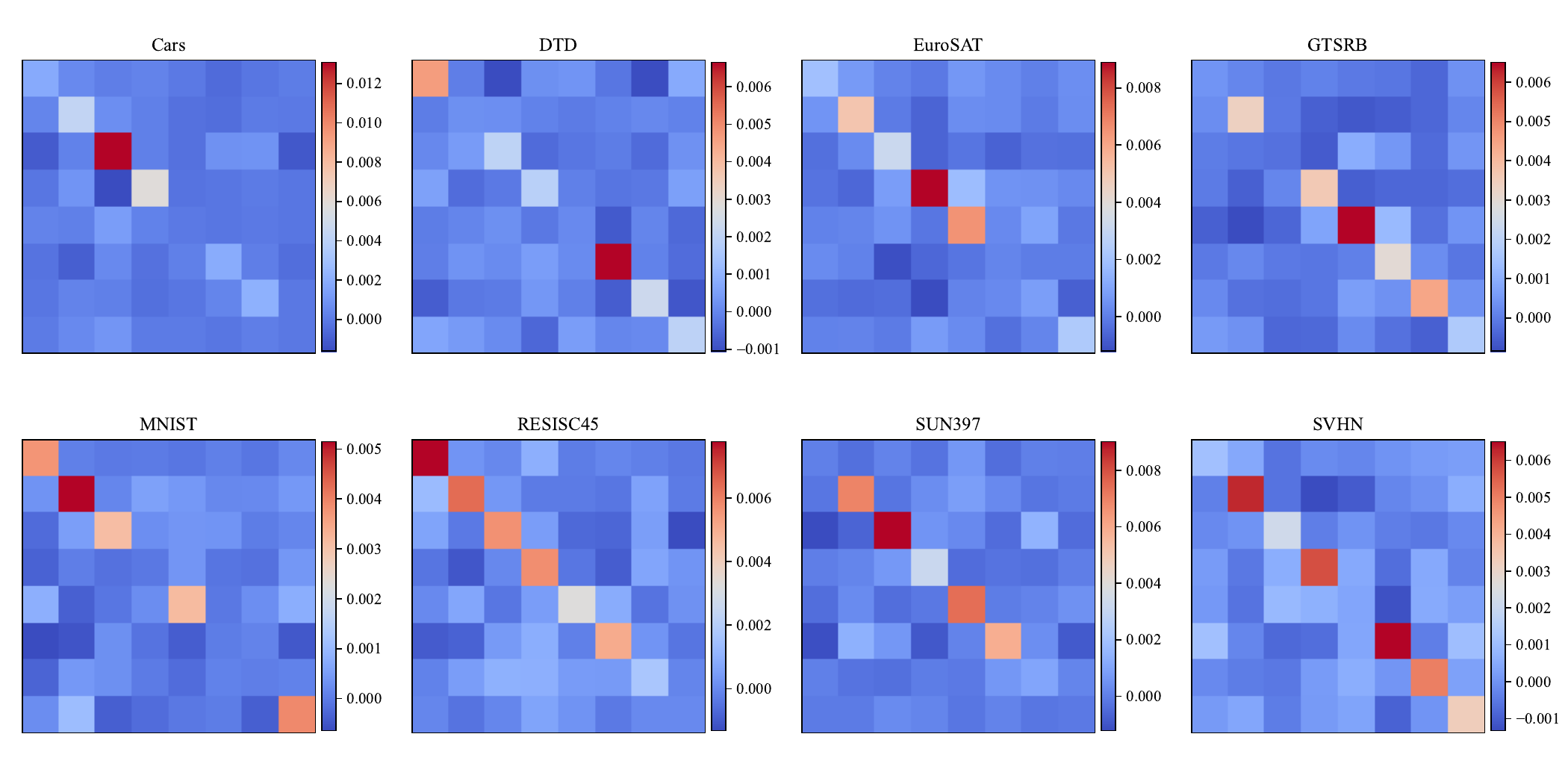}
        \caption{encoder.layers.2.self\_attn.k\_proj.weight}
        \label{fig:b}
    \end{subfigure}

    \begin{subfigure}[t]{0.75\textwidth}
        \centering
        \includegraphics[width=\textwidth]{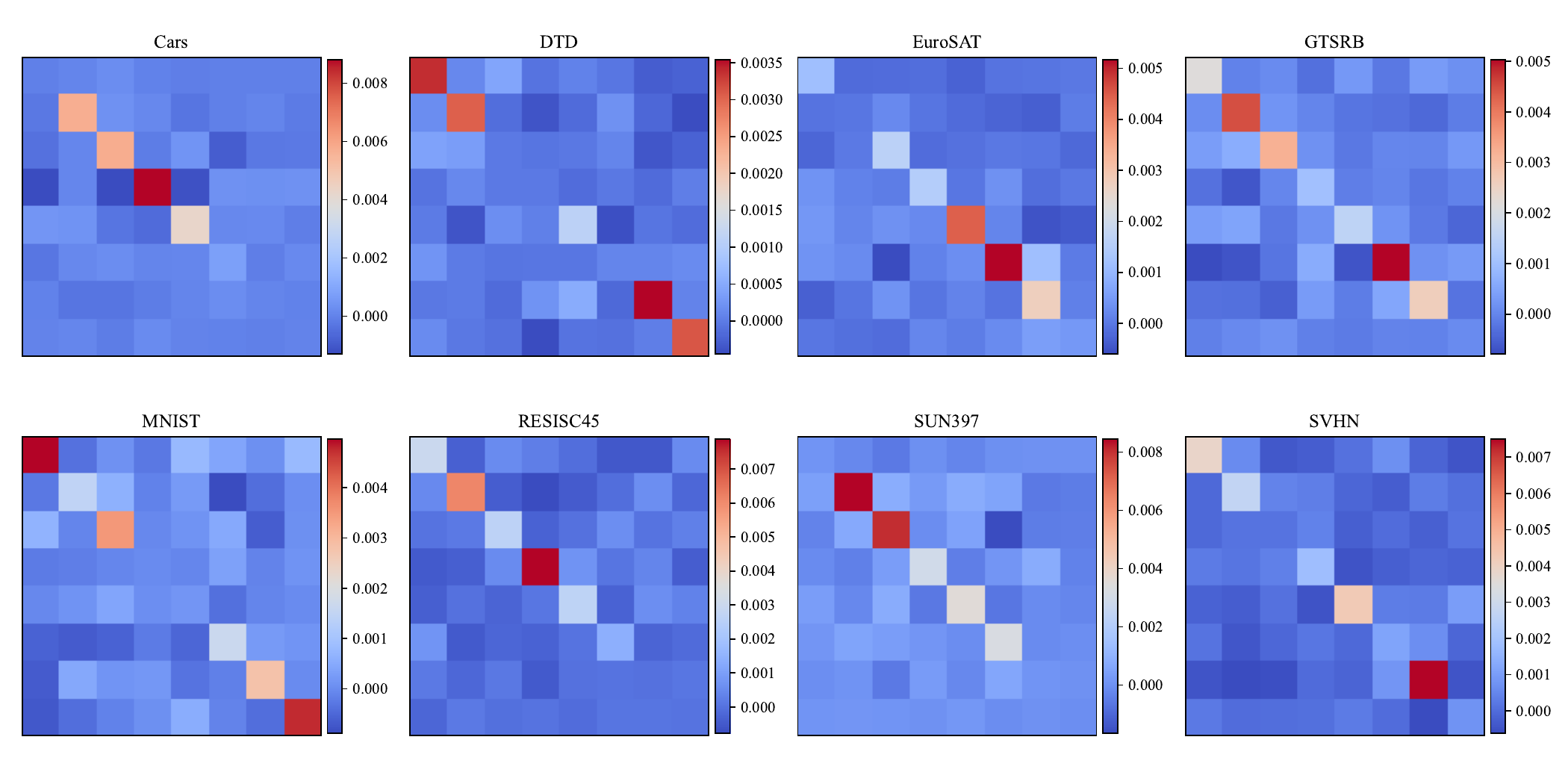}
        \caption{encoder.layers.2.self\_attn.v\_proj.weight}
        \label{fig:c}
    \end{subfigure}
    
    \caption{Visualization of task representations in the shared cover space. We apply block-diagonal masks to eliminate cross-task directional interference (off-diagonal blocks) when projecting the aggregated task representations back to parameter space, where the mask size implicitly balances directional preservation and task fidelity. (Continued on next page)}
    \label{fig:mi_cover_space_part1}
\end{figure*}


\begin{figure*}[h]
    \centering
    \ContinuedFloat 
    
    \begin{subfigure}[t]{0.75\textwidth}
        \centering
        \includegraphics[width=\textwidth]{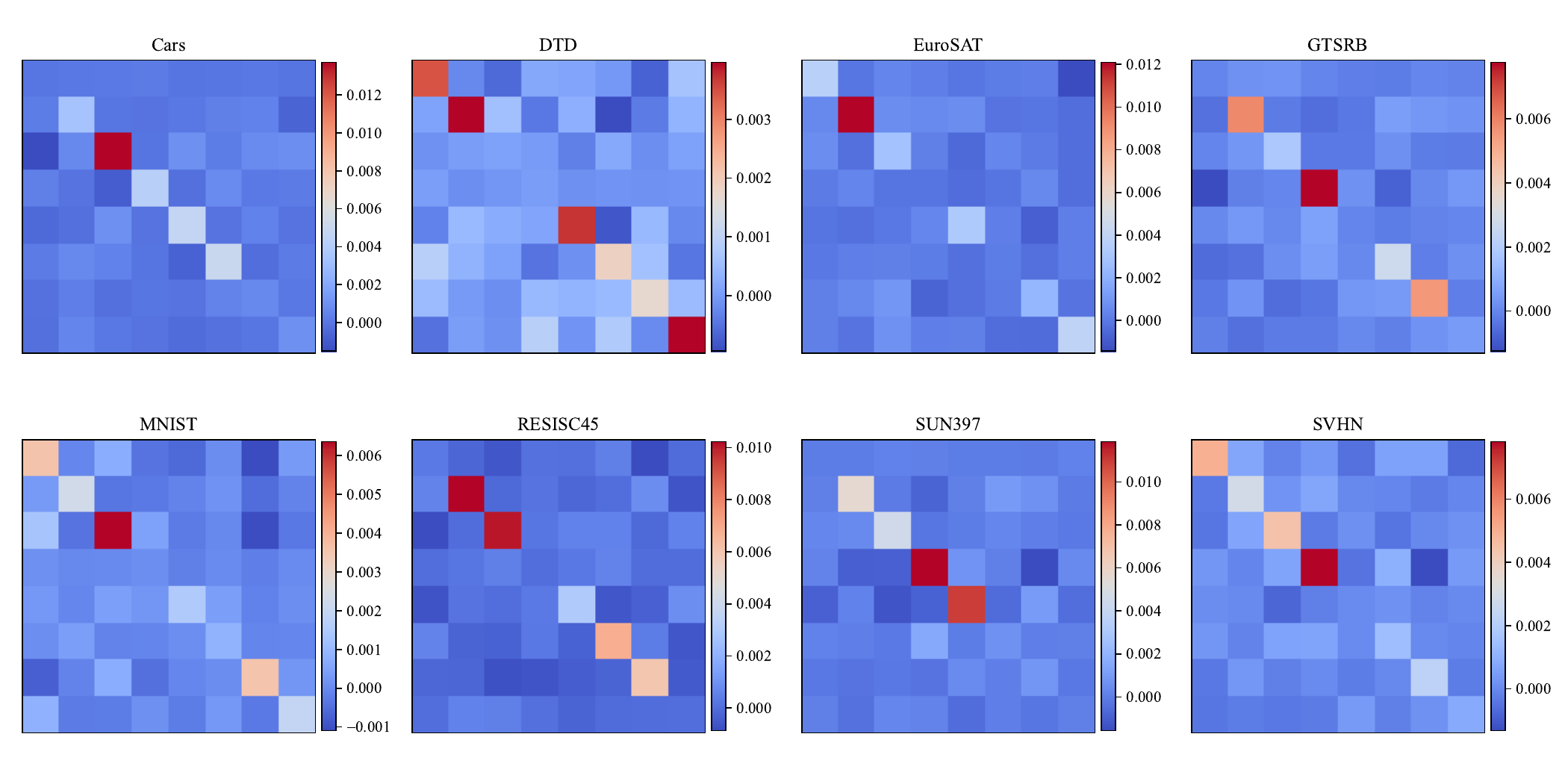}
        \caption{encoder.layers.5.self\_attn.q\_proj.weight}
        \label{fig:d}
    \end{subfigure}

    \begin{subfigure}[t]{0.75\textwidth}
        \centering
        \includegraphics[width=\textwidth]{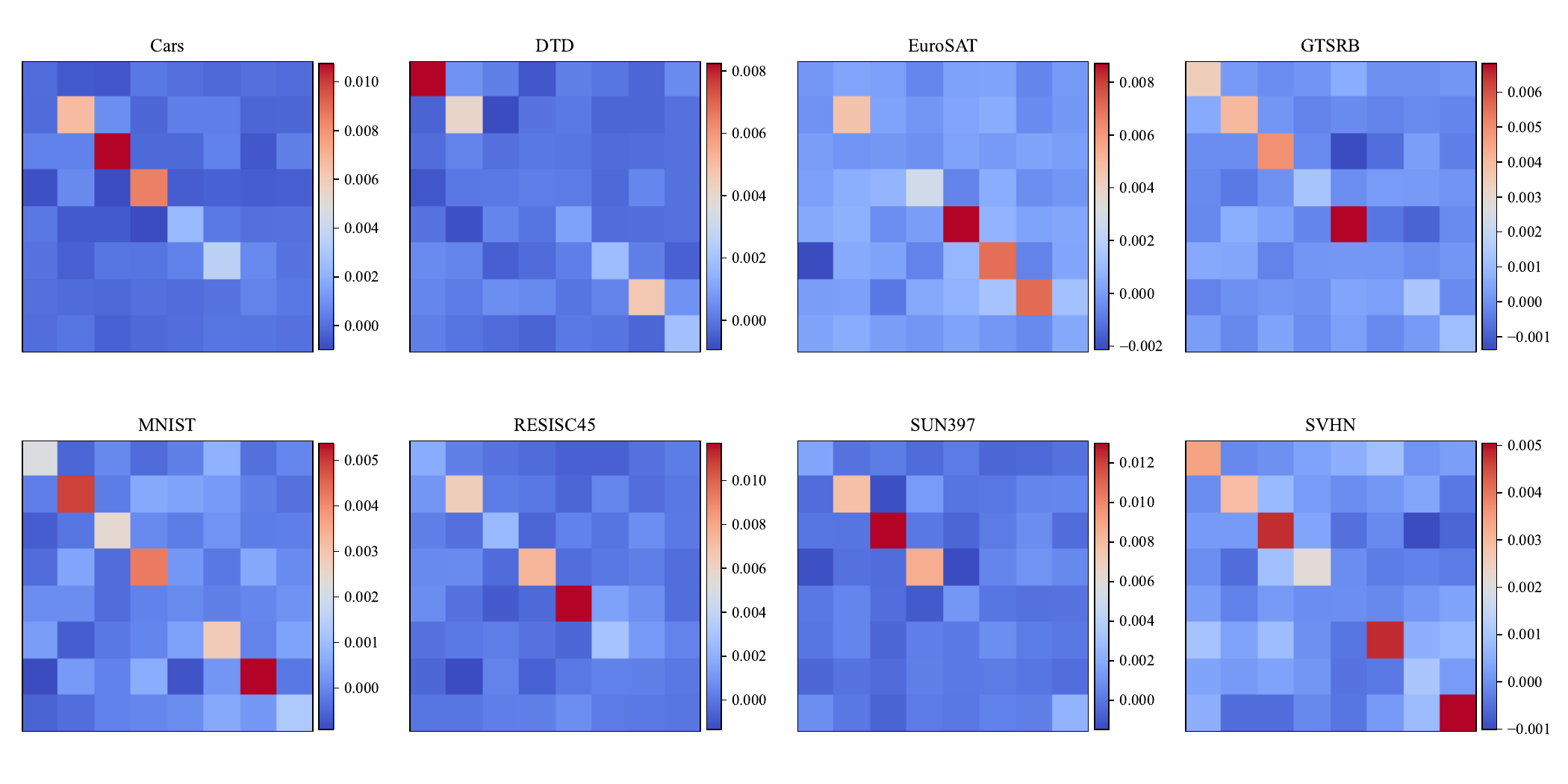}
        \caption{encoder.layers.5.self\_attn.k\_proj.weight}
        \label{fig:e}
    \end{subfigure}

    \begin{subfigure}[t]{0.75\textwidth}
        \centering
        \includegraphics[width=\textwidth]{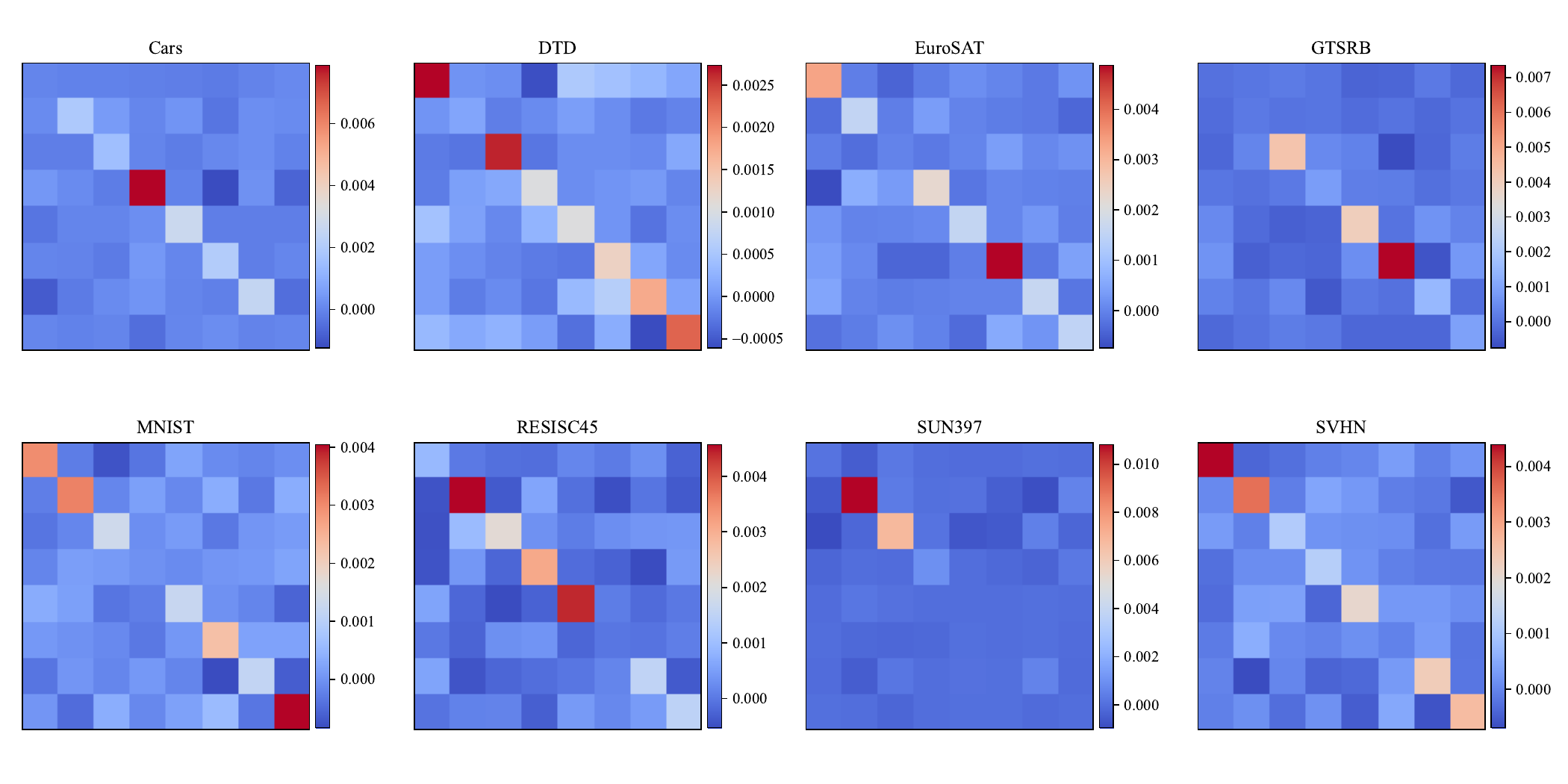}
        \caption{encoder.layers.5.self\_attn.v\_proj.weight}
        \label{fig:f}
    \end{subfigure}
    

    \caption{Visualization of task representations in the shared cover space. We apply block-diagonal masks to eliminate cross-task directional interference (off-diagonal blocks) when projecting the aggregated task representations back to parameter space, where the mask size implicitly balances directional preservation and task fidelity. (Continued on next page)}
    \label{fig:mi_cover_space_part2}
\end{figure*}

\begin{figure*}[h]
    \centering
    \ContinuedFloat 
    
    \begin{subfigure}[t]{0.75\textwidth}
        \centering
        \includegraphics[width=\textwidth]{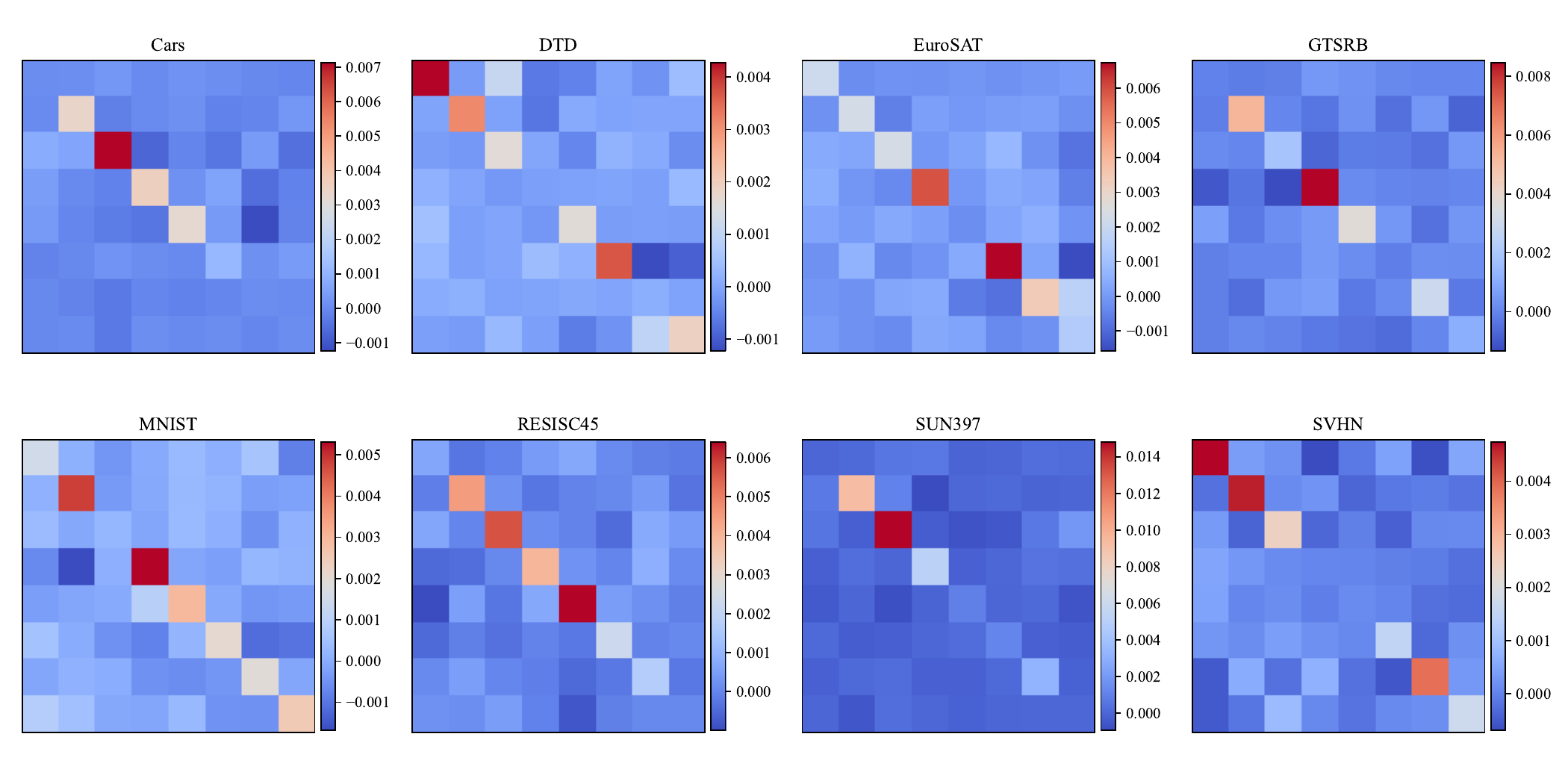}
        \caption{encoder.layers.8.self\_attn.q\_proj.weight}
        \label{fig:g}
    \end{subfigure}

    \begin{subfigure}[t]{0.75\textwidth}
        \centering
        \includegraphics[width=\textwidth]{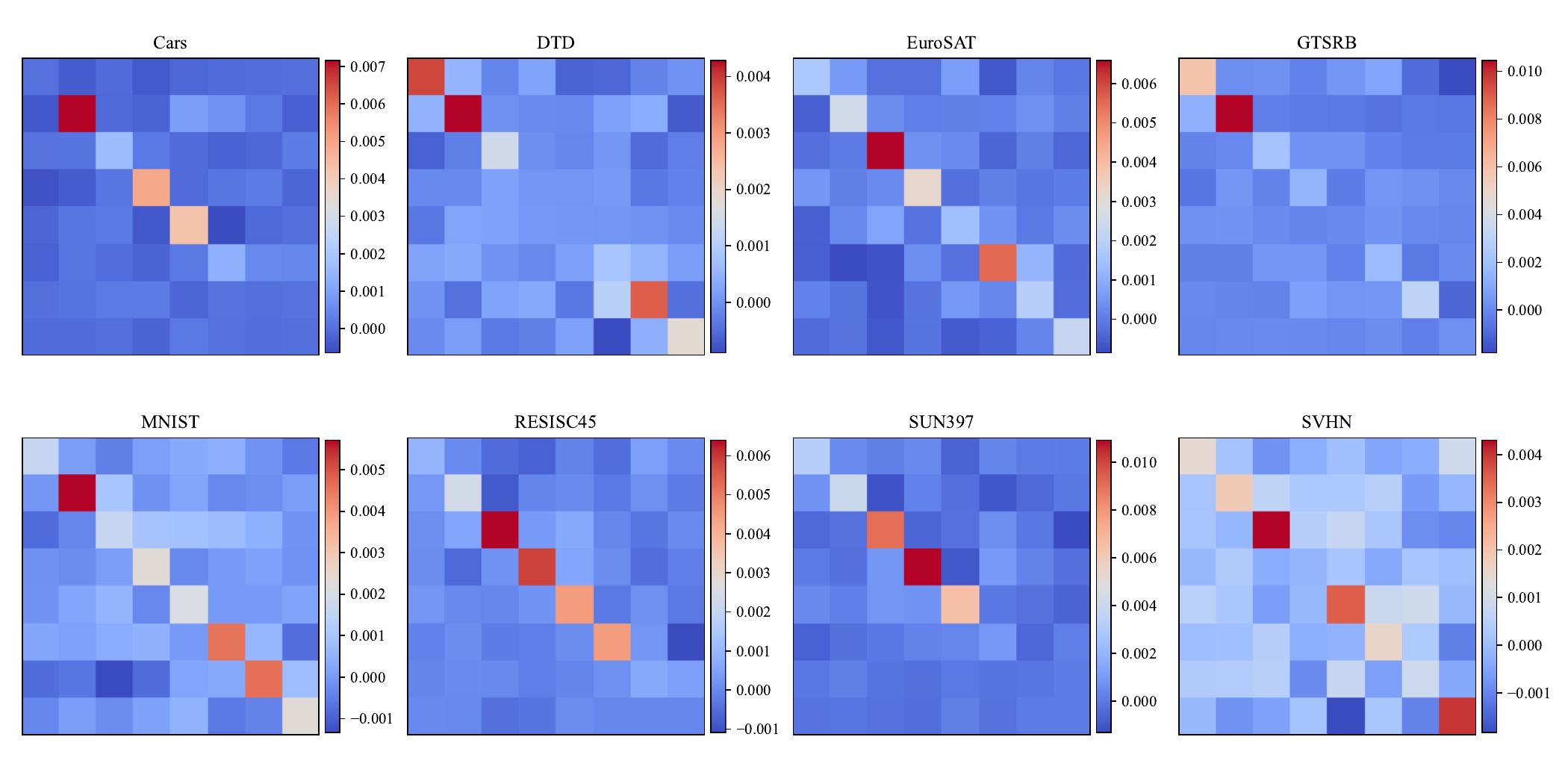}
        \caption{encoder.layers.8.self\_attn.k\_proj.weight}
        \label{fig:h}
    \end{subfigure}

    \begin{subfigure}[t]{0.75\textwidth}
        \centering
        \includegraphics[width=\textwidth]{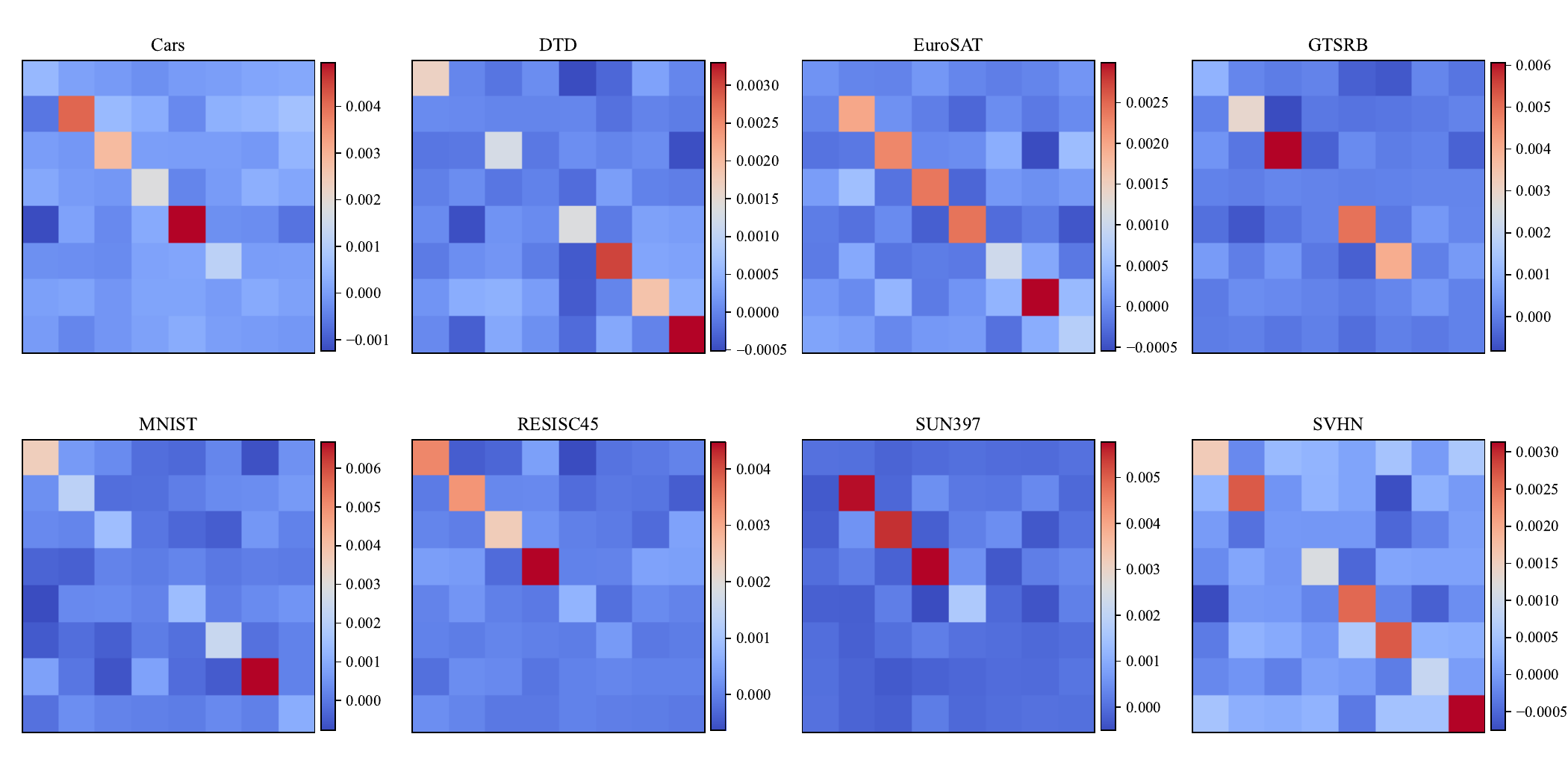}
        \caption{encoder.layers.8.self\_attn.v\_proj.weight}
        \label{fig:i}
    \end{subfigure}
    

    \caption{Visualization of task representations in the shared cover space. We apply block-diagonal masks to eliminate cross-task directional interference (off-diagonal blocks) when projecting the aggregated task representations back to parameter space, where the mask size implicitly balances directional preservation and task fidelity.}
    \label{fig:mi_cover_space_part3}
\end{figure*}

\begin{figure*}[h]
    \centering
    \includegraphics[width=0.98\textwidth]{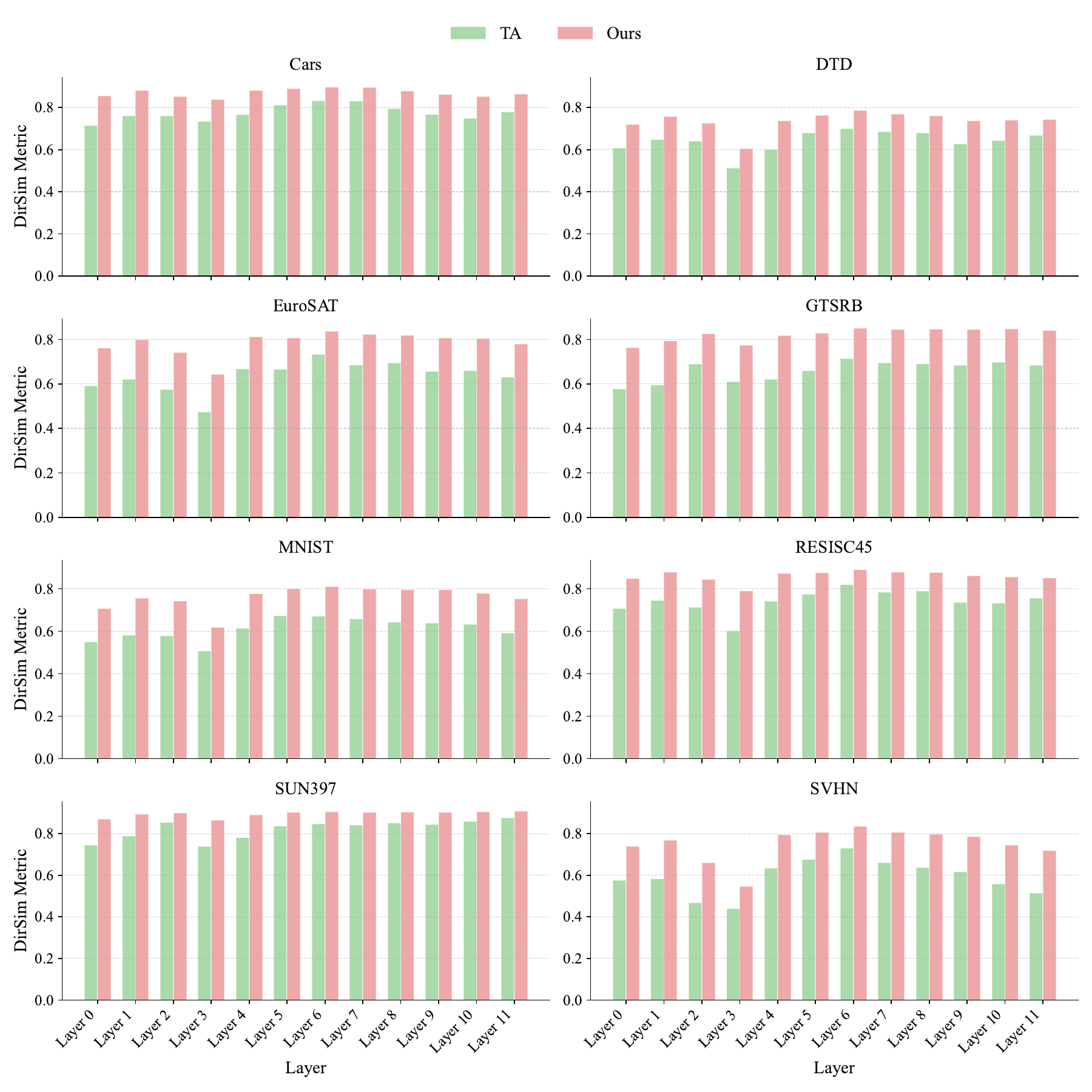}
    \caption{
    Layer-wise visualization of projected $\mathrm{DirSim}$ with each individual task. The results are based on \model{ViT-B-32} 8-task benchmark in LoRA setting. DC-Merge consistently exhibits higher projected $\mathrm{DirSim}$ with task vectors across all layers.
    }
    \label{fig:per_layer_per_task_dirsim}
\end{figure*}

\end{document}